%% file: wip/PREPRINT_TEMP_WILL_BE_DELETED_JUST_TO_MODIFY_HEADER.tex
\documentclass{article} 
\usepackage{iclr2027_conference,times}

\input{math_commands.tex}

\usepackage{mathtools}

\title{Length-Independent State Tracking \\ Under a Parallel Scan}

\author{Julien Brandoit, Arthur Fyon, Thomas Braipson, Tom Clara, Florent De Geeter, \\
\textbf{Pierre Sacr\'e, Damien Ernst \& Guillaume Drion} \\
Montefiore Institute, University of Li\`ege \\
Li\`ege, Belgium \\
\texttt{\{jbrandoit,gdrion\}@uliege.be}
}

\iclrfinalcopy

\makeatletter
\def\@oddhead{%
  \vbox{%
    \hrule height 0.5pt 
    \vskip 4pt
    \hbox to \textwidth{\small\hfill Preprint. Under review.\hfill \thepage}%
    \vskip 4pt
    \hrule height 0.5pt 
  }%
}
\def\@evenhead{%
  \vbox{%
    \hrule height 0.5pt 
    \vskip 4pt
    \hbox to \textwidth{\small\thepage\hfill Preprint. Under review.\hfill}%
    \vskip 4pt
    \hrule height 0.5pt 
  }%
}
\makeatother

\usepackage[T1]{fontenc} 
\usepackage{amsmath,amssymb,amsthm}
\usepackage{booktabs}
\usepackage{longtable} 
\usepackage{colortbl} 
\usepackage{float}
\usepackage{array}
\newcolumntype{L}[1]{>{\raggedright\arraybackslash}p{#1}}
\usepackage{graphicx}
\usepackage{xcolor}
\usepackage[hypertexnames=false]{hyperref} 
\usepackage{thm-restate} 
\usepackage{url}
\usepackage{mathrsfs}
\usepackage{xspace}
\usepackage{enumitem}
\setlist{topsep=-\parskip+2pt,partopsep=0pt,parsep=0pt,itemsep=0pt,leftmargin=*}

\newlist{airy}{enumerate}{1}
\setlist[airy]{label=(\roman*),topsep=0pt,partopsep=0pt,parsep=0pt,itemsep=5pt,leftmargin=*}

\newlist{conditions}{enumerate}{1}
\setlist[conditions]{leftmargin=*,align=left,labelsep=0.5em,topsep=-\parskip+2pt,partopsep=0pt,parsep=0pt,itemsep=0pt}

\newtheorem{theorem}{Theorem}[section]
\newtheorem{proposition}[theorem]{Proposition}
\newtheorem{lemma}[theorem]{Lemma}
\newtheorem{corollary}[theorem]{Corollary}
\newtheorem{definition}[theorem]{Definition}
\newtheorem{example}[theorem]{Example}
\newtheorem{remark}[theorem]{Remark}

\renewcommand{\Tr}{\mathsf{T}} 

\newcommand{\Sig}{\Sigma}
\newcommand{\op}{\mathrm{op}}

\newcommand{\diam}{\mathrm{diam}}
\newcommand{\dist}{\mathrm{dist}}
\newcommand{\Ball}{\bar B_\eta}
\newcommand{\enc}{\iota}
\newcommand{\dec}{\pi}
\newcommand{\rnd}{\mathrm{rd}}
\newcommand{\Mtar}{M_{\delta}}   
\newcommand{\Mrnn}{M_{\Phi}}       
\usepackage{xparse}

\NewDocumentCommand{\todo}{g}{%
  \textcolor{red}{\textbf{[TODO\IfValueT{#1}{: #1}]}}%
}

\NewDocumentCommand{\cellname}{o}{%
  \IfValueTF{#1}
    {\emph{Neural Finite-State Machine} (NFSM)}
    {NFSM\xspace}%
}
\NewDocumentCommand{\cellSname}{o}{%
  \IfValueTF{#1}
    {\emph{Neural Finite-State Machines} (NFSMs)}
    {NFSMs\xspace}%
}

\newcommand{\condstyle}[1]{#1}
\makeatletter
\newcommand{\condlabel}[1]{\def\@currentlabel{(#1)}\label{cond:#1}}
\makeatother
\newcommand{\condref}[1]{\condstyle{\ref{cond:#1}}}
\newcommand{\Tone}{\condref{T1}\xspace}
\newcommand{\Ttwo}{\condref{T2}\xspace}
\newcommand{\Rone}{\condref{R1}\xspace}
\newcommand{\Rtwo}{\condref{R2}\xspace}
\newcommand{\Rthree}{\condref{R3}\xspace}
\newcommand{\oh}{\mathbf{1}}

\usepackage{tikz}
\usetikzlibrary{arrows.meta,calc,positioning,shapes.geometric,shapes.misc}

\usepackage{pgfplots}
\usepackage{pgfplotstable}
\usepgfplotslibrary{fillbetween}
\pgfplotsset{compat=1.18}
\definecolor{cellA}{RGB}{206,228,247}
\definecolor{cellB}{RGB}{206,236,214}
\definecolor{cellC}{RGB}{252,229,199}
\definecolor{cellD}{RGB}{245,211,229}
\definecolor{cellE}{RGB}{228,225,243}
\definecolor{codecol}{RGB}{160,35,35}
\definecolor{deccol}{RGB}{20,90,150}
\definecolor{failcol}{RGB}{225,30,30}

\definecolor{cfp64}{RGB}{31,119,180}
\definecolor{cfp32}{RGB}{255,127,14}
\definecolor{cbf16}{RGB}{44,160,44}
\definecolor{cfp16}{RGB}{214,39,40}
\definecolor{cfpa}{RGB}{148,103,189}
\definecolor{cfpb}{RGB}{140,86,75}

\begin{document}

\maketitle

\begin{abstract}
Learning robust and scalable finite-state tracking is fundamental to sequence processing. While linear recurrent neural networks (RNNs), linear attention, and state space models enable scalable parallel training through affine recurrences, their theoretical expressivity guarantees assume idealized arithmetic and do not extend to finite precision, where the parallel scan that makes them fast is itself a source of perturbation. We formalize finite-state tracking at finite precision and characterize \emph{length independence}: tracking that stays correct at every sequence length, at a precision cost that does not grow with the length. We show that length-independent state tracking requires two competing dynamics within a single map: \emph{contraction} to suppress numerical perturbations and \emph{separation} to keep distinct states apart. We prove that affine recurrences, which offer a single rate at each step to serve both roles, realize at most \emph{definite} automata at finite precision. Instead of treating scan compatibility as a restriction on the update map, we reinterpret it as a computational budget and introduce the \emph{Neural Finite-State Machine} (NFSM): a nonaffine, scan-compatible RNN built for length-independent finite-state tracking. On synthetic benchmarks spanning abelian and nonabelian groups, noninvertible monoids, and textual state-tracking tasks, affine baselines fail on every nondefinite task, most of them within a few hundred steps. A single NFSM layer instead learns the exact transition tables of every algebraic task, which certifies correctness beyond the tested lengths, and a stack of NFSMs keeps perfect accuracy on the textual tasks at every tested length.
\end{abstract}

\section{Introduction}
\label{sec:intro}
\emph{State tracking} is the maintenance of an internal memory variable that is sequentially updated as inputs arrive, capturing the cumulative effect of the past inputs. It enables models to resolve pronouns across texts \citep{Kim2023}, trace variable states during code execution \citep{Zaremba2015a}, and maintain beliefs about partially observable environments \citep{Kaelbling1998}. Synthetic tasks such as flip-flop retrieval, permutation composition, and modular counting isolate this capability directly \citep{Liu2022,Liu2023,Deletang2022}. Learning state tracking from data is a central challenge in sequence processing.

Recurrent neural networks (RNNs) naturally possess the structure required for state tracking \citep{Casey1996,Omlin1996}, but traditional training requires unrolling through time, leading to $O(L)$ sequential depth for sequence length $L$. To address this bottleneck, recent architectures leverage affine recurrences, which can be parallelized via scan algorithms to achieve $O(\log L)$ depth \citep{Blelloch,Martin2018}. This approach, which underlies modern State Space Models (SSMs) \citep{Smith2022,Gu2024}, gated linear RNNs \citep{Martin2018,Qin2023,Feng2024}, and linear attention \citep{Katharopoulos2020,Yang2024}, introduces a fundamental tradeoff between computational efficiency and the expressivity of the state dynamics. An extensive body of work has analyzed this tradeoff in exact or log-precision arithmetic \citep{Merrill2024,Sarrof2024,Grazzi2025,Siems2025a}, yielding architectures that theoretically attain full finite-state tracking capabilities \citep{Terzic2025}.

However, these expressivity guarantees rely on idealized arithmetic and often do not hold for trained models. Practical models operate at finite precision, learn parameters that only approximate optimal target solutions, and are trained on finite context windows. In practice, state tracking succeeds up to the training horizon but collapses toward chance not far beyond it \citep{Shakerinava2025,Karuvally2025,Grazzi2025,Terzic2025}. This collapse shows that theoretical expressivity is necessary but not sufficient: it does not survive the gap between idealized constructions and finite-precision, gradient-trained models. 
Most of this gap is a learning problem, but not all of it (Appendix~\ref{app:learning}). We address the part that no amount of training removes, finite precision, and focus on \emph{finite-state} tracking, where the state takes values in a finite set and evolves as in a finite automaton. We ask three questions: (i)~What conditions must an RNN satisfy to achieve state tracking at finite precision over arbitrary sequence 
lengths? (ii)~Can affine recurrences satisfy these conditions? (iii)~If not, can we design a scan-compatible RNN that guarantees robust state tracking for arbitrarily long sequences at finite precision?

We call a realization \emph{length-independent} when it tracks the state correctly at every sequence length at a fixed precision. 
Unlike empirical extrapolation, length independence requires an RNN to execute exact target transitions at every step \emph{at a precision cost that does not grow with $L$}. Correctness for any sequence length then follows by composition rather than empirical evaluation. We model finite-precision arithmetic as a perturbation of the hidden state bounded by $\eta>0$ at every step. 
The bound $\eta$ is set by the arithmetic resolution and the conditioning of the recurrence, and it includes the perturbation injected by the scan itself. Our analysis answers questions (i)--(iii) in turn.
\begin{airy}
\item Length-independent state tracking at finite precision reduces to three conditions on a single step, which together demand two competing dynamics within one map: 
\emph{contraction} to suppress numerical errors, and \emph{separation} to maintain distinguishable states (Section~\ref{sec:realization}).
\item Affine recurrences offer a single rate, which over repeated steps cannot serve both roles; consequently, at any precision $\eta > 0$, they realize at most \emph{definite} automata, whose state depends only on a bounded suffix of the input (Section~\ref{sec:tworates}).
\item Scan compatibility is a budget on the representation of the recurrence update maps, not a restriction to affine maps (Section~\ref{sec:budget}). Within this budget, the \cellname[full], a nonaffine recurrence, realizes every finite automaton with length independence and runs under a parallel scan (Section~\ref{sec:architecture}).
\end{airy}

To validate our theoretical results, we evaluate \cellname (Section~\ref{sec:experiments}) on synthetic state-tracking tasks spanning abelian and nonabelian groups, noninvertible monoids, and textual \emph{Tracking Shuffled Objects} tasks \citep{Srivastava2023}. 
Affine baselines fail beyond a finite length on every nondefinite task, whereas a single \cellname layer keeps perfect sequence accuracy at every tested length on the algebraic tasks, and a stack of two layers keeps perfect question accuracy on the textual tasks. On the algebraic tasks, the transition tables extracted from the trained models match the target exactly, which certifies correctness beyond the tested lengths instead of extrapolating it.

Section~\ref{sec:discussion} draws out the implications. Proofs and additional mathematical derivations are available in Appendix~\ref{app:proofs}, and Appendix~\ref{app:related} discusses related work. All source code is available at \url{https://julienbrandoit.github.io/length-independent-state-tracking/}, and readers are invited to follow along with our interactive illustration as they progress through the paper.

\section{Formalization}
\label{sec:formalization}

This section formalizes state tracking and connects finite-state automata to RNNs. We describe the transition structure of each system, show how an RNN can realize the dynamics of an automaton, and account for finite-precision perturbations.

\textbf{State tracking and automata.} We formalize state tracking using a semiautomaton (automaton for short), defined as a 3-tuple $(Q,\Sig,\delta)$, where $Q$ is a set of states, $\Sig$ is an input alphabet, and $\delta:Q\times\Sig\to Q$ is the transition function specifying how inputs change the state. In this work, we focus on state tracking with finite $Q$ of size $n$. A single input $x\in\Sig$, called a symbol, acts on the state space through $\delta_x\coloneq\delta(\cdot,x):Q\to Q$. We call a finite sequence of symbols a word, and denote the set of all words by $\Sigma^*$. For a word $w=x_1\cdots x_{|w|}\in\Sig^*$, single-symbol transitions chain to form the map $\delta_w:Q\to Q$, defined by $\delta_w\coloneq\delta_{x_{|w|}}\circ\cdots\circ\delta_{x_1}$, with $\delta_{\varepsilon}\coloneq\mathrm{id}_Q$ for the empty word $\varepsilon$. Thus, $\delta_w(q)$ is the state reached from $q$ after reading the word $w$. The set of all such maps, $\Mtar=\{\delta_w \mid w\in\Sig^*\}$, is the \emph{transition monoid} of the automaton, a submonoid of the \emph{full transformation monoid} $T_n\coloneq\{f:Q\to Q\}$, which contains all $n^n$ possible state mappings. Since every $n$-state task induces a submonoid of $T_n$, an architecture that can implement every element of $T_n$ as a transition covers all $n$-state tasks; the \cellname of Section~\ref{sec:architecture} is such an architecture (Proposition~\ref{prop:maximal}).

\textbf{Recurrent neural network.} An RNN carries a \emph{hidden state} $h_t$ in a space $\mathcal{H} \coloneq \R^d$, where $d$ is the hidden state dimension, and updates it through an input-driven update function $\Phi:\mathcal{H}\times\Sig\to\mathcal{H}$:
\begin{equation}
    h_t = \Phi(h_{t-1},x_t), \qquad t=1,\dots,L,
  \label{eq:rnn}
\end{equation}
from an initial hidden state $h_0$; the hidden state $h_t$ is the memory the RNN keeps of past inputs. Each input symbol $x$ applies a transformation $\Phi_x\coloneq\Phi(\cdot,x):\mathcal{H}\to\mathcal{H}$ to the hidden state space, and, as for $\delta_w$ above, processing a word $w=x_1\cdots x_{|w|}$ chains these into a composed map $\Phi_w\coloneq\Phi_{x_{|w|}}\circ\cdots\circ\Phi_{x_1}:\mathcal{H}\to\mathcal{H}$, with $\Phi_\varepsilon\coloneq\mathrm{id}_\mathcal{H}$. These maps form the \emph{RNN transition monoid}, $\Mrnn=\{\Phi_w\mid w\in\Sig^*\}$.
Throughout, \emph{state} refers to the automaton and \emph{hidden state} to the RNN.

The RNN architectures that have survived the scalability demands of modern deployment, namely SSMs, gated linear RNNs, and linear attention, are affine recurrences
\begin{equation}
h_t = A_{x_t}h_{t-1}+b_{x_t}, \qquad t=1,\dots,L,
  \label{eq:affinernn}
\end{equation}
where $A_{x}\in\R^{d\times d}$ and $b_{x}\in\R^{d}$ are input-driven, with arbitrary dependence on the symbol; we call such a recurrence \emph{affine in the hidden state}.
Their appeal is that affine maps compose in closed form, which lets a parallel scan evaluate them.

\textbf{Automaton realization.} The automaton evolves in a discrete state space and the RNN in a continuous one, so realizing an automaton by an RNN requires an \emph{encoding} $\enc:Q\to\mathcal{H}$ of automaton states as representations and a partial \emph{readout} $\dec:\mathcal{H}\rightharpoonup Q$ decoding them back, such that encoding a state, running the word through the RNN and reading out gives the same state as running the word through the automaton:
\begin{equation}
\dec\bigl(\Phi_w(\enc(q))\bigr) = \delta_w(q) \qquad\text{for every } q\in Q, w\in\Sig^*.
\label{eq:realize}
\end{equation}
Equivalently, $\Mrnn$ must contain, for every $\delta_w\in\Mtar$, a map $\Phi_w$ reproducing its action on $Q$ through $(\enc,\dec)$, so the class of automata an RNN can express is fixed by the achievable algebraic structure of $\Mrnn$. 
We say that an RNN \emph{realizes} the automaton $(Q,\Sig,\delta)$ if such a triple $(\enc,\Phi,\dec)$ exists. In this paper, we further ask which state-tracking properties survive computation at finite precision.

\textbf{Automaton realization at finite precision.} At finite precision, each RNN step lands near its intended image rather than exactly on it: $h_t = \tilde\Phi_{x_t}(h_{t-1})$, where $\tilde\Phi_{x_t}(h_{t-1})$ is the approximation of $\Phi_{x_t}(h_{t-1})$ that accounts for numerical perturbations of $x_t$ and of $\Phi$ itself. We write this as $\tilde\Phi_{x_t}(h_{t-1})=\Phi_{x_t}(h_{t-1})+e_t$, where $e_t$ is a per-step error (Appendix~\ref{app:finite_precision}, Lemma~\ref{lem:machine}).

\begin{definition}[$\eta$-trajectory and $\eta$-reachable set]
\label{def:noise}
Let $\eta>0$ denote the maximum per-step perturbation level, and let $\Ball\coloneq\{e \mid \|e\|\le\eta\}$ be the corresponding perturbation ball for a fixed norm $\|\cdot\|$ on $\mathcal{H}$. We call $(h_t)_{t=0}^{|w|}$ an $\eta$-trajectory of a word $w=x_1\cdots x_{|w|}$ from $h_0$ if $h_t-\Phi_{x_t}(h_{t-1})\in\Ball$ for every $t$, and define the $\eta$-reachable set of the recurrence as $U^\star\coloneq\{h_{|w|} \mid q \in Q, w \in \Sig^*, (h_t)_{t=0}^{|w|} \text{ is an } \eta\text{-trajectory of } w \text{ from } \enc(q)\}$.
\end{definition}

An $\eta$-trajectory formalizes a realistic, perturbed execution path of an RNN processing an input sequence, where no single update deviates by more than $\eta$ from its exact representation. The $\eta$-reachable set $U^\star$ is the set of all hidden states reachable from an encoded state, over words of any length, under bounded perturbations. For $U^\star$ to admit a representation using a finite number of bits regardless of input length, it must be bounded (Proposition~\ref{prop:precision}). Automaton realization at finite precision thus requires correct decoding along every $\eta$-trajectory and a bounded $\eta$-reachable set.

\begin{definition}[Length-independent realization at finite precision]
\label{def:realization}
A triple $(\enc,\Phi,\dec)$ realizes the target automaton $(Q,\Sig,\delta)$ with length independence at finite precision, modeled by a per-step perturbation level $\eta>0$, if, for the $\eta$-reachable set $U^\star$:
\begin{conditions}[label={\normalfont\condstyle{(T\arabic*)}},ref={(T\arabic*)}]        
    \item\condlabel{T1} \emph{Correct at every length.} $\dec(h_{|w|})=\delta_w(q)$ for every $q\in Q$, $w\in\Sig^*$, and every $\eta$-trajectory $(h_t)_{t=0}^{|w|}$ of $w$ from $\enc(q)$.
    \item\condlabel{T2} \emph{Fixed precision at every length.} $U^\star$ is bounded.
\end{conditions}
\end{definition}

\section{Length independence requires three conditions on one step}
\label{sec:realization}
\begin{figure}
    \centering
    \resizebox{1\textwidth}{!}{%
        \input{assets/figure_1}%
    }
\caption{\textbf{Length independence requires executing the transition and restoring the state.} The same automaton (left), realized twice and unrolled over time; the top strip is its run on $w$. The vertical axis sketches $\mathcal{H}$, with the bounded region $U$ split into the cells $U_q=\{u\in U:\dec(u)=q\}$, drawn as bands, and the code points $\enc(q)$ as red dots. \textbf{Top}: \textcircled{\scriptsize 1} the update moves the hidden state into the next cell, and computing it spreads the state over $\Ball$; \textcircled{\scriptsize 2} restoration pulls that ball back onto the code point, so $\dec(h_t)=q_t$ at every length. \textbf{Bottom}: nothing pulls the hidden state back, so the ball grows by $\eta$ per step and at $t=5$ straddles $U_{q_3}$, $U_{q_4}$ and $U_{q_5}$; the single symbol $a$ then drives three runs at once (dashed) and $\dec(h_t)$ is not unique anymore.}
    \label{fig:realization}
\end{figure}

In this section, we first translate the trajectory-level conditions of Definition~\ref{def:realization} into equivalent step-level conditions on $\Phi$. We then show what these step-level conditions impose on the geometry of the RNN dynamics.

First, \Tone with the empty word requires $\dec\circ\enc=\mathrm{id}_Q$ (Lemma~\ref{lem:code}). Second, \Tone requires the perturbed RNN update $\tilde\Phi_x$ to map hidden states representing $q$ to hidden states representing the correct automaton successor $\delta_x(q)$ at every step on the $\eta$-reachable set $U^{\star}$ (Lemma~\ref{lem:transition}). Finally, \Ttwo, the boundedness of $U^{\star}$, keeps the representation within a fixed region of the hidden state space, so that a fixed, finite number of bits suffices at every sequence length (Proposition~\ref{prop:precision}). We formalize these conditions in Theorem~\ref{thm:equiv}, where $\oplus$ denotes the Minkowski sum, $A\oplus B = \{a+b \mid a\in A, b\in B\}$, and maps act setwise, i.e.,~$\Phi_x(\mathcal{S}) = \{\Phi_x(s) \mid s \in \mathcal{S}\}$.

\begin{restatable}[Conditions for length-independent realization at finite precision]{theorem}{thmequiv}
\label{thm:equiv}
A triple $(\enc,\Phi,\dec)$ realizes the target automaton $(Q,\Sig,\delta)$ with length independence at $\eta>0$ \emph{iff} there is a region $U\supseteq\enc(Q)$ on which $\dec$ is defined and that satisfies
\begin{conditions}[label={\normalfont\condstyle{(R\arabic*)}},ref={(R\arabic*)}]        
    \item\condlabel{R1} \emph{Decodable code.} $\dec\circ\enc=\mathrm{id}_Q$.
    \item\condlabel{R2} \emph{Restoration.} $U$ is bounded and $\Phi_x(U)\oplus\Ball\subseteq U$ for every $x\in\Sig$.
    \item\condlabel{R3} \emph{Correct transition.} $\dec(\Phi_x(u)+e)=\delta_x(\dec(u))$ for every $x\in\Sig$, $u\in U$ and $\|e\|\le\eta$.
\end{conditions}
Whenever such a region $U$ exists, the $\eta$-reachable set $U^{\star}$ itself satisfies \Rone--\Rthree, and $U^{\star}\subseteq U$.
\end{restatable}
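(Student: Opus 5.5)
We prove the two directions separately and obtain the final claim as a by-product of the converse direction, by showing that $U^\star$ itself witnesses \Rone--\Rthree.

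\textbf{Sufficiency.} Assume a region $U\supseteq\enc(Q)$ satisfies \Rone--\Rthree. The first step is to show by induction on $t$ that every $\eta$-trajectory $(h_t)$ of any word from any $\enc(q)$ stays in $U$ and decodes correctly, i.e.\ $h_t\in U$ and $\dec(h_t)=\delta_{x_1\cdots x_t}(q)$. The base case is $h_0=\enc(q)\in U$, and by \Rone we have $\dec(h_0)=q$. For the inductive step, write $h_t=\Phi_{x_t}(h_{t-1})+e_t$ with $e_t\in\Ball$. Then \Rtwo gives $h_t\in\Phi_{x_t}(U)\oplus\Ball\subseteq U$, and \Rthree gives $\dec(h_t)=\delta_{x_t}(\dec(h_{t-1}))=\delta_{x_t}(\delta_{x_1\cdots x_{t-1}}(q))$, which is the composed transition. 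This establishes \Tone. It also shows that $U^\star\subseteq U$, and since $U$ is bounded, \Ttwo follows. The inclusion $U^\star\subseteq U$ is exactly the last sentence of the theorem, once we know $U^\star$ satisfies \Rone--\Rthree.

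\textbf{Necessity.} Assume \Tone and \Ttwo, and take $U\coloneq U^\star$. Each $\enc(q)$ lies in $U^\star$ as the endpoint of the trajectory of the empty word, so $U^\star\supseteq\enc(Q)$. By \Tone, $\dec$ is defined at every point of $U^\star$, since each such point is the endpoint of some $\eta$-trajectory. We then check the three conditions in turn.
\begin{itemize}
\item \Rone is \Tone applied to the empty word (Lemma~\ref{lem:code}).
\item For \Rtwo, boundedness is \Ttwo. For invariance, take $u\in U^\star$ as the endpoint of an $\eta$-trajectory of $w$ from $\enc(q)$. For any symbol $x$ and any $\|e\|\le\eta$, appending the point $\Phi_x(u)+e$ gives an $\eta$-trajectory of $wx$, so $\Phi_x(u)+e\in U^\star$.
\item \Rthree follows from the same extension argument. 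Applying \Tone to $wx$ gives $\dec(\Phi_x(u)+e)=\delta_{wx}(q)=\delta_x(\delta_w(q))$, and $\delta_w(q)=\dec(u)$ by \Tone applied to $w$. This is essentially Lemma~\ref{lem:transition}.
\end{itemize}

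\textbf{Main obstacle.} The delicate step is the well-definedness in \Rthree. A point $u\in U^\star$ may be reached by several pairs $(q,w)$, so the argument must not depend on which pair is chosen. It does not: \Tone pins $\dec(u)=\delta_w(q)$ for every representation of $u$, so all representations give the same value of $\dec(u)$, and the extension argument applies to any one of them.

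The final claim then combines the two directions. If some region $U$ exists, the sufficiency direction gives \Tone and \Ttwo. The necessity direction then shows that $U^\star$ satisfies \Rone--\Rthree, and the induction in the sufficiency direction gives $U^\star\subseteq U$.
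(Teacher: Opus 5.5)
Your proof is correct and matches the paper's. Sufficiency is an induction along the word, using (R2) to stay in $U$ and (R3) to advance the decoded state; this also yields $U^\star\subseteq U$ and hence (T2). Necessity takes $U=U^\star$ and extends an $\eta$-trajectory of $w$ by one perturbed step to obtain (R1)--(R3), exactly as in Lemmas~\ref{lem:code}, \ref{lem:restorenec} and~\ref{lem:transition}; the paper's only difference, running the induction from an arbitrary $u\in U$ rather than from code points, is a strengthening the statement does not need.
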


\Rone to \Rthree reproduce, from a coding argument, von Neumann's division of labor: a chain of unreliable stages is reliable at arbitrary depth only if each carries an \emph{executive organ} performing the computation and a \emph{restoring organ} that ``erase[s] the degradation caused by the executive organs'' \citep{vonNeumann1956}. \Rone is the code, \Rtwo the restoring organ, and \Rthree the executive organ (Appendix~\ref{app:provenance}). What is specific to recurrence is that one map must act as both the restoring and the executive organ (Figure~\ref{fig:realization}).

We call the encoding $\enc(q)$ of the states \emph{code points}, and their set $\enc(Q)$ the \emph{code}. The readout partitions $U$ into disjoint cells $U_q\coloneq\{u\in U \mid \dec(u)=q\}$: \Rone places each code point in its own cell, \Rtwo keeps the partition $U$ bounded, and \Rthree demands that, under an input $x$, an entire cell $U_q$ land inside $U_{\delta_x(q)}$ with a margin of at least $\eta$, so that one-step perturbations cannot push the state out of the target cell.

\begin{restatable}[Cell partition]{theorem}{thmcells}
\label{thm:cells}
A triple $(\enc,\Phi,\dec)$ realizes the target automaton $(Q,\Sig,\delta)$ with length independence at $\eta > 0$ \emph{iff} there exists a bounded region $U \supseteq \enc(Q)$ partitioned into cells $U_q \coloneq \{u \in U \mid \dec(u) = q\}$ such that $\enc(q) \in U_q$ for all $q \in Q$, and
\begin{equation}
\Phi_x(U_q)\oplus\Ball\subseteq U_{\delta_x(q)} \qquad \text{for every } q \in Q \text{ and } x \in \Sig.
  \label{eq:cellincl}
\end{equation}
\end{restatable}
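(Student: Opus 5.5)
We will derive Theorem~\ref{thm:cells} from Theorem~\ref{thm:equiv} by showing that, for a bounded region $U\supseteq\enc(Q)$ on which $\dec$ is defined (a total map $U\to Q$), the conjunction of \Rone--\Rthree is equivalent to the cell conditions: $\enc(q)\in U_q$ for all $q$, together with the inclusion \eqref{eq:cellincl}. Both characterizations then describe the same property of $(\enc,\Phi,\dec)$. The cells $U_q=\dec^{-1}(q)\cap U$ partition $U$ because $\dec$ is a single-valued function defined on all of $U$.

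For the forward direction, take the region $U$ provided by Theorem~\ref{thm:equiv}; it is bounded by \Rtwo.
\begin{itemize}
\item \Rone states exactly that $\dec(\enc(q))=q$, that is, $\enc(q)\in U_q$.
\item For the inclusion, fix $q$, $x$, $u\in U_q$ and $\|e\|\le\eta$. By \Rtwo, the point $\Phi_x(u)+e$ lies in $U$, so it is decodable.
\item By \Rthree, $\dec(\Phi_x(u)+e)=\delta_x(\dec(u))=\delta_x(q)$. Hence $\Phi_x(u)+e\in U_{\delta_x(q)}$, which gives \eqref{eq:cellincl}.
\end{itemize}

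For the converse, suppose $U$ is bounded, partitioned into the cells $U_q$, with $\enc(q)\in U_q$ and \eqref{eq:cellincl} holding.
\begin{itemize}
\item \Rone follows from $\enc(q)\in U_q$.
\item For \Rtwo, every $u\in U$ lies in some cell $U_q$. Then $\Phi_x(u)+e\in U_{\delta_x(q)}\subseteq U$, so $\Phi_x(U)\oplus\Ball\subseteq\bigcup_q U_{\delta_x(q)}\subseteq U$. Boundedness of $U$ is given.
\item For \Rthree, with $u\in U_q$ we have $\dec(u)=q$. The inclusion places $\Phi_x(u)+e$ in $U_{\delta_x(q)}$, so $\dec(\Phi_x(u)+e)=\delta_x(q)=\delta_x(\dec(u))$.
\end{itemize}
Applying Theorem~\ref{thm:equiv} then yields length-independent realization.

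We do not expect a genuine obstacle; the argument is largely bookkeeping. The one point needing care is the role of $\dec$ as a partial map. The cells must cover $U$, which requires $\dec$ to be defined on all of $U$, and this is part of the hypothesis of Theorem~\ref{thm:equiv}. In the converse we must state explicitly that "partitioned into cells $U_q$" means $\dec$ is defined on $U$ and the $U_q$ cover it. Otherwise \Rtwo could fail on points of $U$ outside every cell.
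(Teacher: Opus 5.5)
Your proposal is correct and follows the paper's proof essentially step for step. Both directions go through Theorem~\ref{thm:equiv}: \Rone matches $\enc(q)\in U_q$, \Rtwo and \Rthree together give the inclusion \eqref{eq:cellincl} in the forward direction, and in the converse \Rtwo comes from a union over cells. Your remark that the cells cover $U$ because $\dec$ is a function defined on all of $U$ is exactly the point the paper uses to assert that $\{U_q\}$ partitions $U$.
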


Theorems~\ref{thm:equiv} and~\ref{thm:cells} constrain single steps. Iterated along a word $w\in\Sig^+$ (with $\Sig^+=\Sig^{*}\setminus\{\varepsilon\}$), \eqref{eq:cellincl} imposes two geometric requirements on every composite map $\Phi_w$, uniformly in $|w|$:

\textbf{(1) Local contraction:} The image $\Phi_w(U_q)$ must be $2\eta$ narrower than the cell $U_{\delta_w(q)}$ it lands in, so that a perturbation cannot push it out. When $w$ returns to its starting state, $\delta_w(q)=q$, the cell shrinks strictly into itself: numerical errors are damped rather than compounded.

\textbf{(2) Global separation:} At the same time, the images of cells $U_q$ and $U_{q'}$ with $\delta_w(q)\ne\delta_w(q')$ must stay bounded away from one another, however long $w$ is, preventing $\eta$-perturbations from causing cross-cell misclassifications.

Length-independent state tracking therefore requires \emph{dual-rate dynamics}: local contraction inside cells paired with global separation between them.

\begin{restatable}[Two rates in one map]{theorem}{thmtworates}
\label{thm:tworates}
Under \eqref{eq:cellincl}, every composite map $\Phi_w$ is at once \emph{locally contracting} and \emph{globally separating}: for every $q,q'\in Q$ and every $w\in\Sig^+$,
\begin{enumerate}[label=\textbf{(\roman*)}]
    \item \textbf{Local contraction.} $\diam\big(\Phi_w(U_q)\big)\le\diam\big(U_{\delta_w(q)}\big)-2\eta$; in particular, $\diam\big(\Phi_w(U_q)\big)\le\diam(U_q)-2\eta$ whenever $\delta_w(q)=q$.
    \item \textbf{Global separation.} $\dist\big(\Phi_w(U_q), \Phi_w(U_{q'})\big) \ge 2\eta \quad \text{whenever} \quad \delta_w(q)\ne\delta_w(q')$.
\end{enumerate}
\end{restatable}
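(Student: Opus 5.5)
The plan is to first lift the one-step inclusion \eqref{eq:cellincl} to arbitrary nonempty words. Then I will derive (i) and (ii) from two elementary facts about Minkowski sums with the ball $\Ball$ in a normed space.

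\textbf{Step 1 (composite inclusion).} I will show by induction on $|w|\ge 1$ that
\begin{equation*}
\Phi_w(U_q)\oplus\Ball\subseteq U_{\delta_w(q)}\qquad\text{for all } q\in Q,\ w\in\Sig^+ .
\end{equation*}
The base case $|w|=1$ is exactly \eqref{eq:cellincl}.

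For the inductive step, write $wx$ with $w\in\Sig^+$. Since $0\in\Ball$, the inductive hypothesis gives $\Phi_w(U_q)\subseteq U_{\delta_w(q)}$. Applying $\Phi_x$ setwise then yields
\begin{equation*}
\Phi_{wx}(U_q)=\Phi_x(\Phi_w(U_q))\subseteq\Phi_x(U_{\delta_w(q)}).
\end{equation*}
Adding $\Ball$ and invoking \eqref{eq:cellincl} at state $\delta_w(q)$ gives $\Phi_{wx}(U_q)\oplus\Ball\subseteq U_{\delta_x(\delta_w(q))}=U_{\delta_{wx}(q)}$.

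All sets involved are nonempty, since $\enc(q)\in U_q$. They are also bounded, since $U$ is bounded, so all diameters are finite.

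\textbf{Step 2 (local contraction).} Here I use the general fact that if $A\oplus\Ball\subseteq C$ with $A\neq\emptyset$, then $\diam(A)+2\eta\le\diam(C)$.

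To prove it, take $a,a'\in A$ with $a\ne a'$ and set $v=(a-a')/\|a-a'\|$. Then $a+\eta v$ and $a'-\eta v$ both lie in $C$. Their distance is $\|(a-a')+2\eta v\|=\|a-a'\|+2\eta$, because both terms point along $v$. Taking the supremum over pairs gives the claim. If $A$ is a single point, use $a\pm\eta v$ for any unit vector $v$, which still gives $\diam(C)\ge 2\eta$.

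Applying this fact with $A=\Phi_w(U_q)$ and $C=U_{\delta_w(q)}$ gives (i). The special case $\delta_w(q)=q$ is immediate.

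\textbf{Step 3 (global separation).} Suppose $\delta_w(q)\ne\delta_w(q')$. Then the cells $U_{\delta_w(q)}$ and $U_{\delta_w(q')}$ are disjoint, being distinct fibres of $\dec$.

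Assume some $a\in\Phi_w(U_q)$ and $a'\in\Phi_w(U_{q'})$ satisfy $\|a-a'\|<2\eta$. The midpoint $m=(a+a')/2$ is within $\eta$ of both points. By Step 1, $m$ then lies in both cells, which is a contradiction. Hence every such pair has $\|a-a'\|\ge 2\eta$, and taking the infimum gives (ii).

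I expect no step to be a real obstacle. The only points needing care are the induction in Step 1, which uses $0\in\Ball$ to drop the ball before applying $\Phi_x$, and making Step 2 norm-agnostic. The collinear construction $a+\eta v$, $a'-\eta v$ handles the latter for any fixed norm on $\mathcal{H}$, as Definition~\ref{def:noise} allows.
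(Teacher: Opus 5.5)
Your proposal is correct and follows the paper's proof essentially step for step. It uses the same induction that lifts the cell inclusion from letters to words, dropping the ball via $0\in\Ball$. It uses the same collinear perturbations along $a-a'$ to show that adding $\Ball$ raises the diameter by $2\eta$: you prove the needed inequality directly, where the paper states it as an equality and then uses monotonicity. It also uses the same midpoint argument against disjoint cells for separation. The only cosmetic difference is that you assume $\|a-a'\|<2\eta$ and conclude $\ge 2\eta$ pointwise, whereas the paper assumes $\le 2\eta$ and obtains the slightly stronger strict pointwise bound $>2\eta$; both yield the stated bound on $\dist$.
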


Figure~\ref{fig:realization} shows both properties at work, and what fails without restoration (Corollary~\ref{cor:failure}).

\section{Length independence requires a nonaffine recurrence}
\label{sec:tworates}

Length-independent realization imposes a dual constraint on $\Phi$ along every word: the image of each cell must be $2\eta$ narrower than the cell it lands in (Theorem~\ref{thm:tworates}(i)), while the images of cells heading to different states must stay $2\eta$ apart (Theorem~\ref{thm:tworates}(ii)), with margins that do not shrink as the word grows. This section shows that affine recurrences cannot satisfy this dual constraint along the words of a nondefinite target, and identifies what a recurrence requires to satisfy it.

\subsection{Affine recurrences supply a single rate}
\label{sec:affine}

In affine recurrences, the displacement between two trajectories depends only on their initial displacement:
\begin{equation}
\Phi_w(h)-\Phi_w(h') = A_w(h-h'), \quad A_w  \coloneq  A_{x_{|w|}}\cdots A_{x_1}.
  \label{eq:affine}
\end{equation}
An affine recurrence supplies a single rate $A_w$ over the whole hidden state space, hence the same operator determines the dynamics of both the signal and the perturbation at each time step (Definition~\ref{def:singlerate}, Proposition~\ref{prop:singlerate}). Along repeated words $w^k$ ($w$ written $k$ times, so $A_{w^k}=A_w^k$), the operator $A_w^k$ that must damp the perturbation also shrinks the gap between the codes of two states that the target keeps apart, until that gap falls below $2\eta$, which Theorem~\ref{thm:tworates}(ii) forbids. Selective (input-dependent) matrices modulate that single rate \emph{through time}, not across space, so they do not escape this.

Specifically, an affine recurrence loses either \Rtwo or \Rthree depending on the spectral radius $\rho(A_w)$: either $\rho(A_w)\ge1$ for some word, so perturbations accumulate or amplify, or $\rho(A_w)<1$ for every word, so older inputs are forgotten along with the perturbation. 
Forgetting old inputs is harmless only when the target never needs them, that is, when it is a \emph{definite} automaton, whose state depends only on the last $k$ inputs, for some fixed $k$ \citep{Perles1963}.
The first regime precludes length independence for every target, the second for every nondefinite one. Both still allow length generalization up to a horizon $L_{\max}$.

\begin{restatable}[Affine recurrences realize at most definite automata]{theorem}{thmaffine}
\label{thm:affine}
Let $\Phi$ be affine in the hidden state, $\Phi_x(h)=A_xh+b_x$ for $x \in \Sig$.
\begin{enumerate}[label=(\roman*)]
\item If $\rho(A_w) \ge 1$ for some word $w \in \Sig^+$, then \Rtwo \textbf{fails}: no nonempty bounded set $U \subset \mathcal{H}$ satisfies $\Phi_w(U) \oplus \Ball \subseteq U$ for this $w$. Consequently, $\Phi$ realizes no automaton with length independence at any $\eta > 0$.
\item If $\rho(A_w) < 1$ for all words $w \in \Sig^+$, then $\Phi$ realizes \textbf{at most a definite automaton} with length independence at $\eta > 0$. For any nondefinite target, no region satisfying \Rone and \Rtwo satisfies \Rthree.
\end{enumerate}
\end{restatable}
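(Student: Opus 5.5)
For part (i), I would fix the word $w$ with $\rho(A_w)\ge 1$ and suppose, for contradiction, that some nonempty bounded $U$ satisfies $\Phi_w(U)\oplus\Ball\subseteq U$. Iterating the inclusion gives $\Phi_{w^k}(U)\oplus(\Ball\oplus A_w\Ball\oplus\cdots\oplus A_w^{k-1}\Ball)\subseteq U$ for every $k$. Picking any $u\in U$, the set $U$ then contains $\Phi_{w^k}(u)+\sum_{j<k}A_w^j e_j$ for all choices $\|e_j\|\le\eta$.

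I would take $v$ to be an eigenvector of $A_w$ for an eigenvalue $\lambda$ with $|\lambda|\ge1$. If $\lambda$ is complex, I would work with the real invariant plane. I would then choose the perturbations $e_j$ aligned so that their contributions add coherently. The natural choice is $e_j=\eta\,\bar\lambda^{\,j}v/(|\lambda|^j\|v\|)$, or, in the real case, $e_j$ equal to $\pm\eta v/\|v\|$ with sign matched to $\lambda^j$.

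Let $\ell$ be a linear functional with $\ell\circ A_w=\lambda\ell$, i.e. a left eigenvector, with $\ell(v)\neq0$. Then $\ell\bigl(\sum_j A_w^j e_j\bigr)$ has modulus at least $c\,\eta\sum_{j<k}|\lambda|^j\ge c\,\eta k$, which is unbounded in $k$. This holds even after subtracting an independent choice, since comparing two perturbation sequences $e_j$ and $-e_j$ removes the unknown $\Phi_{w^k}(u)$. The two resulting points both lie in $U$, so $\diam(U)\to\infty$, a contradiction.

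The consequence follows because \Rtwo for each symbol $x$ implies the composite inclusion for $w$ by induction on the letters. Theorem~\ref{thm:equiv} then rules out every realization.

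For part (ii), I would first upgrade $\rho(A_w)<1$ for all $w$ to a uniform exponential bound $\|A_w\|\le C\gamma^{|w|}$ with $\gamma<1$. This step is the main obstacle, since pointwise spectral radius below one does not immediately give uniformity over a finite family of matrices. I would try to invoke the joint spectral radius, via Berger–Wang: the joint spectral radius equals $\limsup_k\sup_{|w|=k}\rho(A_w)^{1/k}$. Strictness then needs care, and I may need finiteness of $\Sig$ plus a compactness argument, or I would simply assume $\rho$ is uniformly bounded away from one, as the paper's single-rate Proposition~\ref{prop:singlerate} likely provides.

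Given the uniform bound, Theorem~\ref{thm:tworates}(ii) closes the argument. Suppose the target is nondefinite. Then for every $k$ there exist a word $w$ with $|w|= k$ and words $u,u'$ such that $\delta_{uw}(q)\ne\delta_{u'w}(q)$ for some $q$. Set $h=\Phi_u(\enc(q))$ and $h'=\Phi_{u'}(\enc(q))$; both lie in $U$ by \Rtwo. Their cells are $U_{\delta_u(q)}$ and $U_{\delta_{u'}(q)}$, and these are mapped by $\Phi_w$ to different target states. Hence $\|\Phi_w(h)-\Phi_w(h')\|\ge2\eta$. On the other hand, this difference equals $A_w(h-h')$, which has norm at most $C\gamma^k\diam(U)$ and tends to $0$ because $U$ is bounded by \Rtwo. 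Taking $k$ large gives the contradiction.

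Thus any realized automaton has its state determined by the last $k$ symbols for some fixed $k$, i.e. it is definite.
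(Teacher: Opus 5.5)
Your part~(i) is essentially the paper's argument (Lemma~\ref{lem:restore}). You stack aligned perturbations along $w^k$ and compare the sequences $(e_j)$ and $(-e_j)$ to cancel the unknown $\Phi_{w^k}(u)$. The paper aligns $e_j$ with $(A_w^{\Tr})^j v$ for an arbitrary unit $v$, which needs no spectral decomposition. In your version there is one slip: for a defective eigenvalue the left eigenvector $\ell$ annihilates every right eigenvector, so $\ell(v)\neq0$ can fail. Take any $v$ with $\ell(v)\neq0$ instead, since only $\ell\circ A_w=\lambda\ell$ is used. For nonreal $\lambda$, choose real $e_j$ maximizing $\mathrm{Re}\bigl(\lambda^j\ell(e_j)\bigr)$.

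The genuine gap is in part~(ii). The uniform bound $\|A_w\|\le C\gamma^{|w|}$ with $\gamma<1$ does \emph{not} follow from $\rho(A_w)<1$ for every $w\in\Sig^+$. That implication is exactly the finiteness property of the joint spectral radius, which fails (Bousch--Mairesse; Blondel--Theys--Vladimirov). There are pairs of real $2\times2$ matrices all of whose finite products have spectral radius below one, yet whose joint spectral radius equals one, so $\sup_{|w|=k}\|A_w\|\ge1$ for every $k$. A finite alphabet and compactness do not help, since these counterexamples have two letters. Proposition~\ref{prop:singlerate} says nothing about products of different matrices. Without uniformity, the words witnessing nondefiniteness at each length $k$ are unrelated, and nothing forces $A_w(h-h')$ to be small along them. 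Simply assuming the bound changes the theorem.

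There are two ways to close the gap.

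\textbf{The paper's route.} The paper never leaves the powers of a single word. The hypothesis $\rho(A_w)<1$ alone gives $A_w^k\to0$, so Lemma~\ref{lem:separate} forces $\delta_{w^k}$ to be constant for all large $k$, for each $w$ separately. The purely combinatorial Lemma~\ref{lem:definite} then gives definiteness:
\begin{enumerate}
\item every idempotent of the finite transition semigroup $S$ must be constant;
\item in a word longer than $|S|$, two prefixes coincide, $p_i=p_i\cdot t$, hence $p_i=p_i\cdot t^r$ with $t^r$ idempotent;
\item so $t^r$ is constant, hence $p_i$ and then $\delta_w$ are constant.
\end{enumerate}

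\textbf{Repairing your route.} Your argument survives if you take uniformity from \Rtwo, which you are assuming, rather than from the spectrum. Run your $\pm$ argument along an arbitrary word $x_1\cdots x_k$, with $e_t$ aligned to $B_t^{\Tr}v$ where $B_t=A_{x_k}\cdots A_{x_{t+1}}$. In the Euclidean norm this gives $2\eta\sum_t\|B_t^{\Tr}v\|\le\diam(U)$. That bounds every product by $M=\diam(U)/2\eta$, and then $\|A_s\|\le M^2/(|s|+1)$ for every word $s$, uniformly in $s$.

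The paper's route needs only the stated pointwise hypothesis plus a finite-semigroup lemma. Yours, so repaired, also shows that \Rtwo by itself forces a joint spectral radius below one.
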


Most scan-compatible architectures fall on one of the two branches of Theorem~\ref{thm:affine} (see Table~\ref{tab:branches} in Appendix~\ref{app:census}), and Section~\ref{sec:experiments} shows both failure modes empirically. Concurrent work reaches the same limits of affine recurrences on error propagation by a different route \citep{Chung2026}.

\subsection{Two rates require multistability}
\label{sec:multistability}

Theorem~\ref{thm:affine} shifts the question from the \emph{spectrum} of a transition matrix to the \emph{geometry} of the update map. Specifically, realization demands a cell partition (Theorem~\ref{thm:cells}) that absorbs local perturbations while mapping each region cleanly into its prescribed successor, a dynamic reminiscent of the restoring organ of Section~\ref{sec:realization}.

The canonical neural system with this error-correcting behavior is the Hopfield network \citep{Hopfield1982}, but its restoration is \emph{static}: iterating $\Phi$ to convergence returns a corrupted state to \emph{the same} pattern it started near. A state tracker must execute a \emph{dynamic restoration}: a single step of $\Phi_x$ must simultaneously remove the perturbation \emph{and} move the hidden state to the correct \emph{successor} cell. Satisfying \eqref{eq:cellincl} thus requires a single update map to maintain multiple distinct cells, one per automaton state. When the target is not definite, some word $w$ acts on $Q$ as an idempotent $\delta_w$ that is not constant (Lemma~\ref{lem:definite}), so it fixes at least two states and, by Theorem~\ref{thm:tworates}, $\Phi_w$ shrinks each of their cells into itself while keeping them $2\eta$ apart. No affine map does this (Proposition~\ref{prop:affinecapacity}): the recurrence must be \emph{multistable}, which requires a nonaffine operation on the previous hidden state $h_{t-1}$ (Lemma~\ref{lem:affinefix}). Its \emph{robust capacity}, the number of bits one map can hold under perturbation (Definition~\ref{def:capacity}), is then $\log_2 n$, $n$ being the number of cells, with one attractor per cell and one cell per automaton state. A definite target needs neither multistability nor nonaffinity (Proposition~\ref{prop:definiteattained}).

This capacity is set by the number of cells rather than by the dimension: extra width allows for wider margins rather than more states, and extra depth leaves the temporal recurrence within each layer affine, so the robust capacity of an affine recurrence beyond a bounded suffix of the input is zero at any width (Proposition~\ref{prop:affinecapacity}) and any depth, whatever the nonlinear maps between layers (Theorem~\ref{thm:depth}).

\section{Length independence is compatible with a parallel scan}
\label{sec:budget}

To meet modern standards of scalability, a recurrence has to be \emph{compatible with the parallel scan}. 
The scan operates on state-update maps $\Phi_x$ and their compositions $\Phi_w$. Because function composition is associative, these maps form a monoid; the core requirement for a scan-compatible recurrence is therefore not associativity, but \emph{bounded representation}. The parallel scan combines these bounded composites pairwise across sequence steps, with costs governed by storage per composite ($b$~bits) and computation per composition ($c$~operations).

\begin{definition}[Scan-representable recurrence]
\label{def:scanrep}
The recurrence $h_t = \Phi(h_{t-1},x_t)$ is \emph{scan-representable with budget $(b,c)$} if there exists an injective representation $\zeta : \Mrnn \to \{0,1\}^b$ of the maps of $\Mrnn$, the identity $\Phi_\varepsilon$ included, and an algorithm computing $\zeta(\Phi_v \circ \Phi_u)$ from $(\zeta(\Phi_v),\zeta(\Phi_u))$ in $c$ operations, for all $\Phi_u, \Phi_v \in \Mrnn$.
\end{definition} 

Given per-symbol representations $\zeta(\Phi_{x_1}),\ldots,\zeta(\Phi_{x_L})$, and assuming that building $\zeta(\Phi_x)$ from a symbol and evaluating a composite at a hidden state also cost $O(c)$, a parallel scan computes $h_1,\ldots,h_L$ in depth $O(\log L)$, work $O(Lc)$, and memory $O(Lb)$ \citep{Blelloch}. The depth, the number of successive rounds of compositions, sets the running time with enough processors, whereas a sequential recurrence needs $L$ rounds.

\begin{restatable}[The budget forces a finite reachable set]{proposition}{propcollapse}
\label{prop:collapse}
If $\Phi$ is scan-representable with budget $(b,c)$, then $\Mrnn$ has at most $2^b$ elements and, for every initial hidden state $h_0$, so does the set $\{\Phi_w(h_0) \mid w\in\Sig^*\}$ of reachable hidden states, $h_0=\Phi_\varepsilon(h_0)$ included. On these $K\le2^b$ hidden states, $\Mrnn$ acts as a submonoid of $T_K$, the full transformation monoid on $K$ elements.
\end{restatable}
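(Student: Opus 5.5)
The argument is a counting argument on the representation $\zeta$ of Definition~\ref{def:scanrep}, followed by a check that the induced action on reachable hidden states is well defined and closed.

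\textbf{Step 1: bound on $|\Mrnn|$.} By Definition~\ref{def:scanrep}, $\zeta:\Mrnn\to\{0,1\}^b$ is injective. Therefore $|\Mrnn|\le|\{0,1\}^b|=2^b$. This needs nothing about the composition algorithm. The cost $c$ plays no role beyond ensuring that $\zeta$ is defined on all of $\Mrnn$, identity included.

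\textbf{Step 2: bound on the reachable set.} Fix $h_0$ and consider the evaluation map $\mathrm{ev}_{h_0}:\Mrnn\to\mathcal{H}$, $\Phi_w\mapsto\Phi_w(h_0)$. The reachable set $R_{h_0}\coloneq\{\Phi_w(h_0)\mid w\in\Sig^*\}$ is exactly the image of $\mathrm{ev}_{h_0}$, because every word $w$ yields the element $\Phi_w\in\Mrnn$. Hence $K\coloneq|R_{h_0}|\le|\Mrnn|\le 2^b$. Since $\Phi_\varepsilon=\mathrm{id}_\mathcal{H}\in\Mrnn$, the point $h_0$ itself lies in $R_{h_0}$.

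\textbf{Step 3: the action on $R_{h_0}$.} I would show that $R_{h_0}$ is invariant under every element of $\Mrnn$. If $h=\Phi_u(h_0)$ and $\Phi_v\in\Mrnn$, then $\Phi_v(h)=\Phi_{uv}(h_0)$, using $\Phi_v\circ\Phi_u=\Phi_{uv}$, which follows from the definition of composed maps with word concatenation $uv$. So $\Phi_v(h)\in R_{h_0}$. The restriction $r:\Mrnn\to T_K$, $\Phi_v\mapsto\Phi_v|_{R_{h_0}}$, is therefore well defined after identifying $R_{h_0}$ with a $K$-element set. It sends the identity to the identity and respects composition, so it is a monoid homomorphism, and its image is a submonoid of $T_K$.

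\textbf{Main subtlety.} The only delicate point is the phrase ``acts as a submonoid''. The restriction $r$ need not be injective, since distinct maps on $\mathcal{H}$ may agree on $R_{h_0}$. The statement should therefore be read as saying that $\Mrnn$ acts on $R_{h_0}$ through the submonoid $r(\Mrnn)\le T_K$. I would state this explicitly rather than claim an embedding.
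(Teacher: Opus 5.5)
Your proposal is correct and follows the paper's proof essentially step for step. Injectivity of $\zeta$ bounds $|\Mrnn|$, and the evaluation map $\Phi_w\mapsto\Phi_w(h_0)$ is onto the reachable set. Invariance under each $\Phi_v$ comes from $\Phi_v\circ\Phi_u=\Phi_{uv}$, and restriction is a monoid homomorphism into $T_K$. The paper states your caveat explicitly too: this homomorphism need not be injective, so $\Mrnn$ acts through its image rather than embedding into $T_K$.
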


\textbf{The scan is itself a perturbation source.} 
Affine maps form a monoid only in exact arithmetic: floating-point operations are not associative, so the composite a scan returns differs from the sequential one, and an affine recurrence, lacking restoration, lets this error accumulate (Appendix~\ref{app:finite_precision}). The perturbation floor $\eta$ is thus as much a product of parallelizing an affine recurrence as an assumption imposed on it.

Multistable recurrences can resolve this mismatch by replacing continuous drift with discrete basins. Consider $K$ attractors $a_1,\dots,a_K\in V$, each in its own basin $a_k\in V_k$, whose basins $V_1,\dots,V_K$ partition $V$. Under \textbf{exact restoration}, any hidden state in a basin contracts in one step onto an attractor: $\Phi$ is \emph{exactly restoring} on $V$ if every $\Phi_x$ maps each basin $V_k$ to a single attractor $a_{\sigma_x(k)}$. The attractors and their basins are those of the restoring organ of Section~\ref{sec:realization}, one and the same map for every symbol; the executive organ then carries $a_k$ into the basin of $a_{\sigma_x(k)}$ and the restoring organ finishes the step, so $a_k$ is a fixed point of $\Phi_x$ itself exactly when $\sigma_x(k)=k$ (Remark \ref{rem:whichmap}). 
Consequently, the hidden states reachable from an attractor are attractors, and each symbol acts as a transition table $\sigma_x$ on the $K$ attractor indices, which meets the budget of Definition~\ref{def:scanrep}: a table costs $b = K\lceil\log_2 K\rceil$ bits, and composing two of them is a lookup per index, $c = O(K)$ operations.

This table-composition view explains architectures that achieve scan-compatible multistable dynamics. Memory Recurrent Units \citep{DeGeeter2026,Brandoit2026}, which assume that the hidden state converges onto an attractor between two steps, implicitly build such tables (Appendix~\ref{app:mru}).

Scan compatibility and length independence can coexist. Mapping attractor indices onto automaton states, with $K=n$, $a_q=\enc(q)$ and cells $U_q \coloneq a_q\oplus\Ball$ that fit inside the basins $V_q$, turns exact restoration into the cell inclusions of \eqref{eq:cellincl} as soon as every table is the target transition, $\sigma_x=\delta_x$: the recurrence then realizes the target with length independence (Theorem~\ref{thm:cells}) and fits the budget at $b=n\lceil\log_2 n\rceil$ and $c=O(n)$. Since composites of index maps are index maps, the merge is exact whatever the shape of the tree and the scanned trajectory is bit-identical to the sequential one (Proposition~\ref{prop:maximal}). Section~\ref{sec:architecture} builds such a recurrence for every finite automaton.

\section{The neural finite-state machine}
\label{sec:architecture}
The \cellname[full] is a multistable recurrence that learns the transition tables of Section~\ref{sec:budget} directly. We first describe one head as an RNN, then show how it realizes a finite automaton, and finally assemble heads into a block.

\textbf{One head.} A head has a state $h_t\in\R^d$ and, for each input $x$, a logit matrix $\theta_x\in\R^{d\times d}$ given by a learned map of the input. Writing $\rnd(h) \coloneq \oh_{\arg\max_j\oh_j^{\Tr}h}$ for the basis vector of the largest coordinate of $h$, where $\oh_j^{\Tr}h$ selects the $j$-th coordinate of $h$, the head updates its state as
\begin{equation}
\Phi_x(h)=\rnd\big(\theta_x\,\rnd(h)\big).
\label{eq:construction}
\end{equation}
The outer $\rnd$ is the restoring organ of Section~\ref{sec:realization}: it maps the output of the product with $\theta_x$ back to the code, absorbing every perturbation that leaves its largest entry in place. The inner $\rnd$ does the same for the stored state; on a trajectory it acts as the identity, since $h_t$ is already a basis vector. The product with $\theta_x$ is the executive organ: since $\theta_x\oh_k$ is column $k$ of $\theta_x$, composing it with the outer $\rnd$ sends $\oh_k$ to $\oh_{\sigma_x(k)}$, with $\sigma_x(k)$ the row of the largest entry of that column:
\[
\sigma_x(k)\coloneq\arg\max_{j}\,\oh_j^{\Tr}\theta_x\oh_k,
\qquad
\rnd(\theta_x\oh_k)=\oh_{\sigma_x(k)}.
\]
The head is thus multistable, with $d$ attractors $a_k\coloneq\oh_k$ and basins $V_k\coloneq\{h\in\R^d \mid \arg\max_j\oh_j^{\Tr}h=k\}$, the cones of states whose largest coordinate is the $k$-th. It is exactly restoring in the sense of Section~\ref{sec:budget}: in one step, each basin $V_k$ collapses onto the single attractor $a_{\sigma_x(k)}$. The head outputs a learned vector attached to its current attractor. The map $\rnd$ jumps wherever the largest coordinate changes, so the recurrence is not affine, as multistability requires (Lemma~\ref{lem:affinefix}).

\textbf{Realization.} As at the end of Section~\ref{sec:budget}, we now map attractor indices onto automaton states. Assume $d\ge n$, number the states $Q=\{1,\dots,n\}$, and take
\[
\enc(q)\coloneq\oh_q,
\qquad
\dec(h)\coloneq\arg\max_{q\in Q}\oh_q^{\Tr}h,
\]
that is, $\dec$ reads the index of the basin $V_k$ that contains $h$. \Rone holds, since $\dec(\enc(q))=\arg\max_{q'\in Q}\oh_{q'}^{\Tr}\oh_q=q$, so $\dec\circ\enc=\mathrm{id}_Q$. For \Rtwo, take as cells the $\eta$-balls around the code points, $U_q\coloneq\enc(q)\oplus\Ball$. The basins do not overlap around the attractors, so for $\eta$ small enough, independently of the length of the word and of the target, each cell $U_q$ lies inside its basin $V_q$, and the cells are pairwise disjoint. Their disjoint union $U\coloneq\bigsqcup_{q\in Q}U_q$ is bounded. One step collapses each cell onto an attractor, so $\Phi_x(U)\oplus\Ball\subseteq U$ as soon as $\sigma_x$ maps $Q$ into itself. One map thus carries the two rates of Theorem~\ref{thm:tworates}: every cell shrinks to a point, while distinct attractors stay more than $2\eta$ apart. \Rthree holds if and only if $\sigma_x=\delta_x$ on $Q$, that is, if the largest entry of column $q$ of $\theta_x$ sits in row $\delta_x(q)$ for every $q\in Q$. One head realizes any automaton on at most $d$ states with length independence (Propositions~\ref{prop:maximal} and~\ref{prop:realizes}). Finite precision can only perturb the logits, and a perturbation below half the smallest column margin leaves every table unchanged at every length (Proposition~\ref{prop:margin}).

\textbf{Scan compatibility.} Each table $\sigma_x$ depends on its symbol alone, through the logits $\theta_x$, so a head builds the tables of a sequence in parallel and composes them by a parallel scan of depth $O(\log L)$ whose merges are gathers, $(\sigma_y\circ\sigma_x)(k)=\sigma_y(\sigma_x(k))$. This is the table budget of Section~\ref{sec:budget}, $d\lceil\log_2 d\rceil$ bits and $O(d)$ operations, and every merge is exact (Proposition~\ref{prop:maximal}). The \cellname is one block within a potentially larger model. Training through the $\arg\max$ uses a straight-through Gumbel-softmax estimator \citep{Jang2017} (Appendix~\ref{app:constr-grad}). Appendix~\ref{app:walltime} reports wall-clock measurements of the scan implementation.

\textbf{Heads as memory channels.} A target often tracks several quantities at once, and giving each its own memory channel keeps them apart. The \cellname is therefore a block of $r$ heads that read the same input and update independently: head $i$ has $d_i$ attractor indices and holds one component of the joint state $h_t=\big(h^{(1)}_t,\dots,h^{(r)}_t\big)\in\mathcal{H}=\R^{d_1}\times\cdots\times\R^{d_r}$. The quantities need not be independent: a running product of permutations, for instance, can be carried point by point by several heads (Example~\ref{ex:sn}). Targets whose factors depend on one another are realized by stacking layers, each reading the attractor indices of the layer below (Proposition~\ref{prop:stack}). Heads also keep the cost low: $k$ independent quantities of $v$ values cost $k$ heads, $kv^2$ logits per step, rather than one head of $v^k$ attractor indices, $v^{2k}$ logits. The heads are scanned side by side, so their budgets add, and their outputs are concatenated and projected back to the model width.

\section{Experiments}
\label{sec:experiments}

We compare \cellname with three affine recurrences that cover both branches of Theorem~\ref{thm:affine}: Mamba$^{-}$, Mamba with negative eigenvalues \citep{Grazzi2025}, with $\rho<1$; AUSSM \citep{Karuvally2025}, with $\rho=1$; and PD-SSM \citep{Terzic2025}, with $\rho<1$ and a hardmax that selects the transition matrix but does not act on the hidden state.

The algebraic tasks present a sequence of generators of a target monoid, and the target at each position is the composite of the generators seen so far. They cover abelian groups ($\mathbb{Z}_2$, $\mathbb{Z}_{16}$), nonabelian groups ($S_3$, $S_4$, and the nonsolvable $A_5$ and $M_{11}$, the first Mathieu group), and the noninvertible flip-flop monoid, drawn in two ways: $\mathrm{DFF}_5$ never draws more than four consecutive identities, so that the state is a function of the last five inputs, whereas FF draws identity runs of unbounded length. The textual task $\mathrm{TSO}_N$ \citep{Srivastava2023} gives the same kind of algebra through text: swaps between $N\in\{3,4,5\}$ people holding $N$ items are described in natural language, for example ``\emph{alice has the key. alice swaps with bob.} $\dots$ \emph{who has the key?}'', and the answer must be read off the tracked state.

All models share one backbone (Appendix~\ref{app:training}): an encoder embeds the input tokens into a latent space, the recurrent layers act within this latent space, and a decoder maps it to the predicted states. Only the recurrent layers differ. The baselines use four layers of width $256$. The \cellname uses a single layer on the algebraic tasks, with one head on $\mathbb{Z}_2$, $\mathbb{Z}_{16}$ and the flip-flops, and several heads on the permutation groups, for instance $4$ heads of $11$ attractor indices on $M_{11}$, each tracking where one point is sent. On $\mathrm{TSO}_N$ it stacks two multi-head layers. Models are trained on sequences of a fixed length, with the state supervised at every position, and tested on longer ones. The failing length $L_{\max}$ is the smallest tested length at which accuracy falls below $90\%$ for the baselines and below $100\%$ for \cellname, where accuracy is the fraction of sequences whose predicted states are all correct on the algebraic tasks, and of correctly answered questions on $\mathrm{TSO}_N$. Appendix~\ref{app:experiments} details the baselines, tasks, model sizes, and training.

\begin{table}[htbp]
    \setlength{\belowcaptionskip}{\baselineskip} 
    \caption{\textbf{Failing length on algebraic and textual state-tracking tasks.} Each cell reports the median failing length $L_{\max}$ over the seeds that learned the task, and below it how many of the five seeds did (Appendix~\ref{app:seeds}). $\geq L$: no failure up to $L$, the longest sequence that fits on one GPU. $^{\dagger}$: the extracted tables match the target exactly, which certifies correctness at every length (Appendix~\ref{app:extract-algebraic}). $\times$: no seed learned the task. Brackets below the task names give the number of reachable states.}
    \label{tab:results}
    \centering
    \resizebox{1\textwidth}{!}{%
        \input{assets/figure_2.tex}%
    }
\end{table}

\textbf{\cellname tracks every task; affine baselines do not.} With a single layer, \cellname keeps perfect accuracy up to the longest tested length on every algebraic task and for all five seeds, including the nonsolvable groups and the noninvertible flip-flops, and a stack of two layers does the same on $\mathrm{TSO}_N$ (Table~\ref{tab:results}). The baselines, with four layers and larger states, fail within a few hundred steps on most tasks or do not learn them at all.

\textbf{Affine recurrences fail as Theorem~\ref{thm:affine} predicts.} The flip-flop pair isolates the definite boundary: $\mathrm{DFF}_5$ draws only definite words, which a contracting recurrence can hold, whereas FF does not. The two contracting baselines, Mamba$^{-}$ and PD-SSM, hold $\mathrm{DFF}_5$ up to the longest tested length but fail on FF, while AUSSM, which does not contract, fails on both. The learned trajectories show the two branches (Appendix~\ref{app:pca}). Where the baselines learn, the horizon they reach can vary widely across seeds (Appendix~\ref{app:seeds}). A horizon measured on one trained model therefore does not transfer to the next, which is why we ask for length independence rather than a long horizon.

\textbf{Extracted tables certify length independence.} Each \cellname step commits to a table entry, so the tables a trained model implements can be read off its logits (Appendix~\ref{app:table-extraction}). For every algebraic task and every seed, they match the target exactly. By Propositions~\ref{prop:realizes} and~\ref{prop:heads}, this implies correct tracking at every length, provided the logit margin of Proposition~\ref{prop:margin} exceeds the error of the logit computation. On $\mathrm{TSO}_N$, Appendix~\ref{app:tso-fsm} shows how the heads of the two layers track the registers of the task.

\section{Discussion}
\label{sec:discussion}
Scan compatibility, expressivity, and robustness to finite precision are often viewed as competing demands; we have shown that a single mechanism meets all three. A bounded representation budget forces the recurrence onto a finite set of reachable hidden states, and a well-separated set of them lets one map both erase the step-wise error and execute the next transition. An affine recurrence has a single rate, so at any finite precision it either lets perturbations grow or realizes at most a definite automaton, at any width and depth. For models built on contracting recurrences, this is an \textit{undeclared context window}: older inputs reach the output only through a term that decays geometrically, so beyond a length set by learned gates and the arithmetic floor they are indistinguishable from noise, and that length is exceeded without any signal. The \cellname removes this barrier and leaves learning, the main practical obstacle, checkable rather than solved: the automaton a trained model implements can be read off its weights, so long-horizon correctness is verified rather than extrapolated (Appendix~\ref{app:learning}). Behaviors that need a state set growing with the input, such as unbounded counting, remain open (Appendix~\ref{app:discussion}).

\newpage
\subsection*{AI use statement}

In this work, we used generative AI tools in a supporting role for for formulating mathematical claims, assisting in the writing of proofs, proposing or refining hypotheses, designing or providing feedback on research methodology or experiments, implementing methods, and assisting with translation. 

We did not use generative AI tools for interpreting results or providing critical ingredients for proving mathematical claims. Generating synthetic datasets, developing theoretical models or conceptual frameworks, cleaning or reformatting datasets, and supporting qualitative or thematic data analysis are not applicable to this work.

Additionally, we partially used generative AI tools for identifying relevant literature, creating or editing software code, and editing the research paper to improve readability.

We have reviewed all AI-assisted work. Specifically, all mathematical claims, proofs, and methodological designs were manually checked for rigor and technical accuracy by the authors; AI-generated and edited code was verified and unit-tested; translated text was checked for semantic accuracy; and all referenced literature and information retrieved via AI were independently verified against original source publications. We take responsibility for the final content of this work, including text, claims or artifacts produced with the aid of generative AI.

\subsection*{Ethics statement}

The research presented in this work focuses on general and foundational aspects of artificial intelligence. It does not involve human subjects, sensitive personal data, or direct deployment in high-risk domains. Consequently, we do not believe that our work raises any direct or specific ethical concerns, privacy violations, or immediate risks of misuse.

However, as a contribution to the broader field of artificial intelligence, we are fully conscious of the far-reaching impact that AI tools and methodologies exert on society. AI developments carry significant implications across social, economic, legal, and environmental dimensions, including algorithmic fairness, economic shifts, and the resource demands of computational infrastructure. We strongly believe that these broader socio-technical and environmental impacts must be taken into account in future technical developments, policy formulation, and governance decision-making.

\subsection*{Reproducibility statement}

We are committed to ensuring the reproducibility of all theoretical claims and experimental findings presented in this paper. To facilitate full reproducibility, the entire source code of this project is open source. All empirical evaluation datasets used in this work are completely synthetic and can be directly generated, modified, and evaluated using the provided codebase. During the review period, an anonymized repository containing the full implementation, configuration files, and instructions for replicating all results and datasets is accessible at the following link: \url{https://julienbrandoit.github.io/length-independent-state-tracking/}. Furthermore, complete proofs of all theoretical claims and additional experimental details are given in the Appendix.



\bibliography{iclr2027_conference}
\bibliographystyle{iclr2027_conference}

\appendix
\clearpage
\section*{Contents of this appendix}
\begin{itemize}
\item Appendix~\ref{app:notation}: \nameref{app:notation}
\item Appendix~\ref{app:discussion}: \nameref{app:discussion}
\item Appendix~\ref{app:related}: \nameref{app:related}
\item Appendix~\ref{app:finite_precision}: \nameref{app:finite_precision}
\item Appendix~\ref{app:learning}: \nameref{app:learning}
\item Appendix~\ref{app:proofs}: \nameref{app:proofs}
\item Appendix~\ref{app:construction}: \nameref{app:construction}
\item Appendix~\ref{app:experiments}: \nameref{app:experiments}
\end{itemize}

\section{Notation}
\label{app:notation}

\begin{table}[ht]
\caption{Notation.}
\label{tab:notation}
\begin{center}
\footnotesize
\setlength{\tabcolsep}{4pt}
\begin{tabular}{@{}L{1.3cm}L{4.9cm}L{1.3cm}L{4.9cm}@{}}
\toprule
\multicolumn{2}{@{}l}{\emph{Target}} & \multicolumn{2}{l@{}}{\emph{Model}} \\
$\Sig,\Sig^*$ & alphabet; free monoid of words, with $\Sig^+=\Sig^{*}\setminus\{\varepsilon\}$.  & $\mathcal{H}=\R^d$ & hidden state space \\
$Q$, $n$ & automaton state set; $n \coloneq |Q|\ge2$ & $\Phi_x$, $\Phi_w$ & per-symbol map; composite
  $\Phi_{wx} \coloneq \Phi_x\circ\Phi_w$ \\
$\delta_x,\delta_w$ & symbol action $\delta_x \coloneq \delta(\cdot,x):Q\to Q$; extension to
  words, $\delta_w \coloneq \delta_{x_{|w|}}\circ\cdots\circ\delta_{x_1}$ & $A_x,b_x$ & affine data,
  $\Phi_x(h)=A_xh+b_x$ \\
$\Mtar$ & transition monoid, $\Mtar=\{\delta_w\mid w\in\Sig^*\}\le Q^Q$, under
  composition & $A_w$ &
  $A_{x_{|w|}}\cdots A_{x_1}$ for $w=x_1\cdots x_{|w|}$ \\
$Q^Q$ & all maps $Q\to Q$ under composition; $Q^Q\cong T_n$, $\Mtar\le Q^Q$ & $\rho$, $\kappa$ &
  spectral radius; $\kappa \coloneq \max_x\|A_x\|_\op$, with $\|A\|_\op \coloneq \sup_{v\ne0}\|Av\|/\|v\|$ \\
$T_n,S_n$ & full transformation monoid; symmetric group & $\Mrnn$ & RNN transition monoid, $\Mrnn \coloneq \{\Phi_w \mid w\in\Sig^*\}$ \\
\midrule
\multicolumn{2}{@{}l}{\emph{Arithmetic and error}} & \multicolumn{2}{l@{}}{\emph{Code}} \\
$[\,\cdot\,]_\mathcal{Q}$ & representable object standing for a given one
  (Appendix~\ref{app:finite_precision}) & $\enc,U,\dec$ &
  encoding; decodable region; readout \\
$\mathcal{Q}$ & representable points $[\mathcal{H}]_\mathcal{Q}$ &
  $U_q$ & the cell $U\cap\dec^{-1}(q)$ \\
$\eta$, $\Ball$ & per-step perturbation bound; ball $\{e \mid \|e\|\le\eta\}$ & $U^\star$ & $\eta$-reachable set
  of Definition~\ref{def:noise} \\
$\oplus$ & Minkowski sum & $\diam$, $\dist$ & diameter of a set; distance between two \\
$\eta$-traj. & any $(h_t)$ with $\|h_t-\Phi_{x_t}(h_{t-1})\|\le\eta$ & $a_k$, $V_k$, $\sigma_x$ & attractors; their basins; table on attractor indices (Sections~\ref{sec:budget} and~\ref{sec:architecture}) \\
& & $P_x$, $\rnd$ & column one-hot matrix of $\sigma_x$ (Appendix~\ref{app:constr-realizes}); restoring map to the basis vector of the largest coordinate \\
$c_a$, $S$ & constant map of value $a$; transition semigroup $\{\delta_w\mid w\in\Sig^+\}$ &
  $\oh_j$, $\theta_x$ & basis vector $j$, attractor of index $j$; logit matrix of \eqref{eq:construction} \\
$\zeta$, $b$, $c$ & scan representation and its budget (Definition~\ref{def:scanrep}) &
  $r$, $d_i$, $\gamma$ & number of heads and their numbers of attractor indices (Section~\ref{sec:architecture}); logit margin
  (Appendix~\ref{app:constr-margin}) \\
$\tilde\Phi_x$, $e_t$ & executed step; per-step error $\tilde\Phi_{x_t}(h_{t-1})-\Phi_{x_t}(h_{t-1})$ & $K$ & number of reachable states (Proposition~\ref{prop:collapse}) \\
& & $L_{\max}$ & failing length (Section~\ref{sec:experiments}) \\
& & $\chi$ & injective map of $Q$ into the joint attractor indices of the heads (Appendix~\ref{app:constr-heads}) \\
\bottomrule
\end{tabular}
\end{center}
\end{table}
\clearpage
\section{Extended discussion}
\label{app:discussion}
Robust state tracking asks one map to both erase the perturbation the previous step left behind, and move the state to its successor. An affine recurrence has a single rate, so it buys one at the expense of the other and its horizon is finite at any precision. \cellname does both, because a step is a rounding followed by a table lookup.

\textbf{The horizon is not an artifact of precision.} Our argument turns on the existence of a perturbation, not its amplitude. For any $\eta>0$ a contracting affine recurrence has a finite horizon, and $\eta$ only sets where that horizon falls; more precise formats moves it, but does not remove it. Nor is it an artifact of the hardware: the scan that makes an affine recurrence fast reassociates its products and so supplies $\eta$ by itself, whatever the format. \cellname escapes the horizon not by being more precise but by returning the state to a code point at every step, so that perturbations are annihilated instead of accumulated and, under a scan, are not created at all.

\textbf{An undeclared context window.} For language models built on contracting recurrences the consequence is practical. The realized automaton is definite up to precision, so inputs older than $k$ steps reach the output only through a term of size $O(\kappa^k)$: the model has a context window, but an undeclared one. Its length is set by learned gates and the arithmetic floor rather than by specification, it varies with the input as selectivity varies the decay rate, and, unlike overflow in attention, it is exceeded without any signal.

\textbf{Forgetting as computation, not decay.} Krohn--Rhodes decomposes every finite-state behavior into simple groups, which remember reversibly, and flip-flops, which remember until reset \citep{Krohn1965}. Realizing both makes forgetting a chosen property of the transitions rather than an unwanted residual of the dynamics: what is discarded is chosen by the arriving symbol, a reset erases exactly what it collapses, and what no reset targets survives indefinitely. A contracting recurrence offers erasure through contraction, which is graded and unconditional. Persistence is therefore not the price of forgetting; it is what remains once forgetting is expressed as computation.

\textbf{Beyond gradient descent.} Because the executive organ followed by the restoring organ is a table read off the logits by an $\arg\max$, the automaton a trained \cellname implements can be extracted, inspected, and compared to a target. The same object can in principle be inferred in context rather than learned by gradient descent: a table is small enough to be written from a handful of demonstrations, which makes automaton induction at inference time, rather than in the weights, a natural next question.

\textbf{Limitations and outlook.} Our guarantees cover finite-state tracking. Behaviors requiring unbounded memory, such as counting without a modulus or stack-like nesting, fall outside the transformation-monoid setting and need a state set that grows with the input; how to keep restoration meaningful in that regime is open. The number of states is also a design choice, and while heads and their sizes can be over-provisioned, the trade-off between capacity and optimization is not yet characterized. Finally, durable state is exactly what partially observable control requires, so \cellname is a natural candidate for reinforcement learning agents whose belief must survive long episodes, which we leave to future work.

\section{Related work}
\label{app:related}
This appendix situates our results among five lines of work: parallel scans and the affine recurrences built on them, the expressivity of these recurrences, their length generalization and finite-precision limits, scan-compatible recurrences that are not affine, and recurrent networks as automata. Three subsections then go further: the classical roots of the restoration requirement (Appendix~\ref{app:provenance}), a survey placing scan-compatible affine architectures on the branches of Theorem~\ref{thm:affine} (Appendix~\ref{app:census}), and Memory Recurrent Units read as transition tables (Appendix~\ref{app:mru}).

\textbf{Parallel scans and affine recurrences.} A parallel scan computes all prefixes of a sequence under an associative operation in logarithmic depth \citep{Blelloch}. \citet{Martin2018} used it to train linear recurrences over the sequence length. Linearity in the hidden state has since become the design principle of scalable recurrent models: structured and diagonal state space models \citep{Gu2021,Gu2022,Gupta2022,Smith2022}, selective ones \citep{Gu2024,Dao2024,Lahoti2025}, gated linear RNNs \citep{Orvieto2023,Qin2023,Qin2024,De2024,Feng2024}, and linear attention with its gated and delta-rule variants \citep{Katharopoulos2020,Yang2024a,Yang2024,Yang2024b}. They parallelize over the sequence in different ways, through a parallel scan \citep{Smith2022,Gu2024,Orvieto2023,Feng2024}, a convolution for the time-invariant ones \citep{Gu2021,Gu2022,Gupta2022}, or a chunkwise algorithm \citep{Yang2024a,Yang2024,Dao2024}, but all of them rely on the update being affine in the hidden state, so that the composite of two steps has a closed form. Appendix~\ref{app:census} surveys them from this angle. We take the opposite reading of the same constraint: what a scan requires is a bounded representation of the composites (Definition~\ref{def:scanrep}), not an affine update, and Proposition~\ref{prop:collapse} shows what this budget implies.

\textbf{Expressivity of affine recurrences.} A large body of work asks which automata these recurrences can express. At logarithmic precision and under standard complexity assumptions, state space models such as S4 and Mamba cannot track nonsolvable groups such as $S_5$ \citep{Merrill2024}. At finite precision, state space models with nonnegative gates model exactly the star-free regular languages \citep{Sarrof2024}, and a single layer of input-dependent complex diagonal transitions tracks no nonabelian group, while $k$ such layers track exactly the groups with a subnormal series of length $k$ with abelian factors \citep{Shakerinava2025}. A line of architectures then enlarges the reachable transition monoids: negative eigenvalues bring parity, and products of generalized Householder matrices bring every regular language \citep{Grazzi2025}, with the number of factors trading expressivity for cost \citep{Siems2025a}, and an $\mathrm{NC}^1$-complete problem within attention \citep{Yang2025}; adaptive unitary transitions \citep{Karuvally2025} and generalized delta rules \citep{Peng2025} target state tracking directly, and the column one-hot transitions of PD-SSM reach every finite automaton with as many hidden dimensions as states \citep{Terzic2025}. These results hold at logarithmic precision \citep{Merrill2024}, at a precision that grows logarithmically with the length \citep{Karuvally2025}, or at finite precision \citep{Sarrof2024,Grazzi2025,Siems2025a,Shakerinava2025}, but in none of them is the computation perturbed. In the finite-precision model of \citet{Sarrof2024}, numbers carry a fixed number of fractional bits and an unbounded number of integer bits, and arithmetic on them is exact: a gate equal to one then holds an exactly representable state forever, which is how a state space model with nonnegative gates models the flip-flop and, through the Krohn--Rhodes decomposition, every star-free language. Our model instead adds a perturbation of size up to $\eta$ at every step and asks for correctness under every such perturbation (Definition~\ref{def:noise}). A gate equal to one then accumulates the perturbations, which is branch~(i) of Theorem~\ref{thm:affine}, and a nondefinite target such as the flip-flop FF is out of reach of every affine recurrence. The two results are consistent: theirs describe what an affine recurrence expresses when its arithmetic does not drift, ours what it retains when it does, as under a floating-point scan (Section~\ref{sec:budget} and Appendix~\ref{app:finite_precision}). A second difference is the target: they ask for the recognition of a language, we ask for the realization of the automaton at every step. Expressivity and robustness to finite precision are separate axes.

\textbf{Length generalization and finite precision.} Empirically, state tracking learned by affine recurrences holds up to around the training length and collapses not far beyond it \citep{Shakerinava2025,Karuvally2025,Grazzi2025,Terzic2025}, and synthetic benchmarks isolate the failure: automata and group word problems \citep{Liu2022,Deletang2022}, flip-flops with long gaps \citep{Liu2023}, and tracking shuffled objects \citep{Srivastava2023}.  The $\eta$-trajectory of Definition~\ref{def:noise} is the pseudo-orbit used by \citet{Casey1996} to analyze recurrent networks under bounded noise, while \citet{Maass1998} show that arbitrarily small stochastic analog noise reduces discrete-time analog computation to the power of finite automata; and that rounding changes the dynamics of a recursion, rather than only blurring them, has long been known from the limit cycles of recursive digital filters \citep{Parker1971,Kaneko1973}; under a scan, the nonassociativity of floating-point arithmetic makes the scan itself a perturbation source (Section~\ref{sec:budget}). Concurrent work reaches the same limits of affine recurrences through the dynamics of error propagation \citep{Chung2026}. Our contributions are the step-level characterization of Theorem~\ref{thm:equiv}, the two-branch classification of Theorem~\ref{thm:affine}, its extension to any depth (Theorem~\ref{thm:depth}), and a scan-compatible recurrence that escapes it.

\textbf{Scan-compatible recurrences that are not affine.} Two routes lead beyond affine updates without giving up parallel training. The first keeps an arbitrary nonlinear recurrence and solves for its whole trajectory at once: DEER \citep{Lim2024} and ParaRNN \citep{Danieli2026} cast the sequence of updates as a system of equations and solve it by Newton iterations, each of which scans the affine linearization of the recurrence. They make nonlinear recurrences such as GRUs and LSTMs trainable at scale, but they return the sequential trajectory only up to the tolerance of the solver, which adds to $\eta$ instead of removing it, and they place no constraint on the dynamics of the recurrence itself. The second route, which we take, chooses a nonlinear recurrence whose composites have a bounded exact representation. Memory Recurrent Units \citep{DeGeeter2026,Brandoit2026} are the closest prior instance: they assume that the hidden state converges onto an attractor between two steps, and on the attractors each symbol acts as a transition table (Appendix~\ref{app:mru}). The \cellname makes this table explicit and learns it for every element of $T_d$ (Proposition~\ref{prop:maximal}). PD-SSM \citep{Terzic2025} shares its column one-hot structure but applies its hardmax to the input, not to the hidden state: its step stays affine in $h_{t-1}$, and its diagonal factor makes it contract (Appendix~\ref{app:baselines}). Restoration requires the nonlinearity to act on the hidden state (Lemma~\ref{lem:affinefix}).

\textbf{Recurrent networks as automata.} That recurrent networks can implement finite automata, and must organize their hidden states into attractors to do so robustly, is classical \citep{Omlin1996,Omlin1996a,Casey1996}; Appendix~\ref{app:provenance} discusses this lineage and what changes here. The extraction of finite-state machines from trained networks \citep{Casey1996} infers the automaton after training, from the organization of the hidden state space. In the \cellname the automaton is the parameterization itself: each step commits to a table entry, so the extracted tables are exactly the ones the model executes (Appendix~\ref{app:table-extraction}). Finally, the algebraic view of targets as transition monoids, with groups and flip-flops as the building blocks of the Krohn--Rhodes decomposition \citep{Krohn1965}, motivates both our benchmarks and the stacking of layers (Remark~\ref{rem:factor}).

\subsection{Provenance of the restoration requirement and the target}
\label{app:provenance}

This subsection places \Rone to \Rthree and the target of Definition~\ref{def:realization} in the lineage of classical work, and states what changes here.

\textbf{Roots.} The restoration requirements have a strong connection to classical work. Separation is linked to \citet{Shannon,Shannon1949}: signals must be distinguishable to be told apart at all and repeated restoration relates to \citet{Oliver1948} and \citet{vonNeumann1956}: error must be erased at every stage, not once at the end. \Rone to \Rthree inherit both. Likewise, the $\eta$-trajectory of Definition~\ref{def:noise} is the $\eta$-pseudo-orbit of dynamical systems, used in this form for recurrent networks by \citet{Casey1996}; and that a bounded set closed under one perturbed step forces a strictly stable map is standard in set-invariance control \citep{Rakovic2005}.

\textbf{Reliable computation from unreliable parts.} \citet{vonNeumann1956} shows that reliable computation from unreliable components requires executive and restoring organs in series at every stage, described in 1952 for a setting containing no recurrent networks. \citet{Winograd1963} shows that replication without restoration admits no computation capacity: for a redundant automaton the probability of malfunction falls as $P\approx d\exp(-c/R)$ and so vanishes only as the redundancy grows without bound, where for a coded channel $P\approx2^{-n(C-R)}$ vanishes with block length at any fixed rate $R<C$. Their conclusion is that such automata ``do not exhibit noise-free behavior of the type that would permit computation capacities to be defined,'' and that what buys capacity is complexity rather than copies.

\textbf{Recurrent networks as automata.} \citet{Hopfield1982} gives the canonical multistable recurrent system: an energy descent whose minima are the stored patterns, with basins that absorb perturbation, a restoring organ with the identity as index map. \citet{Omlin1996} exhibit the state-tracking version, attractor-based encodings of arbitrary deterministic finite automata, stable at arbitrary word length with finite weights, with the sigmoid-discriminant analysis in \citet{Omlin1996a}. \citet{Casey1996} supplies the converse, that a recurrent network performing a finite-state computation must organize its state space into attractors mirroring the states of the minimal automaton.\footnote{The noise corollary of \citeauthor{Casey1996} was subsequently found technically incorrect \citep{Casey1998}; the attractor-structure theorem is unaffected, and \citet{Maass1998} give the sharper and correct statement.} Studying finite-state machine extraction from trained networks, with no architecture class in view, their Theorem~3.1 shows that a network robustly performing an automaton computation carries mutually disjoint closed sets with nonempty interior, one per state of the minimal DFA, on which its behavior agrees with the automaton; their Corollary~3.1 then recovers finiteness from compactness of the phase space. \citet{Maass1998} show that arbitrarily small analog noise reduces discrete-time analog computation to finite automata, and to the definite languages in particular together with \citet{Maass1999}. The modern Hopfield line \citep{Ramsauer2021} replaces the discrete update with a softmax and recovers attention, which moves the construction out of the recurrent setting and so does not bear on \Rtwo.

\textbf{What is new here.} In Theorem~\ref{thm:cells}, the partition is \emph{characterized} rather than shown necessary, so the conditions can be checked of a candidate architecture and not only extracted from a working one. The target is \emph{realization of the automaton} rather than recognition of a language, which is worst-case over every word and compositional. And the cells carry a \emph{margin} $\eta$ that does not shrink with $t$, rather than being merely disjoint. The third change is central: disjointness is topological, preserved by every injective map, and therefore cannot distinguish one architecture from another, where a margin is metric: Theorem~\ref{thm:tworates}(i) and~(ii) convert it into two demands along every word, which a single rate cannot meet along repeated words (Theorem~\ref{thm:affine}). \Rtwo also states restoration \emph{deterministically}, as a set inclusion with a worst-case guarantee, where every source above states it probabilistically or asymptotically, and that is what makes it composable with a budget in Section~\ref{sec:budget}.


\subsection{A review of scan-compatible affine recurrences}
\label{app:census}

Most recurrent architectures proposed for scalable sequence modeling since S4 are affine in the state, so that a parallel scan composes them with the closed-form merge $(A_2,b_2)\circ(A_1,b_1)=(A_2A_1,A_2b_1+b_2)$, up to rounding (Section~\ref{sec:budget}). Existing comparisons organize them by family, by transition structure or by exact-arithmetic expressivity \citep{Merrill2024,Sarrof2024,Grazzi2025}; Table~\ref{tab:branches} places them instead on the branches of Theorem~\ref{thm:affine}.

\textbf{Method and reading.} For every architecture, we read off the transition matrix $A_t$ of \eqref{eq:affine} from the parameterization stated in its paper and compute its spectrum. A gate or eigenvalue modulus that \emph{attains} $1$ places the architecture on the $\rho=1$ branch, where \Rtwo fails; one that only approaches $1$ places it on $\rho<1$, where \Rthree fails and the definite automata remain. The survey records what each paper specifies, not what a trained model does, and is not exhaustive.

\textbf{Caveats.} A spectral radius is asymptotic: a nonnormal $A$ with $\rho(A)\le1$ can amplify by a polynomial factor before it decays \citep{Trefethen2005}. And when the matrix is chosen by the input, long products are governed by the joint spectral radius of the family \citep{Jungers2009}, whose boundedness is undecidable in general \citep{Blondel2000}; this is why Theorem~\ref{thm:affine} quantifies over words. Both caveats are vacuous for normal transitions (diagonal, scalar, generalized Householder, and factor by factor for DeltaProduct), for which $\|A_w\|\le\prod_i\rho(A_{x_i})$.

\begin{center}
\setlength{\tabcolsep}{3pt}
\setlength{\LTcapwidth}{\textwidth}
\begin{longtable}{@{}>{\scriptsize\raggedright\arraybackslash}p{3.0cm}>{\scriptsize\raggedright\arraybackslash}p{4.4cm}>{\scriptsize\raggedright\arraybackslash}p{3.3cm}>{\scriptsize\raggedright\arraybackslash}p{2.6cm}@{}}
\caption{Surveyed scan-compatible affine architectures, grouped by the branch of Theorem~\ref{thm:affine}. For each, the recurrence as its paper writes it, the transition matrix $A_t=A_{x_t}$ of \eqref{eq:affine}, and its spectrum; keys $k_t$ and $w_t$ have unit norm.}\label{tab:branches}\\
\toprule
Architecture & Recurrence & Transition $A_t$ & Spectrum \\
\midrule
\endfirsthead
\multicolumn{4}{@{}l@{}}{\scriptsize \emph{Table~\ref{tab:branches}, continued}} \\
\toprule
Architecture & Recurrence & Transition $A_t$ & Spectrum \\
\midrule
\endhead
\bottomrule
\endfoot
\multicolumn{4}{@{}l@{}}{\scriptsize \textbf{$\rho>1$: loses contraction, fails \Rtwo; realizes nothing.}
\emph{Not targeted in our survey}\textsuperscript{a}} \\
\addlinespace[3pt]
\multicolumn{4}{@{}l@{}}{\scriptsize \textbf{$\rho=1$: loses contraction, fails \Rtwo; realizes nothing,
$L_{\max}=O(1/\eta)$ for isometries}\textsuperscript{b}} \\
\rowcolor{black!6}
linear attention\newline \citep{Katharopoulos2020}
  & $S_t=S_{t-1}+\phi(k_t)v_t^\Tr$
  & $I$
  & $\{1\}$, multiplicity $d$ \\
DeltaNet\newline \citep{Yang2024}
  & $S_t=S_{t-1}(I-\beta_tk_tk_t^\Tr)+\beta_tv_tk_t^\Tr$
  & $I-\beta_tk_tk_t^\Tr$, $\beta_t=\sigma(\cdot)\in(0,1)$
  & $\{1^{(d-1)},\,1-\beta_t\}$, $1-\beta_t\in(0,1)$ \\
\rowcolor{black!6}
{}+ negative eigenvalues\newline \citep{Grazzi2025}
  & $H_t=(I-2\beta_tk_tk_t^\Tr)H_{t-1}+\beta_tk_tv_t^\Tr$
  & $I-2\beta_tk_tk_t^\Tr$, $\beta_t\in(0,1)$
  & $\{1^{(d-1)},\,1-2\beta_t\}$, $1-2\beta_t\in(-1,1)$ \\
DeltaProduct\newline \citep{Siems2025a}
  & $H_t=A_tH_{t-1}+B_t$
  & $\prod_{j=1}^{n_h}(I-\beta_{t,j}k_{t,j}k_{t,j}^\Tr)$, $\beta_{t,j}\in(0,1)$ or $(0,2)$
  & $1$ with multiplicity $\ge d-n_h$ \\
\rowcolor{black!6}
PaTH\newline \citep{Yang2025}
  & $S_t=S_{t-1}H_t+v_tk_t^\Tr$
  & $H_t=I-\beta_tw_tw_t^\Tr$, $\beta_t\in(0,2)$
  & $\{1^{(d-1)},\,1-\beta_t\}$, $1-\beta_t\in(-1,1)$ \\
AUSSM\newline \citep{Karuvally2025}
  & $x_t=e^{\Delta_tA_t}x_{t-1}+\Delta_tBu_t$
  & $e^{\Delta_tA_t}$, $A_t$ skew-symmetric, input-dependent
  & $|\lambda|=1$ \\
\rowcolor{black!6}
LinOSS-IMEX\newline \citep{Rusch2024}
  & $x_n=Mx_{n-1}+F_n$
  & $M=\bigl(\begin{smallmatrix}I&-\Delta tA\\\Delta tI&I-\Delta t^2A\end{smallmatrix}\bigr)$, $A\ge0$ diagonal
  & $|\lambda|=1$ if $\Delta t^2A_{kk}\le4$ \\
LinOSS-IM\textsuperscript{g}\newline \citep{Rusch2024}
  & $x_n=M^{-1}x_{n-1}+M^{-1}F_n$
  & $M^{-1}=\bigl(\begin{smallmatrix}S&-\Delta tAS\\\Delta tS&S\end{smallmatrix}\bigr)$, $S=(I+\Delta t^2A)^{-1}$, $A=\mathrm{ReLU}(\hat A)$ diagonal
  & $|\lambda|^2=S_{kk}\in(0,1]$ \\
\rowcolor{black!6}
S4\textsuperscript{f}\newline \citep{Gu2021}
  & $x_k=\bar Ax_{k-1}+\bar Bu_k$
  & $\bar A=(I-\tfrac\Delta2A)^{-1}(I+\tfrac\Delta2A)$, HiPPO-initialized $A$
  & $|\bar\lambda|<1$ iff $\mathrm{Re}\,\lambda<0$ \\
S5\textsuperscript{f}\newline \citep{Smith2022}
  & $x_k=\bar\Lambda x_{k-1}+\bar Bu_k$
  & $\bar\Lambda=e^{\Lambda\Delta}$, $\Lambda$ complex diagonal
  & $|\bar\lambda|=e^{\Delta\,\mathrm{Re}\lambda}<1$ iff $\mathrm{Re}\,\lambda<0$ \\
\rowcolor{black!6}
Longhorn\textsuperscript{h}\newline \citep{Liu2024}
  & $S_t=A_t\odot S_{t-1}+B_t$
  & $A_t=1-\varepsilon_t\otimes k_t^{\odot2}$, $\varepsilon_t=\beta_t/(1+\beta_tk_t^\Tr k_t)$, $k_t=W_kx_t$ not normalized
  & $\lambda\in(0,1]$ \\
Kimi Delta Attention\newline \citep{Team2025}
  & $S_t=(I-\beta_tk_tk_t^\Tr)\mathrm{Diag}(\alpha_t)S_{t-1}+\beta_tk_tv_t^\Tr$
  & $(I-\beta_tk_tk_t^\Tr)\mathrm{Diag}(\alpha_t)$, $\alpha_{t,j},\beta_t\in[0,1]$
  & $\rho\le\max_j\alpha_{t,j}$ \\
\rowcolor{black!6}
RWKV-4\newline \citep{Peng2023}
  & $a_t=e^{-w}\odot a_{t-1}+e^{k_t}\odot v_t$
  & $\mathrm{diag}(e^{-w})$, $w\in\R^d_{\ge0}$
  & $\lambda\in(0,1]$ \\
mLSTM, exp.\ gate\textsuperscript{i}\newline \citep{Beck2024}
  & $C_t=f_tC_{t-1}+i_tv_tk_t^\Tr$
  & $f_tI$, $f_t=\exp(\tilde f_t)$
  & $\lambda=f_t\in(0,\infty)$ \\
\addlinespace[4pt]
\midrule
\multicolumn{4}{@{}l@{}}{\scriptsize \textbf{$\rho<1$: loses separation, fails \Rthree; realizes the definite
automata, $L_{\max}=O(\log\tfrac1\eta)$}\textsuperscript{j}} \\
\rowcolor{black!6}
S4D\newline \citep{Gu2022}
  & $x_k=\bar Ax_{k-1}+\bar Bu_k$
  & $\exp(\Delta A)$, $A=-\exp(A_{\mathrm{Re}})+\mathrm{i}A_{\mathrm{Im}}$
  & $|\bar\lambda|=e^{-\Delta e^{A_{\mathrm{Re}}}}<1$ \\
DSS (DSS-exp)\newline \citep{Gupta2022}
  & $x_k=\bar Ax_{k-1}+\bar Bu_k$
  & $\exp(\Delta\Lambda)$, $\Lambda=-\exp(\Lambda_{\mathrm{re}})+\mathrm{i}\Lambda_{\mathrm{im}}$
  & $|\bar\lambda|=e^{-\Delta e^{\Lambda_{\mathrm{re}}}}<1$ \\
\rowcolor{black!6}
Mamba\newline \citep{Gu2024}
  & $h_t=\bar A_th_{t-1}+\bar B_tx_t$
  & $\exp(\Delta_tA)$, $A$ real diagonal, negative
  & $\lambda\in(0,1)$ \\
Mamba$^{-}$\newline \citep{Grazzi2025}
  & $h_t=\mathrm{Diag}(2s_t-1)h_{t-1}+\bar B_tx_t$
  & $\mathrm{Diag}(2s_t-1)$, $s_t=\exp(-\Delta_t\odot e^{w})$
  & $\lambda\in(-1,1)$ \\
\rowcolor{black!6}
Mamba-2 / SSD\newline \citep{Dao2024}
  & $h_t=a_th_{t-1}+b_t$
  & $a_tI$, $a_t$ scalar per head
  & $\lambda=a_t\in(0,1)$ \\
Mamba-3\newline \citep{Lahoti2025}
  & $h_t=e^{\Delta_tA_t}R_th_{t-1}+\Delta_tB_tx_t$
  & $e^{\Delta_tA_t}R_t$, $A_t<0$ scalar, $R_t$ block-diagonal $2\times2$ rotations
  & $|\lambda|=e^{\Delta_tA_t}\in(0,1)$ \\
\rowcolor{black!6}
LRU\newline \citep{Orvieto2023}
  & $x_k=\Lambda x_{k-1}+\gamma\odot Bu_k$
  & $\Lambda=\mathrm{diag}(\lambda_j)$, $\lambda_j=e^{-e^{\nu_j}+\mathrm{i}e^{\theta_j}}$
  & $|\lambda_j|=e^{-e^{\nu_j}}<1$ \\
gated linear attention\newline \citep{Yang2024a}
  & $S_t=G_t\odot S_{t-1}+k_t^\Tr v_t$
  & $\mathrm{diag}(\alpha_t)$, $\alpha_t=\sigma(\cdot)^{1/\tau}$
  & $\lambda\in(0,1)$ \\
\rowcolor{black!6}
gated DeltaNet\textsuperscript{c}\newline \citep{Yang2024b}
  & $S_t=S_{t-1}\alpha_t(I-\beta_tk_tk_t^\Tr)+\beta_tv_tk_t^\Tr$
  & $\alpha_t(I-\beta_tk_tk_t^\Tr)$, $\alpha_t,\beta_t\in(0,1)$
  & $\alpha_t\{1^{(d-1)},1-\beta_t\}$, $\rho=\alpha_t$ \\
RetNet\newline \citep{Sun2023}
  & $s_n=As_{n-1}+K_n^\Tr v_n$
  & $A=\Lambda(\gamma e^{\mathrm{i}\theta})\Lambda^{-1}$, $\gamma=1-2^{-5-\mathrm{arange}(0,h)}$
  & $|\lambda|=\gamma<1$ \\
\rowcolor{black!6}
RWKV-7\newline \citep{Peng2025}
  & $S_t=S_{t-1}G_t+v_t^\Tr k_t$
  & $G_t=\mathrm{diag}(w_t)-\hat\kappa_t^\Tr(a_t\odot\hat\kappa_t)$, $w_t\in(e^{-e^{-1/2}},1)$, $a_t\in(0,1)$
  & $\lambda\in(-1,1)$ (their Theorem~1) \\
HGRN / HGRN2\newline \citep{Qin2023,Qin2024}
  & $h_t=\lambda_t\odot e^{\mathrm{i}\theta}\odot h_{t-1}+(1-\lambda_t)\odot c_t$
  & $\mathrm{diag}(\lambda_te^{\mathrm{i}\theta})$, $\lambda_t=\gamma_k+(1-\gamma_k)\mu_t$
  & $|\lambda|\in(\gamma_k,1)$ \\
\rowcolor{black!6}
Hawk / Griffin RG-LRU\newline \citep{De2024}
  & $h_t=a_t\odot h_{t-1}+\sqrt{1-a_t^2}\odot(i_t\odot x_t)$
  & $\mathrm{diag}(a_t)$, $a_t=a^{cr_t}$, $a=\sigma(\Lambda)$
  & $\lambda\in(0,1)$ \\
QRNN\newline \citep{Bradbury2017}
  & $h_t=f_t\odot h_{t-1}+(1-f_t)\odot z_t$
  & $\mathrm{diag}(f_t)$, $f_t=\sigma(\cdot)$
  & $\lambda\in(0,1)$ \\
\rowcolor{black!6}
GILR\textsuperscript{d}\newline \citep{Martin2018}
  & $h_t=g_t\odot h_{t-1}+(1-g_t)\odot i_t$
  & $\mathrm{diag}(g_t)$, $g_t=\sigma(Ux_t+b_g)$
  & $\lambda\in(0,1)$ \\
minGRU / minLSTM\textsuperscript{d}\newline \citep{Feng2024}
  & $h_t=(1-z_t)\odot h_{t-1}+z_t\odot\tilde h_t$
  & $\mathrm{diag}(1-z_t)$ resp.\ $\mathrm{diag}(f'_t)$, gates read $x_t$ only
  & $\lambda\in(0,1)$ \\
\rowcolor{black!6}
MLGRU\textsuperscript{d}\newline \citep{Zhu2025}
  & $h_t=f_t\odot h_{t-1}+(1-f_t)\odot c_t$
  & $\mathrm{diag}(f_t)$, $f_t=\sigma(\cdot)$, ternary weights
  & $\lambda\in(0,1)$ \\
PD-SSM\textsuperscript{e}\newline \citep{Terzic2025}
  & $x_t=P(u_t)D(u_t)x_{t-1}+Bu_t$
  & $P$ column one-hot, $D$ complex diagonal, $|D|=\sigma(\cdot)$
  & $\rho\le\max_j|d_j|<1$ \\
\end{longtable}

{\raggedright\scriptsize \textsuperscript{a} No surveyed architecture targets this branch; unconstrained ones (notes~f and~i) can enter it. \quad \textsuperscript{b} Upper bound of Proposition~\ref{prop:neutralbits}, proven for transitions that are isometries (unitary, orthogonal, identity). For the other rows of the block, only the failure of \Rtwo is proven; a nonnormal transition with $\rho=1$ can amplify perturbations polynomially and shorten the horizon further, e.g.\ to $O(\eta^{-1/2})$ for a $2\times2$ Jordan block. $\{\rho=1\}$ has empty interior. \quad \textsuperscript{c} Moves to the $\rho=1$ branch if $\alpha_t$ attains $1$. \quad \textsuperscript{d} Gates read the input only, which makes them scannable. \quad \textsuperscript{e} $P$ alone has $\rho=1$; the factor $D$ makes the product contract. \quad \textsuperscript{f} The paper does not constrain $\mathrm{Re}\,\lambda$: the initialization is stable, but training can reach $\rho\ge1$. \quad \textsuperscript{g} $|\lambda|=1$ where $A_{kk}=0$, which $\mathrm{ReLU}$ allows. \quad \textsuperscript{h} $\lambda=1$ where $k_{t,j}=0$; the exact update $I-\varepsilon_tk_tk_t^\Tr$ it approximates is on the $\rho=1$ branch. \quad \textsuperscript{i} The exponential forget gate is unconstrained, like S4 and S5 (note~f): it attains $1$ and can exceed it, which enters the $\rho>1$ branch. With the sigmoid forget gate, $f_t\in(0,1)$ and mLSTM belongs to the $\rho<1$ branch. \quad \textsuperscript{j} Upper bound of Corollary~\ref{cor:precision}, under the stronger hypothesis $\max_x\|A_x\|_\op<1$; the definite automata are attained by Proposition~\ref{prop:definiteattained}.
\par\normalsize}
\end{center}

\subsection{Memory Recurrent Units as transition tables}
\label{app:mru}
Memory Recurrent Units \citep{DeGeeter2026} start from a multistable recurrence and assume that it converges between two time steps, so that the state only ever sits on an attractor. Read on those attractors, each input symbol acts as a transition table, and the parallel scan composes tables exactly.

\textbf{The BMRU.} With state dimension $d$ and all operations coordinatewise, the Bistable Memory Recurrent Unit (BMRU) of \citet{DeGeeter2026} computes from the input
\begin{equation}
  \hat h_t=W_xx_t+b_x,\qquad \beta_t=|W_\beta x_t+b_\beta|,\qquad z_t=\mathrm{H}\bigl(|\hat h_t|-\beta_t\bigr),
\end{equation}
with $\mathrm{H}$ the Heaviside step, and updates
\begin{equation}
  h_t=z_t\odot\mathrm{sign}(\hat h_t)\odot\alpha+(1-z_t)\odot h_{t-1},
  \label{eq:bmru}
\end{equation}
with a learnable $\alpha\in\R^d_{>0}$.

\textbf{As a transition table.} On the attractors $\{-\alpha_i,+\alpha_i\}$ of coordinate $i$, a symbol acts in one of three ways: it sets the coordinate to $+\alpha_i$ or to $-\alpha_i$ when $z_i=1$, and keeps it when $z_i=0$. These are the three elements of the flip-flop monoid FF of Section~\ref{sec:experiments}, so the transition monoid of a BMRU layer on its attractors is a submonoid of $\mathrm{FF}^d$, one flip-flop per coordinate. Each table costs $2$ bits per coordinate, and composition is exact: a later set overrides, a later keep returns the earlier element. The BMRU is therefore scan-representable in the sense of Definition~\ref{def:scanrep} with $b=2d$ and $c=O(d)$, and the scanned trajectory is the sequential one.

The update equation is affine in the state, with $A_x=\mathrm{diag}(1-z)$, and a keep symbol gives $A_x=I$: read as an update on $\R^d$, it lies on the $\rho=1$ branch of Theorem~\ref{thm:affine} and does not restore. Its length independence comes from the state being held on the attractors, that is, stored as an index, which is the restoration the approximation assumed and which the implementation must then enforce. The tables act coordinate by coordinate and never read the state: no symbol exchanges $+\alpha_i$ and $-\alpha_i$, so a single layer realizes no nontrivial group, not even $\mathbb{Z}_2$.

\textbf{The CMRU.} \citet{Brandoit2026} add a term $\omega\,h_{t-1}$, with $\omega\in[0,1]$, to the set branch of \eqref{eq:bmru}. For $0<\omega<1$ the reachable states form an infinite set, and at $\omega=1$ coordinate $i$ moves on the unbounded lattice $\alpha_i\mathbb{Z}$, so \Ttwo fails; confining each coordinate to $\{-k\alpha_i,\dots,+k\alpha_i\}$ with a barrier at each end makes every symbol a table on these $n=2k+1$ states again, scan-representable with $b=d\,n\lceil\log_2n\rceil$ and $c=O(dn)$.

\section{From finite precision to the perturbation model}
\label{app:finite_precision}

Definition~\ref{def:noise} replaces finite-precision arithmetic by an arbitrary perturbation of size at most $\eta$ at every step. This appendix explains why this is a correct model: $\eta$ is set by the number format, every run of a real machine is an $\eta$-trajectory, and asking for correctness under every such perturbation is what makes a guarantee robust to the hardware and to the implementation.

\textbf{Where $\eta$ comes from.} A machine stores states, parameters, and intermediate results on a fixed number of bits. It can therefore represent only a finite set $\mathcal{Q}\subset\mathcal{H}$ of points; write $[\,\cdot\,]_\mathcal{Q}$ for the rounding that replaces an object by a representable one, e.g.\ the nearest point of $\mathcal{Q}$ under round-to-nearest. Since a finite set cannot contain every exact result, rounding moves points in general. It need not move the points that a particular model visits: an affine recurrence whose coefficients and codes are $0$ and $1$, such as a permutation matrix acting on one-hot vectors, computes every product and sum exactly, in any order, and its rounding error is zero. Such exact structures are the exception for learned models. Gates produced by sigmoids and exponentials of trained weights are generically not exactly representable, the moduli of PD-SSM lie in $(0,1)$ and those of Mamba$^{-}$ in $(-1,1)$, and the rotation of Appendix~\ref{app:learning} is not representable in any format. We model this generic case by a per-step bound $\eta>0$.

\textbf{What a step injects.} The machine does not run \eqref{eq:rnn} but its rounded version
\begin{equation}
  h_t = \tilde\Phi_{x_t}(h_{t-1})
   \coloneq \Bigl[\,[\Phi]_\mathcal{Q}\bigl(h_{t-1},[x_t]_\mathcal{Q}\bigr)\Bigr]_\mathcal{Q},
  \label{eq:rounded}
\end{equation}
where $[x_t]_\mathcal{Q}$ is the rounded input embedding of the symbol $x_t$, $[\Phi]_\mathcal{Q}$ is the update with rounded parameters, evaluated in finite precision, and the outer bracket rounds the result. The per-step perturbation
\[
  e_t \coloneq \tilde\Phi_{x_t}(h_{t-1})-\Phi_{x_t}(h_{t-1})
\]
is the error made by this step alone. It has three sources: the rounding of the parameters and of the input, the rounding of the intermediate operations, and the rounding of the output. It does not include the error already carried by $h_{t-1}$, since both maps are evaluated at the same $h_{t-1}$. Each step therefore adds a fresh, bounded error. Whatever the order of the operations, the machine stores $h_t\in\mathcal{Q}$, and we write $e_t \coloneq h_t-\Phi_{x_t}(h_{t-1})$ for the error of step $t$.

\begin{lemma}[Every machine run is an $\eta$-trajectory]
\label{lem:machine}
Assume that $\Sig$ is finite, that $\enc(Q)\subseteq\mathcal{Q}$, and that no overflow occurs. Then $\eta_{\mathcal{Q}} \coloneq \max\|e_t\|$, over all steps, words, and initial states, exists and does not depend on $t$, on $|w|$, or on the input, and every run of the machine is an $\eta$-trajectory for every $\eta\ge\eta_{\mathcal{Q}}$. Hence, for every $\eta>0$ with $\eta\ge\eta_{\mathcal{Q}}$, a guarantee over all $\eta$-trajectories, \Tone in particular, holds for every run.
\end{lemma}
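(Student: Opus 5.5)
The key observation is that the machine only ever evaluates the update at representable points, with a representable input embedding, and there are finitely many of each; the maximum in the definition of $\eta_{\mathcal{Q}}$ is therefore a maximum over a finite set. First, every stored hidden state lies in $\mathcal{Q}$. The initial state $h_0=\enc(q)$ does by the hypothesis $\enc(Q)\subseteq\mathcal{Q}$. By \eqref{eq:rounded}, each subsequent $h_t$ is the output of the outer rounding $[\,\cdot\,]_\mathcal{Q}$. The no-overflow assumption is what guarantees that this rounding returns a genuine finite point of $\mathcal{Q}$ rather than an infinity or NaN. Since $\mathcal{Q}$ is the set of points representable on a fixed number of bits, it is finite.

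Next, I write the per-step error as a function of the pair it is evaluated on. The error $e_t=\tilde\Phi_{x_t}(h_{t-1})-\Phi_{x_t}(h_{t-1})$ depends only on that pair, with $h_{t-1}\in\mathcal{Q}$ and $x_t\in\Sig$. I define
\[
E(h,x)\coloneq\tilde\Phi_x(h)-\Phi_x(h),\qquad (h,x)\in\mathcal{Q}\times\Sig .
\]
This requires $\tilde\Phi_x$ to be a deterministic function of $(h,x)$: a fixed order of operations, fixed rounded parameters, and a fixed rounded embedding $[x]_\mathcal{Q}$. If the implementation may reassociate operations, for instance through a different scan tree, I extend the domain to $\mathcal{Q}\times\Sig\times\mathcal{O}$, where $\mathcal{O}$ is the finite set of admissible evaluation orders for one step's fixed arithmetic expression. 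This is the step I expect to need the most care. The statement's phrase ``whatever the order of the operations'' must be read as ranging over finitely many orders, and I would make this explicit. Moreover, under a scan the state is not literally computed as $\tilde\Phi_{x_t}(h_{t-1})$. In that case I would instead define $e_t\coloneq h_t-\Phi_{x_t}(h_{t-1})$ directly, as the paper does in its final sentence before the lemma. Finiteness still holds because both $h_t$ and $h_{t-1}$ range over the finite set $\mathcal{Q}$.

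With $\Sig$ finite, the domain of $E$ is finite, so
\[
\eta_{\mathcal{Q}}\coloneq\max_{(h,x)}\|E(h,x)\|
\]
exists as a maximum, not merely a supremum. The same holds for the variant $\max\{\|h'-\Phi_x(h)\| : h,h'\in\mathcal{Q},\ x\in\Sig\}$. The maximum over all steps, words and initial states is attained on this finite set. It is bounded by this quantity because every step's pair lies in the domain, and it equals it restricted to the pairs that are actually reached. Either way it is a constant determined by $\mathcal{Q}$, $\Sig$ and the parameters alone, so it does not depend on $t$, on $|w|$, or on the input.

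It follows that every run satisfies $\|h_t-\Phi_{x_t}(h_{t-1})\|=\|e_t\|\le\eta_{\mathcal{Q}}\le\eta$ at every step, starting from $\enc(q)$. By Definition~\ref{def:noise}, such a run is an $\eta$-trajectory of $w$ from $\enc(q)$. Any property quantified over all $\eta$-trajectories, in particular \Tone, therefore holds for every machine run whenever $\eta>0$ and $\eta\ge\eta_{\mathcal{Q}}$. The condition $\eta>0$ matters only because Definition~\ref{def:realization} requires it, which covers the degenerate case $\eta_{\mathcal{Q}}=0$.
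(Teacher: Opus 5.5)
Your proof is correct and is essentially the paper's: the paper also notes that $h_{t-1},h_t\in\mathcal{Q}$ and $x_t\in\Sigma$, so every $e_t$ lies in the finite set $\{q-\Phi_x(q') \mid q,q'\in\mathcal{Q},\ x\in\Sigma\}$, hence the maximum exists, and monotonicity in $\eta$ finishes the proof. Your fallback definition $e_t= h_t-\Phi_{x_t}(h_{t-1})$ is the one the argument needs, and it makes your detour through a deterministic $\tilde\Phi_x$ and a finite set of evaluation orders unnecessary, since finiteness then comes from the two endpoints $h_{t-1},h_t\in\mathcal{Q}$ alone.
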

\begin{proof}
Since $h_{t-1},h_t\in\mathcal{Q}$ and $x_t\in\Sig$, the error $e_t$ lies in the finite set $\{q-\Phi_x(q') \mid q,q'\in\mathcal{Q},\,x\in\Sig\}$, so the maximum exists. A sequence whose steps deviate by at most $\eta_{\mathcal{Q}}$ deviates by at most any $\eta\ge\eta_{\mathcal{Q}}$.
\end{proof}

The bound $\eta_{\mathcal{Q}}$ can be zero, as in the exact example above; the results of this paper then apply at every $\eta>0$, and in particular a guarantee still holds for the actual runs. In floating point, rounding errors are relative, so $\eta_{\mathcal{Q}}$ scales with the size of the states visited.

How small $\eta$ is depends on the format and on the evaluation: for the sequential evaluation \eqref{eq:rounded}, $e_t$ is the rounding error of a single step, and a different order of the operations, such as a parallel scan, changes $e_t$ but not the conclusion.

\textbf{Why every perturbation, and not the actual rounding.} The rounding errors of a run are not a property of the model. They depend on the hardware, through the number format, and on the implementation, through the order of the operations: floating-point addition and multiplication are not associative, so two runs of the same model on the same input, one sequential and one with a parallel scan, are not bit-identical. A particular rounding pattern can even help: rounding turns the recurrence into a map on the finite set $\mathcal{Q}$, which can hold a few states at the scale of the rounding error, as the dead bands and limit cycles of recursive digital filters do, in fixed point and in floating point \citep{Parker1971,Kaneko1973}. Such behavior is not robust, since it can disappear when the number format or the order of the operations changes. We therefore ask for correctness under every perturbation of size at most $\eta$, which is what quantifying over all $\eta$-trajectories does (Definition~\ref{def:realization}). This makes the guarantee a worst-case one, and the impossibility results should be read accordingly: Theorem~\ref{thm:affine} states that no affine recurrence is \emph{guaranteed} correct at every length under perturbations of size $\eta$, not that every run of every affine model fails. A model whose rounding happens to be harmless, or zero, can still be correct on its actual runs, but that correctness is not robust: it can be lost when the number format, the order of the operations or the parameters change (Appendix~\ref{app:learning}).

\textbf{Sequential versus parallel evaluation.} The order of the operations alone is enough to change the trajectory. We evaluate each affine baseline of Section~\ref{sec:tworates} twice on the same word: once with the sequential recurrence $h_t=A_{x_t}h_{t-1}+b_{x_t}$, which rounds the products and sums from left to right, and once with the parallel scan used for training, which combines the pairs $(A_{x_t},b_{x_t})$ along a tree of depth $\log_2|w|$ and therefore rounds the same operations in a different order. The two runs share the parameters, the input and the coefficients mapping $(A_{x_t},b_{x_t})$, so they differ only in this order. By Lemma~\ref{lem:machine}, both are $\eta$-trajectories of the same word from the same $h_0$, and the gap $\|h_t^{\mathrm{par}}-h_t^{\mathrm{seq}}\|_\infty$ shows what the recurrence does with the errors injected along the way (Figure~\ref{fig:seqpar}).

\begin{figure}[t]
  \centering
  \includegraphics[width=\linewidth]{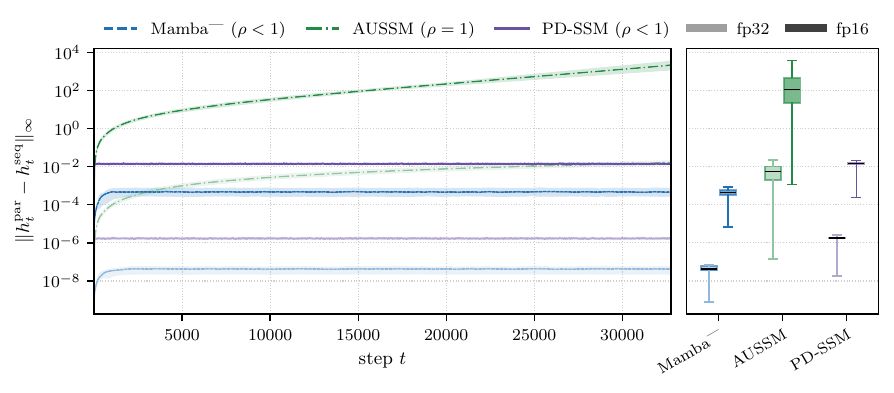}
  \caption{\textbf{Sequential and parallel evaluations of the same model may follow different trajectories.} Gap $\|h_t^{\mathrm{par}}-h_t^{\mathrm{seq}}\|_\infty$ between the parallel scan and the sequential recurrence, with the same coefficients $(A_{x_t},b_{x_t})$, computed in fp32 and rounded to the format (light: fp32; dark: fp16; complex states stored as two real arrays). Untrained models of width $32$, $64$ words of length $2^{15}$ over five letters, $5$ seeds. \textbf{Left:} gap at each step (mean over words and seeds; band: range of the per-seed means). \textbf{Right:} distribution over all steps and seeds (whiskers: minimum and maximum). Absolute values are not comparable across models; the trend in $t$ is.}
  \label{fig:seqpar}
\end{figure}

Three observations follow. First, the gap is nonzero from the first steps, in every model and format. Second, its evolution follows the spectral radius, as Theorem~\ref{thm:affine} predicts. With $\rho<1$ (Mamba$^{-}$, PD-SSM), earlier errors are forgotten as fast as new ones arrive, and the gap saturates. With $\rho=1$ (AUSSM), every error is carried forward, and the gap grows until, in fp16, it reaches the size of the state itself. Third, the number format sets the level of the curves and the recurrence sets their shape: a coarser format shifts every curve up. A guarantee that relied on the rounding pattern of one evaluation would not transfer to the other; a guarantee over all $\eta$-trajectories covers both.

\section{Learning versus finite precision}
\label{app:learning}

In practice, the main obstacle to length generalization is learning: a model must find, by gradient descent on short sequences, parameters that implement the target transitions. Our results do not address this obstacle directly. They address a different one, which remains even when learning succeeds.

\textbf{Three requirements.} Tracking at every length needs a target that is \emph{expressible} in exact arithmetic (Appendix~\ref{app:related}), a realization that is \emph{robust} to a perturbation at every step (Theorems~\ref{thm:equiv} to~\ref{thm:affine}), and parameters that are \emph{found} by training. 

\textbf{A perfectly trained model still drifts.} Figure~\ref{fig:drift} isolates the robustness failure by removing learning altogether. The target is $\mathbb{Z}_7$ under the constant word, and the recurrence is the $\rho=1$ affine realization that a perfectly trained model would hold: the rotation of $\R^2$ by $2\pi/7$, whose seven code points are the vertices of a regular heptagon. We round each entry of this rotation once to each number format, and then run the recurrence in fp64, so that no rounding occurs after the weights are set. Rounding the cosine and the sine separately leaves a rotation by a slightly wrong angle, scaled by a radius slightly different from one. The phase error therefore grows linearly with $t$, and the model decodes the wrong state as soon as it exceeds $\pi/7$, half the angle between two code points. The radius grows or decays geometrically, so the code points themselves leave the bounded region that restoration would require.

\begin{figure}[ht]
  \centering
  \includegraphics[width=\linewidth]{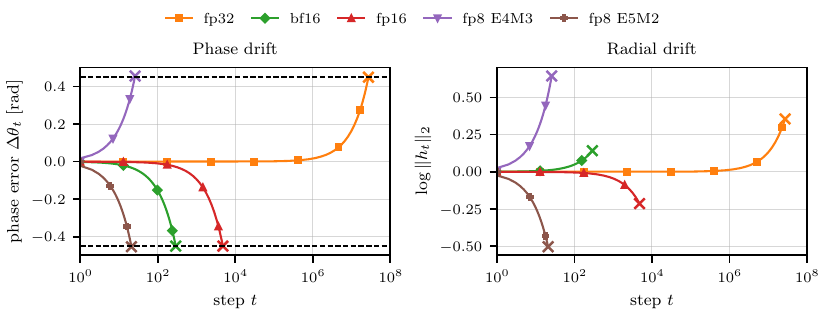}
  \caption{\textbf{A perfectly trained $\rho=1$ recurrence still drifts at finite precision.} The exact rotation by $2\pi/7$, which realizes $\mathbb{Z}_7$ under the constant word, is rounded once to each format and then iterated in fp64 from $h_0=(1,0)$. \textbf{Left:} phase error $\Delta\theta_t$ with respect to the exact run; dashed lines: $\pm\pi/7$, beyond which the readout returns a neighbouring state. \textbf{Right:} log-norm of the hidden state. Crosses mark the first step at which the readout is wrong; every format fails at a finite length, fp32 included.}
  \label{fig:drift}
\end{figure}

First, the drift involves no training error: the weights are the exact solution, up to the resolution of the format. Second, it involves no rounding of the hidden state: the run itself is in fp64, and rounding at every step, or reassociating the products under a scan, only adds to it (Figure~\ref{fig:seqpar}). Third, a finer format moves the failure but does not remove it. A better choice of representable weights can move the failing length, but no representable rotation by $2\pi/7$ exists in any format.

\section{Proofs of theorems and lemmas}
\label{app:proofs}

This appendix collects the proofs deferred from the main text, together with the additional definitions, lemmas, and propositions.

\subsection{Length-independent realization}
\label{app:proofs-sub}

Condition \Ttwo links boundedness to a length-independent precision cost, because a machine stores its state on a fixed finite set of points (Appendix~\ref{app:finite_precision}).

\begin{proposition}[Boundedness is length-independent precision]
\label{prop:precision}
A single finite set of points represents every state of $U^\star$ to within $\eta$ if and only if $U^\star$ is bounded. In that case, $\lceil d\log_2(1+2\diam(U^\star)/\eta)\rceil$ bits suffice, at every step and every length.
\end{proposition}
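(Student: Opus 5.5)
The plan is to prove the two directions of the equivalence separately and then check the bit count with a covering-number estimate. "Represents every state of $U^\star$ to within $\eta$" means there is a finite set $P\subset\mathcal{H}$ such that every $u\in U^\star$ has some $p\in P$ with $\|u-p\|\le\eta$.

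\emph{Only if.} Suppose such a finite $P$ exists. Then $U^\star\subseteq P\oplus\Ball$, which is a finite union of bounded balls. Hence $U^\star$ is bounded, with $\diam(U^\star)\le\diam(P)+2\eta$. This direction is immediate.

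\emph{If.} Suppose $U^\star$ is bounded, with $D\coloneq\diam(U^\star)<\infty$. If $U^\star$ is empty or a single point, the claim is trivial. Otherwise, fix a point $u_0\in U^\star$, so that $U^\star\subseteq u_0\oplus\bar B_D$. Take a maximal $\eta$-separated subset $P\subseteq U^\star$, meaning its points are pairwise more than $\eta$ apart. Maximality gives that every point of $U^\star$ lies within $\eta$ of $P$.

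To bound $|P|$, use the standard volume argument.
\begin{itemize}
\item The balls $p\oplus\bar B_{\eta/2}$ for $p\in P$ are pairwise disjoint.
\item They all lie in $u_0\oplus\bar B_{D+\eta/2}$.
\item Volumes scale like the $d$-th power of the radius for any fixed norm on $\R^d$.
\end{itemize}
Therefore
\[
|P|\le\Bigl(\tfrac{D+\eta/2}{\eta/2}\Bigr)^d=(1+2D/\eta)^d .
\]
Indexing $P$ then takes $\lceil\log_2|P|\rceil\le\lceil d\log_2(1+2\diam(U^\star)/\eta)\rceil$ bits.

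\emph{Length independence.} By Definition~\ref{def:noise}, $U^\star$ already collects the endpoints of $\eta$-trajectories of words of every length and from every code point. The same set $P$ therefore serves at every step $t$ and every length $|w|$, so the bit count does not depend on either.

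The main obstacle is the volume comparison, which must hold for an arbitrary fixed norm rather than only the Euclidean one. I would handle it by noting that Lebesgue measure is translation-invariant and scales as $r^d$ under dilation. This makes $\mathrm{vol}(\bar B_r)=r^d\,\mathrm{vol}(\bar B_1)$ for every norm. The ratio of the volumes then gives the bound with no norm-dependent constant, which matches the stated formula exactly.
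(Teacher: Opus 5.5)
Your proof is correct and follows the paper's argument essentially step for step. The easy direction uses $U^\star\subseteq P\oplus\Ball$, and the converse takes a maximal $\eta$-separated subset of $U^\star$ and compares the volumes of the disjoint $\eta/2$-balls against a ball of radius $\diam(U^\star)+\eta/2$, in any norm by the $r^d$ scaling of Lebesgue measure. One point is left implicit by both you and the paper: that a maximal separated set exists. This follows because the same volume bound caps the size of every $\eta$-separated subset, so one of maximum cardinality can be taken.
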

\begin{proof}
If a finite set $P$ represents $U^\star$ to within $\eta$, then $U^\star\subseteq P\oplus\Ball$, which is bounded. Conversely, let $U^\star$ be bounded, and let $P\subseteq U^\star$ be a maximal set of points pairwise more than $\eta$ apart. By maximality, every point of $U^\star$ lies within $\eta$ of $P$. The balls of radius $\eta/2$ centered at the points of $P$ are disjoint and lie in a ball of radius $\diam(U^\star)+\eta/2$, so comparing volumes gives $|P|\le(1+2\diam(U^\star)/\eta)^d$ in any norm, and indexing $P$ takes $\lceil\log_2|P|\rceil$ bits. Neither $U^\star$ nor $\eta$ depends on $t$ or on $|w|$, so the same set serves at every step and every length.
\end{proof}

For Theorem~\ref{thm:equiv}, necessity is proved with $U=U^\star$, by applying \Tone and \Ttwo to a word and to its extension by one symbol (Lemmas~\ref{lem:code} to~\ref{lem:transition}); sufficiency is an induction on the length of the word.

\begin{lemma}[The encoding must be a code]
\label{lem:code}
Under \Tone, $\enc(Q)\subseteq U^\star$ and $\dec\circ\enc=\mathrm{id}_Q$; in particular $\enc$ is injective.
\end{lemma}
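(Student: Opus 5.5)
The plan is to apply \Tone to the empty word and read off both claims directly from Definition~\ref{def:noise}.

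\emph{Membership in $U^\star$.} Fix $q\in Q$ and take $w=\varepsilon$, so $|w|=0$. The one-point sequence $(h_0)$ with $h_0=\enc(q)$ is an $\eta$-trajectory of $\varepsilon$ from $\enc(q)$. The condition $h_t-\Phi_{x_t}(h_{t-1})\in\Ball$ ranges over $t=1,\dots,0$ and is therefore vacuous. Hence its endpoint $h_{|w|}=h_0=\enc(q)$ belongs to $U^\star$ by the definition of the $\eta$-reachable set, which gives $\enc(Q)\subseteq U^\star$.

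\emph{The identity $\dec\circ\enc=\mathrm{id}_Q$.} Apply \Tone to the same triple $(q,\varepsilon,(h_0))$. It gives $\dec(h_0)=\delta_\varepsilon(q)$. Since $\delta_\varepsilon=\mathrm{id}_Q$ by convention, this reads $\dec(\enc(q))=q$. Implicit in \Tone is that $\dec$ is defined at $h_0$, so $\dec$ is defined on all of $\enc(Q)$. As $q$ was arbitrary, $\dec\circ\enc=\mathrm{id}_Q$.

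\emph{Injectivity.} This is the standard consequence of having a left inverse: if $\enc(q)=\enc(q')$, then applying $\dec$ to both sides gives $q=q'$.

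There is no real obstacle. The only point needing care is the convention that a length-zero word admits exactly the trivial $\eta$-trajectory and that $\delta_\varepsilon=\mathrm{id}_Q$, both of which are fixed in Section~\ref{sec:formalization} and Definition~\ref{def:noise}.
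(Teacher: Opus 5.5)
Your proposal is correct and follows the paper's proof: both apply \Tone to the empty word with the one-point trajectory $h_0=\enc(q)$, which places $\enc(q)$ in $U^\star$ and gives $\dec(\enc(q))=\delta_\varepsilon(q)=q$. You also spell out injectivity via the left inverse, which the paper leaves implicit; the membership $\enc(Q)\subseteq U^\star$ in fact holds even without \Tone.
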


\begin{proof}
Take $w=\varepsilon$. The one-point sequence $h_0=\enc(q)$ is an $\eta$-trajectory of $\varepsilon$ from $\enc(q)$, so $\enc(q)\in U^\star$ and \Tone gives $\dec(\enc(q))=\delta_\varepsilon(q)=q$.
\end{proof}

\begin{lemma}[Restoration on the reachable set]
\label{lem:restorenec}
\emph{(i)} For every $\Phi$ and every encoding, $\Phi_x(U^\star)\oplus\Ball\subseteq U^\star$ for every $x\in\Sig$. \emph{(ii)} Hence, under \Ttwo, $U^\star$ satisfies \Rtwo.
\end{lemma}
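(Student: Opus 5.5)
The proof goes directly from Definition~\ref{def:noise}, by extending $\eta$-trajectories by one step.

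For (i), fix $x\in\Sig$ and a point of $\Phi_x(U^\star)\oplus\Ball$. Write it as $\Phi_x(u)+e$ with $u\in U^\star$ and $\|e\|\le\eta$. By definition of $U^\star$, there are $q\in Q$, a word $w=x_1\cdots x_{|w|}$ and an $\eta$-trajectory $(h_t)_{t=0}^{|w|}$ of $w$ from $\enc(q)$ with $h_{|w|}=u$. Extend this sequence by setting $h_{|w|+1}\coloneq\Phi_x(u)+e$.

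The extended sequence is an $\eta$-trajectory of the word $wx$ from $\enc(q)$. The first $|w|$ steps are unchanged. The new step satisfies
\[
h_{|w|+1}-\Phi_x(h_{|w|})=e\in\Ball.
\]
Hence its endpoint $\Phi_x(u)+e$ lies in $U^\star$, which proves $\Phi_x(U^\star)\oplus\Ball\subseteq U^\star$. Nothing here uses properties of $\Phi$, $\enc$ or $\dec$, so (i) holds for every $\Phi$ and every encoding, as claimed.

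For (ii), \Rtwo asks for two things: that $U^\star$ be bounded, and that it satisfy the inclusion for every $x$. Boundedness is exactly \Ttwo, and the inclusion is (i). The only point to keep in mind for the later use in Theorem~\ref{thm:equiv} is that $U^\star$ is a legitimate candidate region, i.e.\ $U^\star\supseteq\enc(Q)$. This follows from the empty-word trajectory, as in Lemma~\ref{lem:code}, so no further argument is needed.

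The main obstacle is purely bookkeeping: checking that the concatenated sequence meets the indexing convention of Definition~\ref{def:noise} for the word $wx$. In particular, the last symbol $x$ must drive step $|w|+1$, consistent with $\Phi_{wx}=\Phi_x\circ\Phi_w$.
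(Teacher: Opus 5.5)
Your proof is correct and is essentially the paper's argument: you append the step $\Phi_x(u)+e$ to an $\eta$-trajectory of $w$ ending at $u$, which gives an $\eta$-trajectory of $wx$ from the same code point, and for (ii) you combine this inclusion with the boundedness given by \Ttwo. Your aside that $\enc(Q)\subseteq U^\star$ follows from the empty-word trajectory is also right, and it needs no appeal to \Tone.
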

\begin{proof}
(i) Let $u\in U^\star$, $x\in\Sig$ and $\|e\|\le\eta$. Choose $q$, $w$ and an $\eta$-trajectory $(h_t)_{t=0}^{|w|}$ of $w$ from $\enc(q)$ with $h_{|w|}=u$, and append $h_{|w|+1} \coloneq \Phi_x(u)+e$. Since $\|h_{|w|+1}-\Phi_x(h_{|w|})\|=\|e\|\le\eta$, the extended sequence is an $\eta$-trajectory of $wx$ from $\enc(q)$, so its endpoint lies in $U^\star$. As $u$, $x$ and $e$ were arbitrary, $\Phi_x(U^\star)\oplus\Ball\subseteq U^\star$ for every $x$. The inclusion holds for any recurrence. (ii) \Rtwo asks for a bounded region that absorbs one perturbation step: (i) gives the inclusion, and \Ttwo gives the boundedness.
\end{proof}

\begin{lemma}[The step must implement the transition]
\label{lem:transition}
Under \Tone, $\dec(\Phi_x(u)+e)=\delta_x(\dec(u))$ for every $x\in\Sig$, every $u\in U^\star$ and every $\|e\|\le\eta$.
\end{lemma}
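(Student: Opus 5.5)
The plan is to mirror the proof of Lemma~\ref{lem:restorenec}(i): extend a witnessing trajectory by one perturbed step, then apply \Tone twice, once to the original word and once to its extension.

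Fix $x\in\Sig$, $u\in U^\star$ and $e$ with $\|e\|\le\eta$. By the definition of $U^\star$ (Definition~\ref{def:noise}), choose $q\in Q$, a word $w$ and an $\eta$-trajectory $(h_t)_{t=0}^{|w|}$ of $w$ from $\enc(q)$ with $h_{|w|}=u$.

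\begin{airy}
\item Apply \Tone to this trajectory. It gives $\dec(u)=\delta_w(q)$.
\item Append $h_{|w|+1}\coloneq\Phi_x(u)+e$. As in Lemma~\ref{lem:restorenec}(i), the extended sequence is an $\eta$-trajectory of the word $wx$ from $\enc(q)$, because $\|h_{|w|+1}-\Phi_x(h_{|w|})\|=\|e\|\le\eta$.
\item Apply \Tone to the extended trajectory. It gives $\dec(\Phi_x(u)+e)=\delta_{wx}(q)$.
\item By the convention $\delta_{wx}=\delta_x\circ\delta_w$, this equals $\delta_x(\delta_w(q))$, and by the first step that is $\delta_x(\dec(u))$.
\end{airy}

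The one point needing care is that $\dec(u)$ must be well defined independently of the witness $(q,w,(h_t))$ chosen for $u$. The argument handles this automatically: \Tone forces $\dec(u)=\delta_w(q)$ for every witness, and $\dec$ is a function, so all witnesses agree. There is no real obstacle. The lemma is a direct unfolding of the definitions, together with the closure of $\eta$-trajectories under one-step extension already used in Lemma~\ref{lem:restorenec}.
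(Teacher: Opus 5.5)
Your proposal is correct and matches the paper's proof exactly: you pick a witnessing $\eta$-trajectory for $u$, extend it by the perturbed step $\Phi_x(u)+e$, and apply (T1) to $w$ and to $wx$. You then conclude with $\delta_{wx}=\delta_x\circ\delta_w$.
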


\begin{proof}
Let $u\in U^\star$, $x\in\Sig$, $\|e\|\le\eta$, and take $q$, $w$ and an $\eta$-trajectory of $w$ from $\enc(q)$ with endpoint $u$, as in Lemma~\ref{lem:restorenec}. \Tone applied to $w$ gives $\dec(u)=\delta_w(q)$; the extended trajectory of $wx$ has endpoint $\Phi_x(u)+e$, so \Tone applied to $wx$ gives $\dec(\Phi_x(u)+e)=\delta_{wx}(q)=\delta_x(\delta_w(q))=\delta_x(\dec(u))$, using $\delta_{wx}(p)=\delta_x(\delta_w(p))$ for every $p$, which is immediate from the definition of $\delta_w$.
\end{proof}

\thmequiv*
\begin{proof}[Proof of Theorem~\ref{thm:equiv}]
\emph{Sufficiency.} We prove the stronger statement that correctness holds from \emph{every} $u\in U$ and not only from code points; \Tone is the case $u=\enc(q)$. Induction on $k \coloneq |w|$. For $k=0$ the trajectory is the single point $h_0=u\in U$ and $\dec(u)=\delta_\varepsilon(\dec(u))$. For the step, let $w\in\Sig^k$ and $x\in\Sig$. Any $\eta$-trajectory $(h_t)_{t=0}^{k+1}$ of $wx$ from $u$ truncates to an $\eta$-trajectory of $w$ from $u$, so the hypothesis gives $h_k\in U$ and $\dec(h_k)=\delta_w(\dec(u))$. Write $h_{k+1}=\Phi_x(h_k)+e$ with $\|e\|\le\eta$. Then $h_{k+1}\in\Phi_x(U)\oplus\Ball\subseteq U$ by \Rtwo, so $\dec(h_{k+1})$ is defined, and $\dec(h_{k+1})=\delta_x(\dec(h_k))$ by \Rthree applied at $h_k\in U$. Since $\dec(h_k)=\delta_w(\dec(u))$ and $\delta_{wx}(p)=\delta_x(\delta_w(p))$ for every $p$, $\dec(h_{k+1})=\delta_{wx}(\dec(u))$. Taking $u=\enc(q)$ and using \Rone gives \Tone; and the same induction keeps every $\eta$-trajectory from $\enc(Q)$ inside $U$, so $U^\star\subseteq U$, which is bounded by \Rtwo and gives \Ttwo. \emph{Necessity, and minimality.} Lemmas~\ref{lem:code}, \ref{lem:restorenec} and~\ref{lem:transition} give \Rone, \Rtwo and \Rthree on $U^\star$, which is bounded by \Ttwo, so $U=U^\star$ carries the triple. Minimality is the containment just established: any $U$ carrying a triple satisfies $U^\star\subseteq U$, since every $\eta$-trajectory from $\enc(Q)$ stays in $U$.
\end{proof}


\thmcells*
\begin{proof}[Proof of Theorem~\ref{thm:cells}]
($\Rightarrow$) Suppose $(\enc,\Phi,\dec)$ realizes $\Mtar$ with length independence at $\eta$, and take the region $U$ of Theorem~\ref{thm:equiv}, bounded by \Rtwo. Put $U_q \coloneq U\cap\dec^{-1}(q)$; since $\dec$ is a function, every $u\in U$ lies in exactly one $U_q$, namely $q=\dec(u)$, so $\{U_q\}_{q\in Q}$ partitions $U$, and $\enc(q)\in U_q$ by \Rone together with $U\supseteq\enc(Q)$. Let $u\in U_q$, $x\in\Sig$ and $\|e\|\le\eta$. Since $u\in U$, \Rtwo gives $\Phi_x(u)+e\in\Phi_x(U)\oplus\Ball\subseteq U$; and $\dec(u)=q$, so \Rthree gives $\dec(\Phi_x(u)+e)=\delta_x(q)$, i.e., $\Phi_x(u)+e\in U\cap\dec^{-1}(\delta_x(q))=U_{\delta_x(q)}$. As $u$ and $e$ were arbitrary, this is \eqref{eq:cellincl}. ($\Leftarrow$) Given such a $U$ and partition, \Rone is immediate: $\enc(q)\in U_q=U\cap\dec^{-1}(q)$ says exactly $\dec(\enc(q))=q$. \Rtwo follows by taking the union over $q$ of $\Phi_x(U_q)\oplus\Ball\subseteq U_{\delta_x(q)}\subseteq U$, with $U$ bounded by hypothesis. For \Rthree, let $u\in U$, $x\in\Sig$, $\|e\|\le\eta$, and put $q \coloneq \dec(u)$, so $u\in U_q$; the same inclusion gives $\Phi_x(u)+e\in U_{\delta_x(q)}=U\cap\dec^{-1}(\delta_x(q))$, so $\dec(\Phi_x(u)+e)=\delta_x(q)= \delta_x(\dec(u))$. Theorem~\ref{thm:equiv} then gives realization with length independence.
\end{proof}

\thmtworates*
\begin{proof}[Proof of Theorem~\ref{thm:tworates}(i)]
We first lift \eqref{eq:cellincl} from letters to words: for $w=w'x$, induction on $|w|$ gives $\Phi_w(U_q)\oplus\Ball=\Phi_x\big(\Phi_{w'}(U_q)\big)\oplus\Ball\subseteq\Phi_x\big(U_{\delta_{w'}(q)}\big)\oplus\Ball\subseteq U_{\delta_w(q)}$, since $\Phi_{w'}(U_q)\subseteq\Phi_{w'}(U_q)\oplus\Ball$. Any nonempty $S$ has $\diam(S\oplus\Ball)=\diam(S)+2\eta$, since $\|(s+e)-(s'+e')\|\le\|s-s'\|+2\eta$ with equality approached by taking $s,s'$ nearly realizing $\diam(S)$ and $e,-e'$ along $s-s'$ of norm $\eta$. Applying this to the lifted inclusion and using monotonicity of $\diam$ under inclusion gives $\diam(\Phi_w(U_q))+2\eta\le\diam(U_{\delta_w(q)})$, and taking $\delta_w(q)=q$ gives the loop case. No quantity here depends on $|w|$, so the margin is the same along every word.
\end{proof}

\begin{proof}[Proof of Theorem~\ref{thm:tworates}(ii)]
Let $\delta_w(q)\ne\delta_w(q')$, and suppose $a\in\Phi_w(U_q)$ and $b\in\Phi_w(U_{q'})$ satisfied $\|a-b\|\le2\eta$. Their midpoint $z \coloneq \tfrac12(a+b)$ has $\|z-a\|=\|z-b\|=\tfrac12\|a-b\|\le\eta$, so $z\in\Phi_w(U_q)\oplus\Ball$ and $z\in\Phi_w(U_{q'})\oplus\Ball$. By \eqref{eq:cellincl} these lie in $U_{\delta_w(q)}$ and $U_{\delta_w(q')}$ respectively, which are distinct cells of a partition and therefore disjoint, so no such $z$ exists. Hence $\|a-b\|>2\eta$ for every $a\in\Phi_w(U_q)$ and every $b\in\Phi_w(U_{q'})$, which is the pointwise form used below. Passing to the infimum gives the statement, with a strict inequality whenever the two images are compact. Lemma~\ref{lem:separate} draws the same conclusion at code points under a weaker hypothesis: correct decoding along the given word only, without a cell partition.
\end{proof}

Every impossibility result below contradicts the following separation.

\begin{lemma}[Separation at code points]
\label{lem:separate}
Let $w\in\Sig^+$, and suppose that \Tone holds for $w$: $\dec(h_{|w|})=\delta_w(p)$ for every $p\in Q$ and every $\eta$-trajectory $(h_t)_{t=0}^{|w|}$ of $w$ from $\enc(p)$. If $\delta_w(q)\ne\delta_w(q')$, then $\|\Phi_w(\enc(q))-\Phi_w(\enc(q'))\|>2\eta$. In particular this holds for every $w\in\Sig^+$ under \Rone to \Rthree, by Theorem~\ref{thm:equiv}.
\end{lemma}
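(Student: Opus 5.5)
The plan is to argue by contradiction, with the same midpoint trick as in the proof of Theorem~\ref{thm:tworates}(ii), but now working directly with $\eta$-trajectories rather than with cells. Suppose $\delta_w(q)\ne\delta_w(q')$ and yet $a\coloneq\Phi_w(\enc(q))$ and $b\coloneq\Phi_w(\enc(q'))$ satisfy $\|a-b\|\le2\eta$. Set $z\coloneq\tfrac12(a+b)$, so that $\|z-a\|=\|z-b\|\le\eta$. The aim is to build two $\eta$-trajectories of $w$ that end at the same point $z$ but start from different code points. \Tone for $w$ would then force $\dec(z)$ to equal two distinct states.

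The trajectories are the obvious ones. Write $w=x_1\cdots x_m$ with $m\ge1$, which is where $w\in\Sig^+$ is used. From $\enc(q)$, follow the exact orbit $h_t=\Phi_{x_1\cdots x_t}(\enc(q))$ for $t<m$, and set $h_m\coloneq z$. Every step before the last has zero perturbation. The last step has perturbation $z-\Phi_{x_m}(h_{m-1})=z-a$, of norm at most $\eta$, so this is an $\eta$-trajectory of $w$ from $\enc(q)$. The same construction from $\enc(q')$ gives an $\eta$-trajectory ending at $z$, whose last perturbation is $z-b$. Applying the hypothesis to both gives $\dec(z)=\delta_w(q)$ and $\dec(z)=\delta_w(q')$. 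Since $\dec$ is a (partial) function and is defined at $z$ by \Tone, this contradicts $\delta_w(q)\ne\delta_w(q')$. Hence $\|a-b\|>2\eta$, with strict inequality coming directly from the contradiction.

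For the final sentence, assume \Rone to \Rthree. Theorem~\ref{thm:equiv} then gives realization with length independence, so \Tone holds for every word, and in particular for each $w\in\Sig^+$. The first part then applies.

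I expect no real obstacle here. The only point needing care is the perturbation budget: all of the displacement must be loaded onto the final step, and that single step must absorb at most $\eta$. This is why the midpoint is needed, since it splits the gap of $2\eta$ evenly between the two runs. The argument also uses nothing about $\Phi$ beyond well-definedness, and it needs no cell partition, consistent with the remark after Theorem~\ref{thm:tworates}(ii).
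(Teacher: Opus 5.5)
Your proof is correct and is essentially the paper's argument. The paper also follows $w$ exactly from each of $\enc(q)$ and $\enc(q')$ except at the last step, sends both runs to the midpoint $z$ of the two exact endpoints, and reads the contradiction $\dec(z)=\delta_w(q)=\delta_w(q')$ off \Tone; like you, it obtains the final clause from Theorem~\ref{thm:equiv}.
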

\begin{proof}
Write $w=w'x$ and set $u \coloneq \Phi_{w'}(\enc(q))$, $u' \coloneq \Phi_{w'}(\enc(q'))$. Suppose $\|\Phi_x(u)-\Phi_x(u')\|\le2\eta$ and let $z \coloneq \tfrac12\bigl(\Phi_x(u)+\Phi_x(u')\bigr)$ be the midpoint of the two images, at distance at most $\eta$ from each. The run that follows $w'$ exactly from $\enc(q)$ and then steps to $z$ is an $\eta$-trajectory of $w$ from $\enc(q)$, since only its last step is perturbed, by $z-\Phi_x(u)$; likewise from $\enc(q')$, with last perturbation $z-\Phi_x(u')$. \Tone for $w$ then gives $\dec(z)=\delta_w(q)$ and $\dec(z)=\delta_w(q')$, contradicting the hypothesis.
\end{proof}


The next corollary names the shortest failing length $L^\star$ of a triple $(\enc,\Phi,\dec)$, over all words and all perturbations. The failing length $L_{\max}$ of Section~\ref{sec:experiments} is measured instead on sampled words, with the perturbations that the machine happens to produce; neither is the worst case, so $L_{\max}\ge L^\star$ for the trained model, read with its own readout and with the $\eta$ its machine produces (Lemma~\ref{lem:machine}).

\begin{corollary}[Failure modes]
\label{cor:failure}
If $\Phi$ does not realize $\Mtar$ with length independence at $\eta$, then at least one of the following holds. \emph{\Tone fails:} there is a finite $L^\star$ such that every word of length below $L^\star$ decodes correctly while some word of length $L^\star$ carries a mis-decoding $\eta$-trajectory, so that correctness on $\Sig^{\le L}$ certifies nothing about longer words for any $L<L^\star$. \emph{\Ttwo fails:} $U^\star$ is unbounded, and no fixed number of bits represents every reachable state to within $\eta$ at every length.
\end{corollary}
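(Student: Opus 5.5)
The plan is to use Definition~\ref{def:realization} directly: failure of length independence means that \Tone or \Ttwo fails. Each branch is then sharpened into the stated form. The two branches are not exclusive, which matches ``at least one of the following.''

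\emph{Case \Ttwo fails.} By definition, $U^\star$ is unbounded. Proposition~\ref{prop:precision} says that a single finite set represents every point of $U^\star$ to within $\eta$ if and only if $U^\star$ is bounded. Hence no finite set, and so no fixed number of bits, suffices at every length. The uniformity over lengths comes from $U^\star$ being the union over all words. If we ask what happens at a fixed length, the representing set would have to grow with $|w|$; but the statement only asks for nonexistence of a single fixed set, and the proposition gives exactly that.

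\emph{Case \Tone fails.} There is a triple $(q,w,(h_t))$ with $(h_t)$ an $\eta$-trajectory of $w$ from $\enc(q)$ and $\dec(h_{|w|})\ne\delta_w(q)$. This includes the case where $\dec(h_{|w|})$ is undefined, since $\dec$ is partial. The set of lengths carrying such a mis-decoding trajectory is therefore a nonempty subset of $\mathbb{N}$. Define $L^\star$ as its minimum, which exists by well-ordering.
\begin{itemize}
\item By minimality, every word of length below $L^\star$, from every code point and along every $\eta$-trajectory, decodes correctly.
\item By construction, some word of length $L^\star$ admits a mis-decoding trajectory.
\end{itemize}
The certification remark follows for any $L<L^\star$. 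Correctness on $\Sig^{\le L}$ holds, yet it does not extend to length $L^\star$, so correctness on $\Sig^{\le L}$ cannot imply correctness on longer words.

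The main subtlety is that $L^\star$ is a property of the triple, not of a fixed $L$. To argue that no $L$ whatsoever gives a certificate, I would compare with triples for which $L^\star$ is larger. The statement as written quantifies only over $L<L^\star$, so the direct argument above suffices.

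Two points need care. First, $L^\star$ may equal $0$ if $\dec\circ\enc\ne\mathrm{id}_Q$ (Lemma~\ref{lem:code}). In that case the clause ``every word of length below $L^\star$'' is vacuous, and the clause about $L<L^\star$ is empty, so the statement still holds. Second, it is worth checking that finiteness of $L^\star$ needs no compactness. It is simply the minimum of a nonempty set of naturals, which is immediate.
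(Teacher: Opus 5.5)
Your proposal is correct and follows the paper's proof. It uses the same split into the failure of (T1) or (T2), defines $L^\star$ as the least length of a word carrying a mis-decoding $\eta$-trajectory, and settles the (T2) case by Proposition~\ref{prop:precision}. The paper instead re-derives the (T2) case by pigeonhole: it picks $2^m+1$ reachable states pairwise more than $2\eta$ apart, all reached by some finite length $L$, and notes that this is the contrapositive of that proposition. That version additionally gives, for each $m$, a finite length at which every set of $2^m$ points fails.
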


\begin{proof}
Definition~\ref{def:realization} is the conjunction of \Tone and \Ttwo, so its failure is the failure of at least one of them. We treat the two cases in turn; they are not exclusive.

\emph{\Tone fails.} Then some word $w$ carries an $\eta$-trajectory from some $\enc(q)$ with $\dec(h_{|w|})\neq\delta_w(q)$, and $L^\star$ is the least length of such a word.

\emph{\Ttwo fails.} Then $U^\star$ is unbounded, and we show that no fixed number of bits serves every length. Suppose $m$ bits sufficed at every length, that is, $2^m$ representable points, each state being stored as a point within $\eta$ of it. Since $U^\star$ is unbounded, choose reachable states $u_1,\dots,u_{2^m+1}$ pairwise more than $2\eta$ apart, and let $L$ be the greatest length of a word reaching one of them, so that all of them are reachable at length at most $L$. Each $u_i$ lies within $\eta$ of one of the $2^m$ points, and there are more states than points, so two of them are stored as the same point and are therefore at most $2\eta$ apart, contradicting their choice. Hence, for every $m$ there is a length at which $m$ bits no longer suffice: the precision needed grows without bound as $L$ grows. This is the contrapositive of Proposition~\ref{prop:precision}.
\end{proof}


\subsection{Affine recurrence}
\label{app:affineproofs}

One affine step does not expose the conflict of Theorem~\ref{thm:tworates}: a rate below one shrinks the cells and, if the codes start far enough apart, leaves a gap above $2\eta$ between them. What an affine map cannot do is meet both demands along a repeated word, because its rate does not depend on where the trajectories sit (Definition~\ref{def:singlerate}, Proposition~\ref{prop:singlerate}): along $w^k$, the rate that damps the perturbation also shrinks the gap between two codes. Input-dependent gates modulate that rate \emph{through time} and leave it uniform in space.

\begin{definition}[Single-rate recurrence]
\label{def:singlerate}
$\Phi$ is \emph{single-rate} if the displacement between two trajectories depends only on their initial displacement: for every $w\in\Sig^*$ there is a map $T_w$ with $\Phi_w(u)-\Phi_w(v)=T_w(u-v)$ for all $u,v\in\mathcal{H}$, so that one and the same map governs every pair of points, wherever they sit.
\end{definition}

\begin{proposition}[Single-rate recurrences are exactly the affine ones]
\label{prop:singlerate}
A continuous recurrence is single-rate if and only if it is \emph{affine}, and then $T_w=A_w$ is linear. Its local Lipschitz constant is then the operator norm $\|A_x\|_\op$ at \emph{every} point of $\mathcal{H}$, independently of where the pair of points is taken.
\end{proposition}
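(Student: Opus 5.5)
The proof splits into the affine-implies-single-rate direction, which is a direct computation, and the converse, which is a Cauchy functional equation argument that uses continuity.

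\emph{Affine implies single-rate.} If $\Phi_x(h)=A_xh+b_x$, an induction on $|w|$ gives $\Phi_w(h)=A_wh+b_w$, with $A_{wx}=A_xA_w$ and $b_{wx}=A_xb_w+b_x$. Hence $\Phi_w(u)-\Phi_w(v)=A_w(u-v)$, and we may take $T_w=A_w$, which is linear.

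\emph{Single-rate and continuous implies affine.} It suffices to treat single symbols $x$, since $w=x$ is among the quantified words and composites of affine maps are affine. Fix $x$ and set $g(h)\coloneq\Phi_x(h)-\Phi_x(0)$.
\begin{itemize}
\item Taking $v=0$ in the single-rate identity gives $T_x(u)=g(u)$.
\item Applying the identity to the pair $(u+v,v)$ gives $g(u+v)-g(v)=T_x(u)=g(u)$. So $g$ is additive, $g(u+v)=g(u)+g(v)$.
\item Additivity yields $\mathbb{Q}$-linearity: first $g(ku)=kg(u)$ for integers $k$, then $g(u/m)=g(u)/m$, so $g(ru)=rg(u)$ for every rational $r$.
\item Since $\Phi_x$ is continuous, so is $g$. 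Approximating a real scalar by rationals extends homogeneity to $\R$, so $g$ is linear.
\end{itemize}
Writing $g=A_x$ and $b_x\coloneq\Phi_x(0)$ gives $\Phi_x(h)=A_xh+b_x$. Moreover $T_w=A_w$ for every $w$, because $T_w(u-v)=\Phi_w(u)-\Phi_w(v)=A_w(u-v)$ for all $u,v$.

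\emph{The Lipschitz claim.} At any point $h$ and any displacement $\Delta\ne0$, $\|\Phi_x(h+\Delta)-\Phi_x(h)\|=\|A_x\Delta\|$. The supremum of $\|A_x\Delta\|/\|\Delta\|$ is $\|A_x\|_\op$, and it is attained along directions independent of $h$. This holds for every neighbourhood, so the local Lipschitz constant at every point equals the global one, $\|A_x\|_\op$.

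The only real obstacle is the converse direction, specifically the passage from additivity to linearity. Without continuity, pathological additive solutions exist, so the continuity hypothesis must be invoked exactly at that step.
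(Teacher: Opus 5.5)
Your proposal is correct and follows essentially the same route as the paper. Both prove affine $\Rightarrow$ single-rate by direct computation, and for the converse set $v=0$ to identify $T_x$ with $\Phi_x-\Phi_x(0)$, derive additivity, upgrade to linearity by continuity, and read the Lipschitz claim off $\Phi_x(u)-\Phi_x(v)=A_x(u-v)$. The only difference is that you spell out the passage from additivity through $\mathbb{Q}$-linearity to linearity, which the paper cites as a standard fact.
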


\begin{proof}
If $\Phi$ is affine then \eqref{eq:affine} gives $\Phi_w(u)-\Phi_w(v)=A_w(u-v)$, so $T_w=A_w$ meets Definition~\ref{def:singlerate}. Conversely, suppose the definition holds and take $w=x$ a single symbol. Setting $v=0$ gives $\Phi_x(u)=T_x(u)+\Phi_x(0)$, and substituting this back gives $T_x(u)-T_x(v)=T_x(u-v)$ for all $u,v$, so $T_x$ is additive with $T_x(0)=0$. An additive map that is continuous is linear, and $T_x$ is continuous because $\Phi_x$ is; hence each $\Phi_x$ is affine with linear part $A_x \coloneq T_x$, and every composite is affine with $A_w=A_{x_k}\cdots A_{x_1}$. Finally, $\|\Phi_x(u)-\Phi_x(v)\|=\|A_x(u-v)\|$ for all $u,v$, so the ratio is bounded by $\|A_x\|_\op$ uniformly and attained on the corresponding singular direction, independently of where the pair is taken: the same constant governs pairs inside a cell and pairs in different cells, and along a word $w$ the same operator $A_w$ acts on a perturbation and on the difference between two code points.
\end{proof}

\begin{lemma}[Contraction requires a stable matrix]
\label{lem:restore}
If $\Phi$ is affine and satisfies \Rtwo on a nonempty $U$ at some $\eta>0$, then for every $w\in\Sig^+$ and unit $v$, $\sum_{j\ge0}\|(A_w^\Tr)^jv\|\le\diam(U)/2\eta<\infty$, and consequently $\rho(A_w)<1$; in particular $\rho(A_x)<1$ for every $x$.
\end{lemma}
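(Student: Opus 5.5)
The plan is to use restoration along the repeated word $w^k$ together with the fact that an affine map moves the whole set rigidly up to the linear part. Fix $w\in\Sig^+$. Iterating \Rtwo along the letters of $w$ gives a chain of inclusions for the composite. Each step adds a fresh ball, and the linear part transports the earlier balls. By induction on $|w|$, and then on the power $k$ of the repeated word $w^k$, this yields
\[
\Phi_{w^k}(U)\oplus A_w^{k-1}\Ball\oplus\cdots\oplus A_w\Ball\oplus\Ball\subseteq U .
\]
Strictly, the letter-level perturbations inside one copy of $w$ give further summands, but keeping only the ball injected after the last letter of each copy suffices. Since $U$ is nonempty, pick any $u\in U$. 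The left side then contains the translate $\Phi_{w^k}(u)\oplus\bigoplus_{j=0}^{k-1}A_w^j\Ball$. Hence the diameter of this Minkowski sum is at most $\diam(U)$.

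Next I compute the width of the sum in a direction. For a unit vector $v$, measured in the dual norm, the support function of $A^j\Ball$ in direction $v$ is $\sup_{\|e\|\le\eta}\langle v,A^je\rangle=\eta\|(A^\Tr)^jv\|_*$. Support functions add under Minkowski sums. The width of a set in direction $v$ is the sum of its support functions at $v$ and $-v$, and this width is bounded by the diameter times $\|v\|_*$. Therefore
\[
2\eta\sum_{j=0}^{k-1}\|(A_w^\Tr)^jv\|_*\le\diam(U)
\]
for every $k$. Letting $k\to\infty$ gives the claimed bound on the series. I will read the norm on $v$ as the dual norm, which matches the Euclidean case the statement implicitly uses.

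Finally, finiteness of $\sum_j\|(A_w^\Tr)^jv\|$ for every $v$ forces $(A_w^\Tr)^jv\to0$ for all $v$. Hence $(A_w^\Tr)^j\to0$, so $\rho(A_w)=\rho(A_w^\Tr)<1$. For a more direct argument, take an eigenvector $v$ of $A_w^\Tr$ for an eigenvalue of modulus $\rho$. If the eigenvalue is complex, use the real and imaginary parts of the eigenvector to get a nonsummable sequence. The case $w=x$ gives $\rho(A_x)<1$.

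I expect the main obstacle to be the clean derivation of the nested inclusion, namely justifying that $A_w$ applied to a ball-thickened set is the thickening by $A_w\Ball$. This is immediate from linearity: $\Phi_w(S\oplus B)=\Phi_w(S)\oplus A_wB$. A secondary point is the bookkeeping of primal versus dual norms in the support-function step.
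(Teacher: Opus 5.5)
Your proposal is correct and is essentially the paper's argument. The paper runs an explicit perturbed trajectory of $\Phi_w$ from a point of $U$ with perturbations $\pm\eta(A_w^{\Tr})^jv/\|(A_w^{\Tr})^jv\|$ and pairs the two endpoints with $v$; this evaluates the support function of $\bigoplus_{j<k}A_w^j\Ball$ that your nested Minkowski inclusion isolates. Your support-function phrasing gives the bound in any norm, with $v$ measured in the dual norm, whereas the paper fixes the Euclidean norm and only remarks that another norm changes the constant.
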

\begin{proof}
We take $\|\cdot\|$ Euclidean here, so that an inner product is available; all norms on $\R^d$ are equivalent, and only the constant in the bound depends on that choice, not the conclusion $\rho(A_w)<1$. Write $A \coloneq A_w$ and $\Psi \coloneq \Phi_w$, so $\Psi(u)=Au+b$ for some $b$. Iterating \Rtwo along $w$ gives
\begin{equation}
  \Psi(U)\oplus\Ball\subseteq U,
  \label{eq:blockR2}
\end{equation}
since $\Phi_{x_2}(\Phi_{x_1}(U)\oplus\Ball)\oplus\Ball \subseteq\Phi_{x_2}(U)\oplus\Ball\subseteq U$, and so on along $w$.

Fix $u_0\in U$, fix $k\ge1$, and let $e_0,\dots,e_{k-1}$ satisfy $\|e_t\|\le\eta$. Define $h_0 \coloneq u_0$ and $h_{t+1} \coloneq \Psi(h_t)+e_t$. By \eqref{eq:blockR2} and induction, $h_t\in U$ for all $t\le k$, and since $\Psi$ is affine,
\begin{equation}
  h_k = \Psi^k(u_0)+\sum_{j=0}^{k-1}A^{\,j}e_{k-1-j}.
  \label{eq:closed}
\end{equation}

Let $v$ be a unit vector. Choose $e_{k-1-j} \coloneq \eta\,(A^\Tr)^jv/\|(A^\Tr)^jv\|$ when $(A^\Tr)^jv\neq0$ and $e_{k-1-j} \coloneq 0$ otherwise, and let $h_k^{+}$ and $h_k^{-}$ be the endpoints \eqref{eq:closed} obtained from $(e_t)$ and from $(-e_t)$. Both lie in $U$, and using $\langle A^{\,j}z,v\rangle=\langle z,(A^\Tr)^jv\rangle$,
\[
  \diam(U)\ge\langle h_k^{+}-h_k^{-},v\rangle
   = 2\sum_{j=0}^{k-1}\bigl\langle e_{k-1-j},(A^\Tr)^jv\bigr\rangle
   = 2\eta\sum_{j=0}^{k-1}\bigl\|(A^\Tr)^jv\bigr\|,
\] the terms with $(A^\Tr)^jv=0$ contributing $0$ to both sides. Letting $k\to\infty$ gives $\sum_{j\ge0}\|(A^\Tr)^jv\|\le\diam(U)/2\eta<\infty$.

A convergent series has vanishing terms, so $\|(A^\Tr)^jv\|\to0$ for every $v$, hence $\|(A^\Tr)^j\|\to0$ in finite dimension and $\rho(A)=\rho(A^\Tr)<1$. Taking $w=x$ gives $\rho(A_x)<1$ for every letter.
\end{proof}

\begin{definition}[Definite automaton]
\label{def:definite}
A semiautomaton $(Q,\Sig,\delta)$ \citep{Perles1963} is \emph{$k$-definite} if $\delta_w$ is a constant map of $Q$ for every word $w$ with $|w|\ge k$, and \emph{definite} if it is $k$-definite for some $k$; we write $c_a\in Q^Q$ for the constant map with value $a$. Equivalently, the state reached after reading a word of length at least $k$ depends on the last $k$ symbols alone: splitting $w=uv$ with $|v|=k$ gives $\delta_w(q)=\delta_v(\delta_u(q))$, the constant value of $\delta_v$, for every $q$ and every prefix $u$. Definite automata are aperiodic, hence recognize star-free behaviors \citep{Schutzenberger1965,McNaughton1971,Eilenberg1974}; the converse fails, so ``definite'' is strictly the stronger of the two conclusions.
\end{definition}

The next lemma turns a statement made one word at a time, that every repeated word ends up acting as a constant map, into a property of the automaton.

\begin{lemma}[Constant powers force definiteness]
\label{lem:definite}
Let $|Q|=n$ and let $S \coloneq \{\delta_w\mid w\in\Sig^+\}$ be the transition semigroup of $(Q,\Sig,\delta)$, carrying the product inherited from concatenation, $\delta_u\cdot\delta_v \coloneq \delta_{uv}=\delta_v\circ\delta_u$. Suppose that for every $w\in\Sig^+$ the map $\delta_{w^k}$ is constant for all sufficiently large $k$. Then \emph{(i)} every idempotent of $S$ is a constant map. \emph{(ii)} Whenever every idempotent of $S$ is constant, in particular under this hypothesis, $\delta_w$ is constant for every word with $|w|>|S|$, so the automaton is $k$-definite with $k=|S|+1\le n^n+1$. Contrapositively, if the automaton is not definite, some word acts as an idempotent of $S$ that is not constant.
\end{lemma}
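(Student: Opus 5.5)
For (i), I would use that $S$ is a finite semigroup. Let $e=\delta_u\in S$ be an idempotent, with $u\in\Sig^+$. Then $\delta_{u^k}=e^k=e$ for every $k\ge1$. The hypothesis applied to $w=u$ says $\delta_{u^k}$ is constant for all large $k$. Hence $e$ itself is constant. That settles (i) directly.

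For (ii), I would argue by the pigeonhole principle on prefixes, as in the standard proof that finite semigroups whose idempotents are all zeros are nilpotent-like. Let $w=x_1\cdots x_m$ with $m>|S|$, and let $s_i\coloneq\delta_{x_1\cdots x_i}\in S$ for $i=1,\dots,m$. These are $m>|S|$ elements, so $s_i=s_j$ for some $i<j$. Write $v\coloneq x_{i+1}\cdots x_j\in\Sig^+$, so $s_i\cdot\delta_v=s_i$, and therefore $s_i\cdot\delta_v^k=s_i$ for all $k$. In the finite semigroup $S$, some power $f\coloneq\delta_v^{k}$ with $k\ge1$ is idempotent. By assumption $f$ is constant, say $f=c_a$. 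Then $s_i=s_i\cdot f=f\circ s_i=c_a$ is constant. Finally $\delta_w=\delta_{x_{i+1}\cdots x_m}\circ s_i$ is a map composed after a constant, hence constant.

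This gives $k$-definiteness with $k=|S|+1$, and $|S|\le|Q^Q|=n^n$. The contrapositive follows by combining (ii) with the fact that an idempotent of $S$ is $\delta_u$ for some $u\in\Sig^+$. If every idempotent were constant the automaton would be definite, so a nondefinite automaton has a nonconstant idempotent $\delta_u$.

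The only point needing care is the existence of an idempotent power, i.e.\ that $\delta_v^{k}$ is idempotent for some $k\ge1$ in a finite semigroup. I would cite the standard fact: the powers of $\delta_v$ eventually cycle with period $p$ from index $i_0$, and taking $k$ a multiple of $p$ with $k\ge i_0$ gives $\delta_v^{2k}=\delta_v^k$. The second point is the composition order, keeping $s\cdot t=t\circ s$ straight. Under this order, multiplying on the right by a constant yields that constant, which is what makes the argument go through.
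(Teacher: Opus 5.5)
Your proposal is correct and follows the paper's proof essentially step for step: part (i) by noting that an idempotent $\delta_u$ equals all its powers $\delta_{u^k}$, and part (ii) by pigeonhole on the $|w|>|S|$ prefixes, an idempotent (hence constant) power of the repeated factor absorbing the prefix via $s\cdot c_a=c_a\circ s=c_a$, and composition with the remaining suffix. Your justification of the idempotent power and your handling of the contrapositive are fine; the paper takes these as immediate.
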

\begin{proof}
$S$ is a subsemigroup of $Q^Q$ and hence finite, with $|S|\le n^n$.

\emph{(i)} Let $e\in S$ be idempotent and choose $w\in\Sig^+$ with $e=\delta_w$. Then $\delta_{w^k}=e^k=e$ for every $k\ge1$, and $\delta_{w^k}$ is constant for $k$ large by hypothesis; so $e$ is constant.

\emph{(ii)} Let $w=x_1\cdots x_{|w|}$ with $|w|>|S|$ and write $p_i \coloneq \delta_{x_1\cdots x_i}\in S$, $1\le i\le|w|$, for its prefixes. Since $|w|>|S|$ two of them coincide, say $p_i=p_j$ with $i<j$. Put $t \coloneq \delta_{x_{i+1}\cdots x_j}\in S$, so that $p_i=p_j=p_i\cdot t$ and, by induction, $p_i=p_i\cdot t^{r}$ for every $r\ge1$. Every element of a finite semigroup has an idempotent power, so $t^{r}$ is idempotent for some $r\ge1$, hence $t^{r}=c_a$ for some $a\in Q$, since every idempotent of $S$ is constant. Then
\[
  p_i = p_i\cdot t^{r} = t^{r}\circ p_i = c_a\circ p_i = c_a,
\] so $p_i$ is itself constant with value $a$, and $\delta_w=\delta_{x_{i+1}\cdots x_{|w|}}\circ p_i$ is the constant map with value $\delta_{x_{i+1}\cdots x_{|w|}}(a)$. As $w$ was an arbitrary word of length greater than $|S|$, the automaton is $(|S|+1)$-definite.
\end{proof}

\thmaffine*
\begin{proof}[Proof of Theorem~\ref{thm:affine}]
Let $\Phi$ be affine in the hidden state, $\Phi_x(h)=A_xh+b_x$ and $A_w=A_{x_k}\cdots A_{x_1}$ as in \eqref{eq:affine}, and let $\eta>0$.

\emph{(i) $\rho(A_w)\ge1$ for some $w\in\Sig^+$.} Suppose some bounded $U$ satisfied $\Phi_w(U)\oplus\Ball\subseteq U$. The proof of Lemma~\ref{lem:restore} uses no more than that single inclusion: run it with $\Psi \coloneq \Phi_w$ and $A \coloneq A_w$ to get $\sum_{j\ge0}\|(A_w^\Tr)^jv\|\le\diam(U)/2\eta<\infty$ for every unit $v$, whence $\|(A_w^\Tr)^j\|\to0$ and $\rho(A_w)<1$, contradicting the hypothesis. So no bounded $U$ satisfies the inclusion for $\Phi_w$, and a fortiori none satisfies \Rtwo for every symbol, since iterating \Rtwo along $w$ gives $\Phi_w(U)\oplus\Ball\subseteq U$. By Theorem~\ref{thm:equiv} no triple $(\enc,\Phi,\dec)$ realizes any target with length independence at $\eta$, whatever the state dimension, the encoding and the readout.

\emph{(ii) $\rho(A_w)<1$ for every $w\in\Sig^+$.} Suppose a bounded $U$ carries \Rone to \Rthree at $\eta$, and fix $w\in\Sig^+$ and $q,q'\in Q$. Affineness gives
\[
  \bigl\|\Phi_{w^k}(\enc(q))-\Phi_{w^k}(\enc(q'))\bigr\|
   = \bigl\|A_w^{k}\bigl(\enc(q)-\enc(q')\bigr)\bigr\|
  \xrightarrow[\;k\to\infty\;]{}0,
\] since $\rho(A_w)<1$ forces $A_w^k\to0$. Lemma~\ref{lem:separate} requires that quantity to exceed $2\eta$ whenever $\delta_{w^k}(q)\ne\delta_{w^k}(q')$, so there is a threshold $K_{q,q',w}$ with $\delta_{w^k}(q)=\delta_{w^k}(q')$ for every $k\ge K_{q,q',w}$. Taking the maximum over the finitely many pairs $(q,q')$ makes $\delta_{w^k}$ a constant map for all $k$ large, and $w\in\Sig^+$ was arbitrary. Lemma~\ref{lem:definite} then makes the target $k$-definite with $k=|S|+1\le n^n+1$, where $S$ is its transition semigroup: the state after a word of length at least $k$ is fixed by the last $k$ symbols of that word (Definition~\ref{def:definite}).

Contrapositively, if the target is \emph{not} definite then no bounded $U$ carries \Rone to \Rthree at any $\eta>0$: some pair $q\ne q'$ is kept distinguished by $\delta_{w^k}$ for infinitely many $k$, while the images of their code points merge, and \Rthree fails at every $k$ beyond the point where the separation of Lemma~\ref{lem:separate} is lost. The same operator $A_w$ moves the signal and the perturbation (Proposition~\ref{prop:singlerate}), so no choice of $d$, of $\enc$, of $\dec$ or of $U$ separates the two rates.
\end{proof}

\begin{corollary}[No affine recurrence realizes a nontrivial group with length independence]
\label{cor:affinegroup}
Let $\Mtar$ contain a nontrivial group and let $\Phi$ be affine in the hidden state. Then, for any state dimension $d$, any $\eta>0$, any affine parameters $A_x,b_x$, and any encoding $\enc$ and readout $\dec$, no region $U$ satisfies \Rone to \Rthree.
\end{corollary}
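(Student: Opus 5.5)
The plan is to reduce to Theorem~\ref{thm:affine} by showing that a target whose transition monoid contains a nontrivial group is never definite. Once that holds, branch~(i) of Theorem~\ref{thm:affine} covers the case where some $\rho(A_w)\ge1$: then \Rtwo fails for every bounded $U$. Branch~(ii) covers the case where $\rho(A_w)<1$ for every $w\in\Sig^+$: then no region satisfying \Rone and \Rtwo satisfies \Rthree. Both branches conclude that no $U$ carries \Rone to \Rthree, and they do so uniformly in $d$, $\eta$, $A_x$, $b_x$, $\enc$ and $\dec$. So the whole argument rests on a purely algebraic claim about $\Mtar$.

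The algebraic step is to show that a definite automaton has no nontrivial group in its transition monoid. Suppose $G\subseteq\Mtar$ is a subgroup, meaning a subset closed under composition that is a group with its own identity $e$, and suppose $G\ni g\ne e$. By Lemma~\ref{lem:definite}(i) and Definition~\ref{def:definite}, if the target is $k$-definite then every $\delta_w$ with $|w|\ge k$ is constant.

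First I would rule out the case where the nontrivial group elements come only from the empty word. The only element of $\Mtar$ realized solely by $\varepsilon$ is $\mathrm{id}_Q$. So $g$, and also $g^{-1}$, must be realized by words in $\Sig^+$. Then $e=g\circ g^{-1}$ lies in $S$. Moreover, $g=\delta_u$ with $u\in\Sig^+$ gives $g^{m}=\delta_{u^m}$, and $g^{m}$ ranges over the finite cyclic group $\langle g\rangle$. Taking $m$ a multiple of the order of $g$ with $m|u|\ge k$, we get that $e=g^{m}$ is realized by a word of length at least $k$, so $e$ is constant, say $e=c_a$. Then $g=g\circ e=g\circ c_a$ is constant with value $g(a)$. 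Also $g=e\circ g=c_a\circ g=c_a$. Hence $g=e$, which is a contradiction.

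The contradiction shows that the target is not definite. Theorem~\ref{thm:affine} then finishes the proof, as described above.

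The main obstacle I expect is interpreting ``contains a nontrivial group'' correctly. The group in question is a subgroup of the monoid, possibly with an identity different from $\mathrm{id}_Q$, so I must avoid assuming its identity is $\mathrm{id}_Q$. I also need to ensure that the group elements are realized by long words, which is why I pass through powers of $g$. The argument above handles both points, because idempotents of $S$ are constant under definiteness.

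A shortcut alternative is to cite Lemma~\ref{lem:definite}'s contrapositive. Rather than arguing that the target is nondefinite, one can directly exhibit the word $u^m$ acting as the idempotent $e$. If $e$ is not constant, that word gives a non-constant idempotent. If $e$ is constant, the group collapses, as shown above.
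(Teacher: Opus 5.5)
You take the paper's route. Theorem~\ref{thm:affine} disposes of branch~(i) for every target and of branch~(ii) for every nondefinite one, and nondefiniteness comes from raising a nonempty word for $g\ne e$ to long powers. The remaining difference is cosmetic: you make $e=g^m$ constant and then pin $g$ down through $g\circ e$ and $e\circ g$, whereas the paper makes $g=g^{m|G|+1}$ itself constant and uses $e=g^{|G|}$.

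One inference needs a line you did not give. Knowing that only $\mathrm{id}_Q$ can be realized solely by $\varepsilon$ gives a word in $\Sig^+$ for $g$ only if $g\ne\mathrm{id}_Q$. Your caution that $e$ may differ from $\mathrm{id}_Q$ leaves exactly the case $g=\mathrm{id}_Q\ne e$ open. The paper closes it as you should: $\mathrm{id}_Q$ is idempotent, and a group's only idempotent is its identity, so $\mathrm{id}_Q\in G$ forces $\mathrm{id}_Q=e\ne g$. With that line added, your proof is complete.

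Two minor points. The detour through $g^{-1}$ and $e\in S$ is never used. The constancy of $\delta_w$ for $|w|\ge k$ is Definition~\ref{def:definite} itself, not Lemma~\ref{lem:definite}(i).
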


\begin{proof}
This is a special case of Theorem~\ref{thm:affine}: branch~(i) excludes every target, and branch~(ii) excludes this one because a monoid containing a nontrivial group is not definite. Indeed, let $G\le\Mtar$ be a nontrivial group with identity $e$, and pick $g\in G$ with $g\ne e$. The identity map $\mathrm{id}_Q$, if it lies in $G$, is an idempotent of $G$ and therefore equals $e$; hence $g\ne\mathrm{id}_Q$, and $g=\delta_w$ for some $w\in\Sig^+$. If the automaton were $k$-definite, then, since $g^{|G|}=e$, the element $g=g^{m|G|+1}=\delta_{w^{m|G|+1}}$ would be a constant map $c_a$ as soon as $(m|G|+1)|w|\ge k$. A constant map is idempotent, so $e=g^{|G|}=c_a^{|G|}=c_a=g$, contradicting $g\ne e$.
\end{proof}

The next proposition shows that every definite automaton (a shift register that remembers the last $k$~symbols) is realized by a contracting affine recurrence, with a precision depending on the order~$k$ of the target rather than the word length.

\begin{proposition}[Every definite automaton has a contracting affine realization]
\label{prop:definiteattained}
Let $(Q,\Sig,\delta)$ be $k$-definite with $k\ge1$, and let $m\coloneq|\Sig|+|Q|$. There are an affine recurrence $\Phi_x(h)=Ah+b_x$ on $\R^{km}$ with $A^k=0$, hence $\rho(A_w)=0$ for every $w\in\Sig^+$, an encoding $\enc$ and a readout $\dec$ that realize the target with length independence at every $\eta<1/(2k)$ in the norm $\|\cdot\|_\infty$. In any other norm, the same holds for $\eta<1/(2k\mu)$, where $\|\cdot\|_\infty\le \mu\|\cdot\|$.
\end{proposition}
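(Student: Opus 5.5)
The plan is a shift-register construction: the hidden state stores one-hot codes of the last $k$ symbols, and the block structure of $A$ is nilpotent. I split $\R^{km}$ into $k$ blocks $h=(h^{(1)},\dots,h^{(k)})$, each in $\R^m=\R^{|\Sig|}\times\R^{|Q|}$. Each block holds either a one-hot symbol code $\oh_x$ in its $\Sig$-part or a one-hot state code $\oh_q$ in its $Q$-part.

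The update will be
\[
\Phi_x(h)=\bigl(\oh_x,\;h^{(1)},\;\dots,\;h^{(k-1)}\bigr).
\]
Here $\oh_x$ is placed in the $\Sig$-part of the first block. This is $Ah+b_x$, with $A$ the block down-shift (a nilpotent of index $k$, so $A^k=0$) and $b_x$ carrying $\oh_x$ in block one. Then every $A_w=A^{|w|}$ is nilpotent and $\rho(A_w)=0$.

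The encoding puts the state code in every block: $\enc(q)\coloneq(\oh_q,\dots,\oh_q)$, using the $Q$-part. After $j<k$ exact steps from $\enc(q)$ on $x_1\cdots x_j$, the hidden state is $(\oh_{x_j},\dots,\oh_{x_1},\oh_q,\dots,\oh_q)$. After $j\ge k$ steps it is the last $k$ symbols.

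The readout rounds each block to its nearest code, taken from the finite set $\{\oh_y\}_{y\in\Sig}\cup\{\oh_p\}_{p\in Q}$. Codes are at $\|\cdot\|_\infty$-distance $1$, so a block within $<1/2$ of a code rounds uniquely. From the recovered blocks, $\dec$ reads the deepest state block $p$ together with the symbols above it, $v=y_1\cdots y_j$, and returns $\delta_v(p)$. If no state block remains, it returns the constant value of $\delta_v$ for the $k$ symbols $v$, which is well defined because the target is $k$-definite.

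I will verify Theorem~\ref{thm:cells}, or equivalently \Rone--\Rthree, with the region
\[
U\coloneq\{\text{exactly reachable configurations}\}\oplus\bar B_{k\eta}\subseteq\R^{km}.
\]
I take the configurations to be the finitely many block patterns of the shapes above, but with every prefix shape allowed, so that $U$ is closed under the shift. This region is bounded.

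The key accounting step tracks how perturbations enter and leave. A perturbation injected at step $t$ is shifted one block deeper at each later step. After $k$ steps it is annihilated, since $A^k=0$. So each block of $h_t$ carries the sum of at most $k$ injected errors, which has norm at most $k\eta$ in $\|\cdot\|_\infty$.

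For \Rtwo, I take $u=c+r$ with $c$ a configuration and $\|r\|_\infty\le k\eta$, and check that $\Phi_x(u)+e\in U$. This needs care: a uniform ball of radius $k\eta$ does not close up, because the accumulated error grows with depth. I will therefore refine $U$ so that the error in block $i$ has norm at most $i\eta$. Then one step sends the block-$i$ error to block $i+1$ with norm at most $(i+1)\eta$, and the error in the last block falls off. This gives exactly $\Phi_x(U)\oplus\Ball\subseteq U$.

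For \Rthree, every block is then within $k\eta<1/2$ of its code, so rounding recovers the configuration exactly. The decoded state is correct by construction, since $\dec(\Phi_x(c))=\delta_x(\dec(c))$ holds for exact configurations, including the wrap to the definite case. This is where $\eta<1/(2k)$ enters. For \Rone, $\dec(\enc(q))=\delta_\varepsilon(q)=q$.

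The main obstacle is choosing $U$ with depth-graded error radii, so that restoration closes up under the shift. Transitioning into the all-symbol regime is the other delicate point, where $k$-definiteness is needed for $\dec$ to be consistent. For a general norm, $\|e\|\le\eta$ implies $\|e\|_\infty\le\mu\eta$, so it suffices that $\mu\eta<1/(2k)$.
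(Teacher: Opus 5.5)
Your proposal is correct and follows essentially the paper's proof: the same nilpotent block shift with $b_x$ writing $\oh_x$ into slot~1, the same depth-graded region $\|e^{(j)}\|_\infty\le j\eta$ (which you rightly see is needed because a uniform $k\eta$-ball does not close up), and the same use of the unit $\|\cdot\|_\infty$ gap between distinct one-hot codes together with $k$-definiteness to decode a full window. The only difference is cosmetic. The paper encodes $\enc(q)=(\oh_q,0,\dots,0)$, so the state travels down as a single marker, whereas you copy $\oh_q$ into every block. The one check you leave as ``by construction'' is that a full window $v$ followed by $x$ decodes to $\delta_x$ of the constant value of $\delta_v$; it follows from $\delta_{vx}=\delta_x\circ\delta_v=\delta_{v'}\circ\delta_y$ with $vx=yv'$, which the paper writes out.
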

\begin{proof}
Index the basis of $\R^m$ by $\Sig\sqcup Q$, write $\oh_y$ for the basis vector of $y\in\Sig\sqcup Q$, and split $h\in\R^{km}$ into $k$ slots $h=(h^{(1)},\dots,h^{(k)})$ of $\R^m$, slot $1$ being the most recent. Let $A$ shift every slot one position down and drop the last one, $(Ah)^{(1)}=0$ and $(Ah)^{(j+1)}=h^{(j)}$, and let $b_x$ hold $\oh_x$ in slot $1$ and $0$ elsewhere. Then $A^k=0$, so $A_w=A^{|w|}$ is nilpotent and $\rho(A_w)=0$ for every $w\in\Sig^+$: the recurrence is on branch~(ii) of Theorem~\ref{thm:affine}. Encode $\enc(q)\coloneq(\oh_q,0,\dots,0)$.

\emph{Exact runs.} From $\enc(q)$, after a word $w$ with $|w|<k$, slots $1$ to $|w|$ hold $\oh_{x_{|w|}},\dots,\oh_{x_1}$, slot $|w|+1$ holds the marker $\oh_q$, and the other slots are $0$; after a word with $|w|\ge k$, the slots hold the last $k$ symbols of $w$, most recent first. Call these configurations \emph{clean}, and let $C$ be the finite set of clean configurations reached from $\enc(Q)$. A clean configuration $c$ determines a state $D(c)$: if $c$ carries the marker $\oh_q$ and the symbols of $w$, then $D(c)\coloneq\delta_w(q)$; if $c$ carries only the last $k$ symbols $v$, then $D(c)$ is the constant value of $\delta_v$. Then $D(\Phi_x(c))=\delta_x(D(c))$ for every $c\in C$ and $x\in\Sig$. If $c$ carries $\oh_q$ and $w$ with $|wx|<k$, this is $\delta_{wx}(q)=\delta_x(\delta_w(q))$. If $|wx|=k$, then $\Phi_x(c)$ carries only $wx$, and the constant value of $\delta_{wx}$ is $\delta_{wx}(q)=\delta_x(\delta_w(q))$. If $c$ carries only $v$, write $vx=yv'$ with $y\in\Sig$; then $\Phi_x(c)$ carries $v'$, and $\delta_{vx}=\delta_{v'}\circ\delta_y$ is constant with the value of $\delta_{v'}$, while $\delta_{vx}=\delta_x\circ\delta_v$ is constant with value $\delta_x(D(c))$.

\emph{Region and margin.} Let $U\coloneq\{c+e \mid c\in C,\ \|e^{(j)}\|_\infty\le j\eta \text{ for } j=1,\dots,k\}$, which is bounded since $C$ is finite and contains $\enc(Q)$. For $c+e\in U$ and $\|e'\|_\infty\le\eta$, $\Phi_x(c+e)+e'=\Phi_x(c)+(Ae+e')$ with $\Phi_x(c)\in C$; slot $1$ of $Ae+e'$ is $e'^{(1)}$, of norm at most $\eta$, and slot $j+1$ is $e^{(j)}+e'^{(j+1)}$, of norm at most $(j+1)\eta$. Hence $\Phi_x(U)\oplus\Ball\subseteq U$, which is \Rtwo. For $\eta<1/(2k)$, every coordinate of $e$ is below $\tfrac12$ in absolute value, while two distinct clean configurations differ by $1$ in some coordinate, so every point of $U$ has a unique clean configuration $c$, and $\dec(c+e)\coloneq D(c)$ is well defined. \Rone is $\dec(\enc(q))=D(\enc(q))=\delta_\varepsilon(q)=q$, and \Rthree is $\dec(\Phi_x(c+e)+e')=D(\Phi_x(c))=\delta_x(D(c))=\delta_x(\dec(c+e))$. Theorem~\ref{thm:equiv} gives realization with length independence. For another norm, $\|e\|\le\eta$ implies $\|e\|_\infty\le\mu\eta$, and the same argument applies with $\mu\eta$ in place of $\eta$.
\end{proof}

For $k=1$ the shift drops everything, $A=0$, and each symbol writes its own code: the realization uses no nonlinearity at all. 

\begin{corollary}[Contracting models need exponentially fine precision]
\label{cor:precision}
Let $\kappa \coloneq \max_x\|A_x\|_\op<1$, which is stronger than the hypothesis $\rho(A_w)<1$ of branch~(ii), let the target be nondefinite, and let $C \coloneq \max_{q,q'\in Q}\|\enc(q)-\enc(q')\|$ be the diameter of the code. At perturbation level $\eta>0$ the model is guaranteed correct on every $w$ with $|w|\le L$ only if $L\le L_{\max}$, where $L_{\max}=\lceil\ln(C/2\eta)/\ln(1/\kappa)\rceil-1$. Equivalently, staying correct on every word of length at most $L$ requires $\eta<C\kappa^L/2$: the perturbation level the model tolerates shrinks exponentially with $L$.
\end{corollary}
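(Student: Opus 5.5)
The plan is to combine Lemma~\ref{lem:separate} with the uniform operator-norm bound along a repeated word, in the spirit of branch~(ii) of Theorem~\ref{thm:affine}, but made quantitative. The difference is that nondefiniteness must now produce a witness word of \emph{every} length, not only of one long length.

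\textbf{Step 1: extract a separating witness at each length.} Since the target is nondefinite, it is not $L$-definite for any $L$. So for every $L\ge1$ there is a word $w$ with $|w|=L$ and states $q,q'$ with $\delta_w(q)\ne\delta_w(q')$. A cleaner route goes through Lemma~\ref{lem:definite}: it gives a word $u$ acting as a nonconstant idempotent, so $\delta_{u^k}=\delta_u$ separates some fixed pair $q\ne q'$ for every $k$. Padding to arbitrary lengths is awkward, however, so I would use the direct fact. If every word of length exactly $L$ acted as a constant map, then every longer word would too, since it factors as $\delta_v\circ\delta_{u}$ with $|v|=L$. That would make the automaton $L$-definite, a contradiction. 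Hence a separating word exists at every length $L$.

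\textbf{Step 2: contraction bound.} For that $w$, affineness gives
\[
\|\Phi_w(\enc(q))-\Phi_w(\enc(q'))\|=\|A_w(\enc(q)-\enc(q'))\|\le\kappa^{L}C,
\]
using submultiplicativity of $\|\cdot\|_\op$ and $\|A_{x_i}\|_\op\le\kappa$.

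\textbf{Step 3: apply separation.} Suppose the model is guaranteed correct on all words of length at most $L$, so \Tone holds for this $w$. Lemma~\ref{lem:separate} then forces $\kappa^L C>2\eta$, that is, $\eta<C\kappa^L/2$. This is the equivalent formulation stated in the corollary.

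\textbf{Step 4: solve for $L$.} The inequality rearranges to $L<\ln(C/2\eta)/\ln(1/\kappa)$. For an integer $L$, this gives $L\le\lceil\ln(C/2\eta)/\ln(1/\kappa)\rceil-1=L_{\max}$. The boundary case where the ratio is an integer is handled correctly because the inequality is strict.

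\textbf{Main obstacle.} The delicate step is Step~1: the witness must have length exactly $L$ (or at most $L$ but long enough) for every $L$. I would settle it with the factorization argument above. I also need the degenerate case $C\le 2\eta$ to be consistent with the formula: there $\ln(C/2\eta)\le0$, so $L_{\max}\le-1$. This is consistent, because Lemma~\ref{lem:separate} already fails for words of length $1$ when $\kappa C\le 2\eta$.
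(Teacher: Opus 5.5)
Your proposal is correct and matches the paper's proof step for step: the submultiplicative bound $\|A_w\|_\op\le\kappa^{|w|}$, Lemma~\ref{lem:separate} applied to a separating word of length exactly $L$, and the same suffix-factorization argument to extract that word from nondefiniteness. One small caveat on your last remark: both arguments only cover $L\ge1$, because Lemma~\ref{lem:separate} needs $w\in\Sig^+$. So when $C\le2\eta$, the value $L_{\max}\le-1$ overstates the failure at $L=0$, where correctness only requires $\dec\circ\enc=\mathrm{id}_Q$; this is a boundary quirk of the statement, not a gap in your argument.
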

\begin{proof}
The operator norm is submultiplicative, so $A_w=A_{x_{|w|}}\cdots A_{x_1}$ of \eqref{eq:affine} obeys $\|A_w\|_\op\le\prod_i\|A_{x_i}\|_\op\le\kappa^{|w|}$, and $\Phi_w$ is $\kappa^{|w|}$-Lipschitz by Proposition~\ref{prop:singlerate}. Hence for $|w|=L$ and any $q,q'$ with $\delta_w(q)\ne\delta_w(q')$, the two code points, at distance at most $C$ apart, induce states separated by at most $C\kappa^L$. Lemma~\ref{lem:separate} forces that separation to exceed $2\eta$, so the bound must itself exceed $2\eta$: correctness at length $L$ requires $C\kappa^L>2\eta$, which is the stated $L_{\max}$. Non-definiteness is what makes the requirement bite, since it supplies, at every length, a word $w$ and a pair $q,q'$ that the target still keeps apart (a word $v$ with $|v|\ge L$ and $\delta_v$ not constant has a suffix of length $L$ whose action is not constant either, since $\delta_v$ factors through it); for a definite target every pair is allowed to merge after a bounded number of steps and there is no horizon to speak of.\end{proof}

For an isometry\footnote{A linear map $A$ is an \emph{isometry} when $\|Av\|=\|v\|$ for every $v$, equivalently $A^\Tr\!A=I$ in the Euclidean norm: it neither shrinks nor stretches any distance, so all its eigenvalues have modulus one. It is the distance-preserving representative of the $\rho=1$ branch of Theorem~\ref{thm:affine}, and covers unitary and orthogonal recurrences as well as $A_x=I$.}, the gap between two trajectories stays at $C$ while the perturbations pile up, and the horizon is the length at which the pile reaches the gap.

\begin{proposition}[Isometric models need precision of order $1/L$]
\label{prop:neutralbits}
Let every $A_x$ be a linear isometry of $\R^d$, let $w$ be a word of length $L$, and let $q\ne q'$ be states with $\delta_w(q)\ne\delta_w(q')$, encoded $C \coloneq \|\enc(q)-\enc(q')\|$ apart. Guaranteed correctness on every word of length at most $L$ requires $\eta<C/2L$: the perturbation level the model tolerates shrinks like $1/L$, so no fixed precision serves all lengths, in agreement with Lemma~\ref{lem:restore}.
\end{proposition}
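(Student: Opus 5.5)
The plan is to exhibit, for the given word $w$ and pair $q,q'$, two $\eta$-trajectories, one from $\enc(q)$ and one from $\enc(q')$, whose endpoints coincide. Lemma~\ref{lem:separate} already handles a single perturbed step. Here the perturbations are spread over all $L$ steps, so that the accumulated displacement $L\eta$ is transported undamped by the isometries. The endpoints must then be decoded to the same state, which contradicts \Tone for $w$ whenever $2L\eta\ge C$.

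\emph{Step 1 (closed form).} Since $\Phi$ is affine, an $\eta$-trajectory of $w=x_1\cdots x_L$ from $h_0$ with errors $e_1,\dots,e_L$ ends at
\[
h_L=\Phi_w(h_0)+\sum_{t=1}^{L}A_{x_L}\cdots A_{x_{t+1}}e_t,
\]
exactly as in \eqref{eq:closed}. Write $B_t\coloneq A_{x_L}\cdots A_{x_{t+1}}$, with $B_L=I$. Each $B_t$ is an isometry, hence invertible and norm preserving. Let $g\coloneq\Phi_w(\enc(q'))-\Phi_w(\enc(q))=A_w(\enc(q')-\enc(q))$. Then $\|g\|=C$, since $A_w$ is an isometry.

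\emph{Step 2 (meeting in the middle).} Assume $\eta\ge C/2L$. For the trajectory from $\enc(q)$, choose $e_t\coloneq B_t^{-1}(g/2L)$. Each of these has norm $C/2L\le\eta$, and their transported sum is $g/2$. For the trajectory from $\enc(q')$, choose $e'_t\coloneq -e_t$, whose transported sum is $-g/2$. Both endpoints then equal the midpoint $z\coloneq\tfrac12\bigl(\Phi_w(\enc(q))+\Phi_w(\enc(q'))\bigr)$. \Tone for $w$ would then give $\dec(z)=\delta_w(q)$ and $\dec(z)=\delta_w(q')$, a contradiction. Hence guaranteed correctness on $w$ forces $\eta<C/2L$. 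Since the requirement is for all words of length at most $L$, it applies in particular to $w$. Step 2 uses the norm of the isometry, Euclidean if the isometries are orthogonal. I would state the result in that norm, or simply for whatever norm the $A_x$ preserve.

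\emph{Step 3 (no fixed precision).} I would then read off the conclusion. If the target has pairs kept apart at arbitrarily long lengths, as in any nondefinite case, the bound $C/2L\to0$ rules out any fixed $\eta$. This matches Lemma~\ref{lem:restore}, since an isometry has $\rho=1$.

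The only delicate point is Step 2: checking that the errors can be chosen to realign despite the intervening isometries. Invertibility of $B_t$ together with norm preservation settles it, so I expect no serious obstacle. The edge case $\eta=C/2L$ is included by the non-strict choice, which is why the conclusion is strict.
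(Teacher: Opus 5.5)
Your proof is correct and follows the paper's argument. Both transport per-step errors $B_t^{-1}$ applied to a fixed vector along the gap between the two unperturbed endpoints, apply them with opposite signs on the trajectories from $q$ and $q'$, and close the gap $C$ once $2\eta L\ge C$. The only difference is cosmetic: you choose the errors $B_t^{-1}(g/2L)$ directly, so the endpoints meet at the midpoint, whereas the paper uses errors of norm exactly $\eta$ and scales them down when $2\eta L>C$.
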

\begin{proof}
An $\eta$-trajectory deviates from the unperturbed one by $\bar e_L=\sum_{t=1}^LB_te_t$, where $B_t \coloneq A_{x_L}\cdots A_{x_{t+1}}$ and $B_L \coloneq I$ are products of isometries, hence isometries, hence orthogonal and invertible. The same hypothesis keeps the two unperturbed trajectories from $\enc(q)$ and $\enc(q')$ exactly $C$ apart, so correctness fails as soon as the deviations can close that gap: the trajectories then reach a common point and decode alike, contradicting $\delta_w(q)\ne\delta_w(q')$.

Take $v \coloneq \bigl(\Phi_w(\enc(q'))-\Phi_w(\enc(q))\bigr)/C$ and $e_t \coloneq \eta B_t^{-1}v$, of norm $\eta$; then $\bar e_L=\eta Lv$. Running this along the trajectory from $\enc(q)$ and its negative along the one from $\enc(q')$ moves the two endpoints $2\eta L$ towards each other, and scaling the perturbations down if $2\eta L>C$ makes them coincide, so $2\eta L<C$ is necessary.
\end{proof}
\begin{corollary}[The horizon bound is not a design parameter]
\label{cor:untunable}
Write $\Lambda \coloneq \ln(C/2\eta)$, so that the bound of Corollary~\ref{cor:precision} reads $L_{\max}=\Lambda/\ln(1/\kappa)$ up to rounding. Then
\[
  \frac{dL_{\max}}{L_{\max}}
   = \frac{L_{\max}}{\Lambda}\,\frac{d\kappa}{\kappa}
  \xrightarrow[\;\kappa\to1^-\;]{}\infty.
\] Pinning $L_{\max}$ to a relative tolerance $\varepsilon$ therefore requires pinning $\kappa$ to a relative tolerance $\varepsilon\Lambda/L_{\max}$, which tightens as the horizon lengthens: at $C/2\eta=10^{6}$ and $L_{\max}=10^{4}$, a horizon specified to within a factor of two requires $\kappa$ specified to within $7\times10^{-4}$. Neither $\kappa$ nor $C$ is set by the designer: the first is set by learned, input-dependent gates, the second is the spread of the code the encoder learns, and the same holds on the other branch, where Proposition~\ref{prop:neutralbits} bounds the horizon by $C/2\eta$ with $C$ again learned. The horizon that a trained affine cell can at best reach is therefore a property of that particular trained model, measurable after the fact and not specifiable in advance.
\end{corollary}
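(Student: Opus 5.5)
The plan is to treat the bound of Corollary~\ref{cor:precision} as a smooth function of $\kappa$ and differentiate it, then read off the tolerance numerically. First, drop the ceiling and the $-1$, as the statement does by writing ``up to rounding'', and take $L_{\max}(\kappa)=\Lambda/\ln(1/\kappa)=-\Lambda/\ln\kappa$ on $\kappa\in(0,1)$, with $\Lambda=\ln(C/2\eta)>0$ held fixed. This requires $C>2\eta$, which is needed for any horizon to exist.

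Differentiating gives
\[
\frac{dL_{\max}}{d\kappa}=\frac{\Lambda}{\kappa(\ln\kappa)^2}.
\]
Dividing by $L_{\max}$ yields
\[
\frac{dL_{\max}}{L_{\max}}=\frac{1}{\ln(1/\kappa)}\,\frac{d\kappa}{\kappa}.
\]
Substituting $1/\ln(1/\kappa)=L_{\max}/\Lambda$ gives the displayed identity. As $\kappa\to1^-$ we have $\ln(1/\kappa)\to0^+$, so $L_{\max}\to\infty$ while $\Lambda$ stays fixed, and the coefficient $L_{\max}/\Lambda$ diverges.

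The tolerance statement follows by inverting the relation. A relative change $\varepsilon$ in $L_{\max}$ corresponds, to first order, to a relative change $\varepsilon\Lambda/L_{\max}$ in $\kappa$. For the numerical example, $\Lambda=\ln10^6\approx13.8$. Read ``within a factor of two'' as $\Delta\ln L_{\max}=\ln2$, which gives $\Delta\ln\kappa=\ln 2\cdot\Lambda/L_{\max}\approx0.693\times13.8/10^4\approx9.6\times10^{-4}$.

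This does not match the stated $7\times10^{-4}$, so the main obstacle is identifying which convention the authors intend. Two readings land at $7\times10^{-4}$:
\begin{itemize}
\item $\varepsilon=1/2$ with $\Lambda\approx13.8$, which gives $\approx6.9\times10^{-4}$;
\item $\varepsilon=1$ with $\Lambda$ replaced by the base-10 logarithm $\log_{10}10^6=6$, which gives $6\times10^{-4}$, less close.
\end{itemize}
The first matches, so I would adopt $\varepsilon=1/2$ as the convention. I would also remark that the linearization is justified because $L_{\max}$ is smooth in $\kappa$ on $(0,1)$.

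The closing claims are interpretive rather than computational, and I would handle them by pointing back to earlier results:
\begin{itemize}
\item $\kappa=\max_x\|A_x\|_\op$ depends on the learned input-dependent gates (Table~\ref{tab:branches}).
\item $C$ is the diameter of the learned code.
\item For the isometric branch, Proposition~\ref{prop:neutralbits} gives the horizon $C/2\eta$, which is linear in the learned quantity $C$.
\end{itemize}
Together these show that on both branches the horizon is a function of trained quantities, fixed only after training.
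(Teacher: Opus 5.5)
Your derivation is the paper's own: differentiate $L_{\max}=-\Lambda/\ln\kappa$ to get $\Lambda/(\kappa\ln^2\kappa)$, divide by $L_{\max}$, and read the limit off $\ln(1/\kappa)\to0$. The paper's proof does not work out the numerical example, and you do not need to adopt $\varepsilon=1/2$ just because it matches: since $\ln(1/\kappa)=\Lambda/L_{\max}$ holds exactly, doubling $L_{\max}$ requires shifting $\ln\kappa$ by exactly $\Lambda/(2L_{\max})\approx6.9\times10^{-4}$, while halving it requires a shift of $\Lambda/L_{\max}$, so $\Lambda/(2L_{\max})$ is the binding tolerance for a factor-of-two window --- your $9.6\times10^{-4}$ came from applying the first-order relation to a change too large for it to hold.
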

\begin{proof}
Differentiate $L_{\max}=-\Lambda/\ln\kappa$: $dL_{\max}/d\kappa=\Lambda/(\kappa\ln^2\kappa)$, and divide by $L_{\max}$. The limit follows from $\ln(1/\kappa)\to0$.
\end{proof}

\subsection{Multistability}
\label{app:multistability}

The next lemma reads Theorem~\ref{thm:affine} as a requirement on the architecture. It places the nonlinearity on the previous state, since a gate computed by an arbitrarily nonlinear function of $x_t$ still leaves $\Phi_x$ affine in $h$.

\begin{lemma}[Two rates require the nonlinearity to act on the state]
\label{lem:affinefix}
Let the target automaton be nondefinite, for instance because $\Mtar$ contains a nontrivial group (Corollary~\ref{cor:affinegroup}), and let $\Phi$ satisfy \eqref{eq:cellincl} on a bounded region $U$ as in Theorem~\ref{thm:cells} at some $\eta>0$. Then some $\Phi_x$ is not affine.
\end{lemma}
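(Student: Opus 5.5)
I will argue by contradiction: suppose every $\Phi_x$ is affine, $\Phi_x(h)=A_xh+b_x$, and derive a contradiction with Theorem~\ref{thm:affine}. The hypothesis \eqref{eq:cellincl} on a bounded $U$ with $\enc(q)\in U_q$ gives, by Theorem~\ref{thm:cells} and then Theorem~\ref{thm:equiv}, a triple $(\enc,\Phi,\dec)$ realizing the target with length independence at $\eta$, together with a region $U$ satisfying \Rone--\Rthree. So it suffices to show that no affine $\Phi$ admits such a region for a nondefinite target.

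Split on the spectral radii exactly as in Theorem~\ref{thm:affine}. If $\rho(A_w)\ge1$ for some $w\in\Sig^+$, branch~(i) applies: no nonempty bounded set satisfies $\Phi_w(U)\oplus\Ball\subseteq U$. Yet \Rtwo, iterated along $w$, gives exactly this inclusion for our bounded $U$, which contains $\enc(Q)$ and is nonempty. Alternatively, one can quote Lemma~\ref{lem:restore} directly: it already forces $\rho(A_w)<1$ for every $w$, so this branch is excluded. If instead $\rho(A_w)<1$ for every $w$, branch~(ii) states that the realized target is definite, which contradicts the nondefiniteness hypothesis.

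For the parenthetical case I will invoke Corollary~\ref{cor:affinegroup}, or more precisely the argument inside its proof: a transition monoid containing a nontrivial group is not definite. This case is therefore an instance of the general statement and needs no separate treatment.

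There is essentially no obstacle, since the lemma is a repackaging of Theorem~\ref{thm:affine}. The only care needed is the bookkeeping from \eqref{eq:cellincl} to \Rone--\Rthree. Either the cell partition passes through Theorem~\ref{thm:cells}, or it is verified directly: \Rone is $\enc(q)\in U_q$, and \Rtwo and \Rthree follow from taking unions of the cell inclusions. One should also note that \emph{some $\Phi_x$ is not affine} is the correct negation, since Theorem~\ref{thm:affine} needs all letters affine for compositions $A_w$ to be products.
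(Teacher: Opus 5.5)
Your proposal is correct and follows the paper's proof essentially verbatim. Like the paper, you assume every $\Phi_x$ is affine and turn \eqref{eq:cellincl} into a region satisfying \Rone--\Rthree via Theorem~\ref{thm:cells} and Theorem~\ref{thm:equiv}; branch~(i) of Theorem~\ref{thm:affine} then excludes every target and branch~(ii) excludes every nondefinite one, and your alternative of ruling out branch~(i) directly with Lemma~\ref{lem:restore} is a valid shortcut.
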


\begin{proof}
Suppose every $\Phi_x$ is affine. Theorem~\ref{thm:cells} turns \eqref{eq:cellincl} on a bounded $U$ into realization of the target with length independence at $\eta$, so, by Theorem~\ref{thm:equiv}, some region satisfies \Rone to \Rthree. Theorem~\ref{thm:affine} forbids this for a nondefinite target: branch~(i) excludes every target, and branch~(ii) every nondefinite one. Hence some $\Phi_x$ is not affine.
\end{proof}

A restoring step can remove a perturbation outright, as the construction of Section~\ref{sec:architecture} does, or merely shrink it, as a real-analytic recurrence must (Remark~\ref{rem:exactgaps}). The next definition names the two grades.

\begin{definition}[Grades of restoration]
\label{def:regen}
Let $F$ be one of the maps $\Phi_x$, acting on the region $U=\bigsqcup_{q\in Q}U_q$ of Theorem~\ref{thm:cells}. $F$ is \emph{exactly restoring} on $U$ if it sends each cell $U_q$ to a single point, the perturbation being annihilated rather than reduced; for $F=\Phi_x$ we require that point to be $\enc(\delta_x(q))$, as in Section~\ref{sec:budget} once attractor indices are mapped onto automaton states. $F$ is \emph{asymptotically restoring} on a cell $U_q$ if there is $\lambda<1$ with $\|F(u)-F(u')\|\le\lambda\|u-u'\|$ for all $u,u'\in U_q$, perturbations being strictly reduced at every step. Exact restoration is the case $\lambda=0$.
\end{definition}

\begin{remark}[Exact restoration needs gaps or jumps]
\label{rem:exactgaps}
Let $\Phi_x$ be exactly restoring on $U$ in the sense of Section~\ref{sec:budget}, with the attractors mapped onto the code points, sending each cell $U_q$ to the single point $\enc(\delta_x(q))$. If $\Phi_x$ is continuous on $U$, it is constant on each connected component of $U$, since a continuous map from a connected set to a finite set is constant. Two cells whose states $\delta_x$ sends to different states therefore lie in different components: exact restoration by a continuous map requires gaps between the cells, which the \cellname provides by taking $U=\enc(Q)\oplus\Ball$, a union of disjoint balls. On the whole of $\mathcal{H}$, which is connected, a map with finitely many values cannot be continuous unless it is constant, so the jumps of the $\arg\max$ in \eqref{eq:construction} are unavoidable. A real-analytic recurrence, such as a $\tanh$ or sigmoid network, cannot be exactly restoring unless $\delta_x$ is constant: by \eqref{eq:cellincl}, a cell reached by some symbol contains an $\eta$-ball, $\Phi_x$ is constant on that cell, and a real-analytic map that is constant on a ball is constant on $\mathcal{H}$.
\end{remark}

To compare the two grades, and later affine maps and stacks, we count what a single map can keep apart under perturbation: the sets it maps into themselves with a margin of $\eta$.

\begin{definition}[Held sets and robust capacity]
\label{def:capacity}
Let $F:\mathcal{H}\to\mathcal{H}$ and $\eta>0$. $F$ \emph{holds} a set $V\subseteq\mathcal{H}$ if $V$ is nonempty and bounded and $F(V)\oplus\Ball\subseteq V$. The \emph{robust capacity} of $F$ in a region $U$ is $\log_2N$ bits, where $N$ is the largest number of pairwise disjoint subsets of $U$ that $F$ holds.
\end{definition}

Under \eqref{eq:cellincl}, a word $w$ with $\delta_w(q)=q$ makes $\Phi_w$ hold the cell $U_q$, by the lifting in the proof of Theorem~\ref{thm:tworates}. A word whose action fixes $m$ states therefore forces a robust capacity of at least $\log_2m$ bits on $\Phi_w$; for a nonconstant idempotent $\delta_w$ (Lemma~\ref{lem:definite}), $m\ge2$.

\begin{proposition}[The two grades have the same capacity]
\label{prop:samecapacity}
Let $U=\bigsqcup_{q\in Q'}U_q$ be a union of $n$ pairwise disjoint closed cells, and let $F$ hold every cell and be asymptotically restoring on it, with constant $\lambda\in[0,1)$. Then \emph{(i)} each $U_q$ contains a unique fixed point $\alpha_q$ of $F$, which attracts every trajectory of $F$ started in $U_q$, and the open $\eta$-ball around $\alpha_q$ lies in $U_q$; \emph{(ii)} every subset of $U$ that $F$ holds contains one of these balls, so the robust capacity of $F$ in $U$ is exactly $\log_2n$ bits. The count is the same for exact restoration ($\lambda=0$) as for asymptotic restoration ($0<\lambda<1$), and it does not mention $d$.
\end{proposition}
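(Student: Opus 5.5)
For part~(i) I would use the Banach fixed-point theorem on each closed cell. Since $F$ holds $U_q$, we have $F(U_q)\subseteq F(U_q)\oplus\Ball\subseteq U_q$, and $F$ is $\lambda$-Lipschitz there with $\lambda<1$. The cell $U_q$ is closed and bounded in $\R^d$, hence complete. Banach's theorem then gives a unique fixed point $\alpha_q\in U_q$, and every orbit of $F$ started in $U_q$ converges to it, at rate $\lambda^t$. For the ball, I would take any $e$ with $\|e\|<\eta$ and write $\alpha_q+e=F(\alpha_q)+e$. This point lies in $F(U_q)\oplus\Ball\subseteq U_q$, so the open ball (in fact the closed one) around $\alpha_q$ sits inside $U_q$. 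The case $\lambda=0$ is the same argument, in which $F$ is constant on $U_q$ and $\alpha_q$ is its value.

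For part~(ii) I would let $V\subseteq U$ be held by $F$, so $V$ is nonempty and bounded with $F(V)\oplus\Ball\subseteq V$.

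\emph{Step 1: $V$ contains the ball of some attractor.} Pick $v_0\in V$; it lies in some cell $U_q$. Its orbit $v_t\coloneq F^t(v_0)$ stays in $V$, because $F(V)\subseteq V$, and in $U_q$, because $F(U_q)\subseteq U_q$. By part~(i), $v_t\to\alpha_q$. Now take a point $\alpha_q+e$ with $\|e\|<\eta$. Continuity of $F$ on $U_q$ gives $F(v_t)\to\alpha_q$, so $\|F(v_t)-\alpha_q\|<\eta-\|e\|$ for large $t$. Then $\alpha_q+e=F(v_t)+e'$ with $e'\coloneq\alpha_q+e-F(v_t)$ and $\|e'\|<\eta$, so $\alpha_q+e\in F(V)\oplus\Ball\subseteq V$. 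Hence the open $\eta$-ball around $\alpha_q$ lies in $V$.

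\emph{Step 2: the lower bound.} The $n$ cells are pairwise disjoint and each is held by $F$, so the capacity is at least $\log_2 n$.

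\emph{Step 3: the upper bound.} Take pairwise disjoint held sets $V_1,\dots,V_N\subseteq U$. By Step~1, each $V_i$ contains the full open $\eta$-ball around some $\alpha_{q(i)}$. Two disjoint sets cannot both contain the point $\alpha_q$, so the map $i\mapsto q(i)$ is injective and $N\le n$. Together with Step~2, the robust capacity is exactly $\log_2 n$. No step uses $d$ or the size of $\lambda$ beyond $\lambda<1$, which gives the closing remarks of the statement.

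The main obstacle I expect is in Step~1: getting the whole ball, and not just the point $\alpha_q$, into $V$. $V$ need not be closed, so a limit argument alone does not place $\alpha_q$ in $V$. The margin $\eta$ absorbs this: the orbit comes strictly closer than $\eta-\|e\|$ to $\alpha_q$, and the perturbation ball around $F(v_t)$ then covers the target point. A smaller technicality is that Banach's theorem needs each cell to be complete, which is why the hypothesis asks for closed, bounded cells.
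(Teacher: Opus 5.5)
Your proof is correct and follows the paper's argument. You use Banach's theorem on each closed cell for (i), then the orbit of a point of $V$ converging to $\alpha_q$, with the $\eta$-margin in $F(V)\oplus\Ball\subseteq V$ covering the open ball, and finally injectivity of $i\mapsto q(i)$ for the count. The only difference is in (i): you obtain the ball, even the closed one, directly from $\alpha_q=F(\alpha_q)\in F(U_q)$, where the paper reuses its limit argument. Also, the appeal to continuity in Step~1 is unnecessary, since $F(v_t)=v_{t+1}$.
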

\begin{proof}
\emph{(i)} Each cell is closed and bounded, hence complete, and $F(U_q)\subseteq F(U_q)\oplus\Ball\subseteq U_q$. $F$ restricted to $U_q$ is a $\lambda$-contraction of $U_q$ into itself, so the Banach fixed point theorem gives a unique $\alpha_q\in U_q$ with $F^k(u)\to\alpha_q$ for every $u\in U_q$. For $k\ge1$, $F^k(u)\in F(U_q)$, so $\bar B_\eta(F^k(u))\subseteq U_q$; any $z$ with $\|z-\alpha_q\|<\eta$ satisfies $\|z-F^k(u)\|<\eta$ for $k$ large, hence $z\in U_q$.
\emph{(ii)} Let $V\subseteq U$ be held by $F$, pick $q$ with $V\cap U_q\neq\emptyset$ and $v$ in this intersection. Then $F^k(v)$ stays in $U_q$, since $F$ holds $U_q$, and converges to $\alpha_q$. It also stays in $V$, and for $k\ge1$ it lies in $F(V)$, so $\bar B_\eta(F^k(v))\subseteq V$; as in (i), the open $\eta$-ball around $\alpha_q$ lies in $V$. Two disjoint held subsets of $U$ therefore contain the balls of two distinct fixed points, so there are at most $n$ of them, and the $n$ cells are $n$ such subsets.
\end{proof}

\begin{proposition}[An affine recurrence has zero capacity at any width]
\label{prop:affinecapacity}
Let $\Phi_x$ be affine and satisfy \Rtwo on a nonempty bounded $U$ at some $\eta>0$. Then \emph{(i)} $\Phi_x$ has a unique fixed point $h^\ast_x$, the open $\eta$-ball around it lies in $U$, and every trajectory of $\Phi_x$ in $U$ converges to it; more generally, every set that $\Phi_x$ holds contains this ball, so the robust capacity of $\Phi_x$ (Definition~\ref{def:capacity}) is $\log_21=0$ bits in every region; \emph{(ii)} at most one cell is held, i.e., $\Phi_x(U_q)\oplus\Ball\subseteq U_q$ for at most one $q\in Q$. Neither conclusion mentions the state dimension $d$.
\end{proposition}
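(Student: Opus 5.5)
The plan is to derive (i) from Lemma~\ref{lem:restore} and Proposition~\ref{prop:samecapacity}, then read (ii) off the uniqueness in (i). From \Rtwo on $U$, Lemma~\ref{lem:restore} gives $\rho(A_x)<1$. Hence $I-A_x$ is invertible, and $h^\ast_x\coloneq(I-A_x)^{-1}b_x$ is the unique fixed point of $\Phi_x$ on all of $\mathcal{H}$. Every trajectory satisfies $\Phi_x^k(h)-h^\ast_x=A_x^k(h-h^\ast_x)\to0$. This holds in particular for trajectories in $U$, and more generally for trajectories started in any held set $V$, which stay in $V$.

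For the ball statement, I would reuse the argument of Proposition~\ref{prop:samecapacity}(i)--(ii), but without assuming a contraction constant $\lambda$ on the cell. Convergence $\Phi_x^k(v)\to h^\ast_x$ comes directly from $A_x^k\to0$. Let $V$ be any set that $\Phi_x$ holds, and pick $v\in V$. For $k\ge1$ we have $\Phi_x^k(v)\in\Phi_x(V)$, so $\bar B_\eta(\Phi_x^k(v))\subseteq V$. Any $z$ with $\|z-h^\ast_x\|<\eta$ then satisfies $\|z-\Phi_x^k(v)\|<\eta$ for $k$ large, hence $z\in V$.

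Applied to $V=U$, this puts the open $\eta$-ball around $h^\ast_x$ inside $U$. Two disjoint held sets would both contain this same nonempty ball, a contradiction. So $N=1$, since $U$ itself is held, and the robust capacity is $\log_21=0$ in every region. Nothing in the argument depends on $d$.

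For (ii), suppose $\Phi_x(U_q)\oplus\Ball\subseteq U_q$ for two distinct $q,q'$. Each of $U_q,U_{q'}$ is nonempty (it contains $\enc(q)$ by \Rone, or by its definition as a cell) and bounded (as a subset of $U$). So both are held sets, and by (i) both contain the open $\eta$-ball around $h^\ast_x$. This contradicts the disjointness of cells. Hence at most one cell is held.

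The main obstacle is mild. Lemma~\ref{lem:restore} is stated for all symbols, and I only need it for the single letter $x$; its proof uses only the single inclusion $\Phi_x(U)\oplus\Ball\subseteq U$, as already noted in the proof of Theorem~\ref{thm:affine}(i). The one genuine departure from Proposition~\ref{prop:samecapacity} is that there is no per-cell Lipschitz constant below one. It is replaced by the global convergence $A_x^k\to0$, which requires no closedness or completeness of the cells.
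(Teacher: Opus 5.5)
Your proof is correct and follows the paper's argument essentially step for step. Lemma~\ref{lem:restore}, used as you note only through the single inclusion for $x$, gives $\rho(A_x)<1$ and hence a unique, globally attracting fixed point. The $\eta$-balls around $\Phi_x^k(v)$, for $v$ in any held set, then contain the open $\eta$-ball at $h^\ast_x$, and (ii) follows from the disjointness of the cells. You also check explicitly that the cells are nonempty, so an empty cell cannot satisfy the inclusion trivially; the paper leaves this implicit.
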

\begin{proof}
Lemma~\ref{lem:restore} gives $\rho(A_x)<1$, so $I-A_x$ is invertible, $\Phi_x$ has a unique fixed point $h^\ast_x$, and $\Phi_x^k(u)-h^\ast_x=A_x^k(u-h^\ast_x)\to0$ for every $u$. Fix $u\in U$: \Rtwo places $\bar B_\eta(\Phi_x^k(u))$ in $U$ for every $k\ge1$, and any $z$ with $\|z-h^\ast_x\|<\eta$ satisfies $\|z-\Phi_x^k(u)\|<\eta$ for $k$ large, so $z\in U$; the open $\eta$-ball at $h^\ast_x$ therefore lies in $U$ and its basin is all of $U$, however $U$ is divided into cells. The argument uses only that $\Phi_x$ holds $U$, so it applies to every set that $\Phi_x$ holds, and any two such sets meet. That is (i). For (ii), if two cells were held then the same argument places that ball in each of them, and cells of a partition are disjoint.
\end{proof}

\begin{remark}
\label{rem:zerocapacity}
Proposition~\ref{prop:affinecapacity} does not say that an affine recurrence carries no information: on the contracting branch it stays correct up to a horizon (Corollary~\ref{cor:precision}). The capacity that is zero is the length-independent one.
\end{remark}

 A stack of affine recurrences, with arbitrary nonlinear maps between its layers, is not affine as a whole, so Theorem~\ref{thm:affine} does not apply to it directly. Each layer, however, stays affine in its own hidden state.

\begin{definition}[Stack of affine recurrences]
\label{def:layerwise}
Split the hidden state into layers, $\mathcal{H}=\R^{d_1}\times\cdots\times\R^{d_s}$, each carrying the norm induced by $\|\cdot\|$. A map of $\mathcal{H}$ is \emph{layerwise affine} if the new value of each layer is an affine function of the old value of that same layer, whose coefficients may depend, arbitrarily, on the old values of the layers below it and on nothing else; the coefficients of the bottom layer are constant. A \emph{stack of affine recurrences} is a recurrence whose every $\Phi_x$ is layerwise affine.
\end{definition}

Every stack of affine layers is of this form, whatever the maps between them (MLPs, normalizations, residual connections, input-dependent gates), because what a layer receives at a step is computed from the input symbol and from the layers below it. A causal convolution between layers fits as well, once its buffer is counted as a layer of its own. No continuity is assumed. The composite of two layerwise affine maps is layerwise affine, so every composite $\Phi_w$ of a stack is.

\begin{theorem}[Depth does not create capacity]
\label{thm:depth}
Let $\Psi$ be layerwise affine, of any depth and any widths, and let $\eta>0$. Then $\Psi$ holds (Definition~\ref{def:capacity}) at most one set in any family of pairwise disjoint sets, so its robust capacity is zero in every region. Consequently, a stack of affine recurrences realizes with length independence at most a definite automaton, at any depth and any width.
\end{theorem}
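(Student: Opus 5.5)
Theorem~\ref{thm:depth} splits into two parts, and I would prove them in that order: first that a layerwise affine map holds at most one set of any disjoint family, by induction on the number of layers $s$; then the definiteness conclusion, by the repeated-word argument of Theorem~\ref{thm:affine}(ii).

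\emph{Base case.} For one layer, $\Psi$ is affine with constant coefficients. If $\Psi$ holds a set $V$, the proof of Lemma~\ref{lem:restore} applies with $U=V$, since it uses only the inclusion $\Psi(V)\oplus\Ball\subseteq V$. It gives $\rho(A)<1$. The argument of Proposition~\ref{prop:affinecapacity}(i) then places the open $\eta$-ball around the unique fixed point $h^\ast$ inside every held set. Hence any two held sets intersect.

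\emph{Inductive step.} Write $h=(g,z)$, with $g$ the lower $s-1$ layers and $z$ the top layer. Then $\Psi(g,z)=(G(g),\,A(g)z+b(g))$, where $G$ is layerwise affine of depth $s-1$ and $A,b$ are arbitrary functions. Suppose $\Psi$ holds $V$. I would not project $V$ onto the lower layers directly, since a projection of a held set need not be held by $G$. Instead I would use perturbations that vanish on the top layer: take $e=(e_g,0)$ with $\|e_g\|\le\eta$. This requires the norm on $\mathcal{H}$ to satisfy $\|(e_g,0)\|=\|e_g\|$ in the induced norm, and I would fix this convention, for instance a max-type product norm, and note that norm equivalence only rescales $\eta$. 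Then the projection $\pi_g(V)$ satisfies $G(\pi_g V)\oplus\Ball\subseteq\pi_g V$, and it is bounded and nonempty, so $G$ holds it. By the induction hypothesis, $G$ holds no two disjoint sets.

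What the induction hypothesis gives is intersection of the projections, which is weaker than intersection of the held sets themselves. I would therefore strengthen the inductive claim. The stronger claim: there is a point $h^\ast$, a fixed point of $\Psi$, such that every held set contains the open $\eta$-ball around $h^\ast$ and every trajectory in a held set converges to $h^\ast$.

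For the lower layers this yields $g^\ast$ with $G^k(g)\to g^\ast$. On the top layer, along trajectories the coefficients $A(G^k g)$ vary with $k$, and $A$ need not be continuous, so the product need not stabilise. This is the main obstacle. My first approach is to restrict to trajectories that start exactly at $g^\ast$, i.e.\ to the fibre $\{g^\ast\}\times\R^{d_s}$. Here is why that suffices:
\begin{itemize}
\item The ball around $g^\ast$ lies in every held projection, and held sets are closed under perturbed steps. So every held $V$ reaches points with $g$-coordinate exactly $g^\ast$, by perturbing the lower layers by at most $\eta$ towards $g^\ast$ once within $\eta$ of it.
\item From then on the lower layers can be kept at $g^\ast$ while the top layer is perturbed freely. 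Perturbing only the top layer keeps $g=g^\ast$ fixed, since $g^\ast$ is a fixed point of $G$.
\item On that fibre the top update is the constant affine map $z\mapsto A(g^\ast)z+b(g^\ast)$. The fibre $V\cap(\{g^\ast\}\times\R^{d_s})$ is held by it, so the base case gives a $z^\ast$ whose $\eta$-ball lies in every such fibre.
\end{itemize}
Setting $h^\ast=(g^\ast,z^\ast)$, every held set contains $(g^\ast,z^\ast)$, which is all that disjointness requires. To keep the induction clean, I would state the strengthened claim as ``every held set contains a common point $h^\ast$'', with $h^\ast$ a fixed point of $\Psi$, rather than a full ball.

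\emph{Consequence for realization.} Composites of layerwise affine maps are layerwise affine, so every $\Phi_w$ of a stack holds no two disjoint sets. If the target is nondefinite, Lemma~\ref{lem:definite} supplies a word $w$ with $\delta_w$ idempotent and nonconstant, fixing two states $q\ne q'$. By the lifting in the proof of Theorem~\ref{thm:tworates}, $\Phi_w$ holds the disjoint cells $U_q$ and $U_{q'}$. This is a contradiction. Therefore any target realized with length independence is definite.
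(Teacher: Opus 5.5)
Your proof is correct, but it runs the induction in the opposite direction from the paper.

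\textbf{The paper's route.} The paper peels off the \emph{bottom} layer. That layer has constant coefficients, so the one-layer case gives it a fixed point $g_1^\ast$ lying in the bottom-layer projection of both held sets. It then slices both sets at $g_1^\ast$. With the bottom layer frozen there, the upper layers form a layerwise affine map with one layer fewer, and that map holds two disjoint slices, which the induction hypothesis excludes. Because $g_1^\ast$ is intrinsic to the bottom layer, both sets are automatically cut at the same point. The bare statement ``no two disjoint held sets'' is therefore inductive as it stands and needs no strengthening.

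\textbf{Your route.} You peel off the \emph{top} layer instead. As you correctly saw, pairwise intersection of the projections is then not enough. You must slice both sets at one and the same \emph{fixed point} of the lower block $G$, which is why you strengthen the hypothesis to ``a fixed point $h^\ast$ of $\Psi$ lies in every held set''. That claim is inductive exactly as you set it up:
\begin{itemize}
\item $G$ holds $\pi_g V$, using perturbations confined to the lower layers;
\item $z\mapsto A(g^\ast)z+b(g^\ast)$ holds the slice at $g^\ast$, using perturbations confined to the top layer;
\item the base case supplies $z^\ast$.
\end{itemize}
It also yields more than the paper states: a single fixed point common to \emph{all} held sets, which is the exact analogue of Proposition~\ref{prop:affinecapacity}(i) for stacks. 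The paper's version is more economical; yours is more informative.

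\textbf{Simplifications.}
\begin{itemize}
\item State the claim only for maps that hold at least one set, so that the fixed point exists.
\item The ball costs nothing extra. A fixed point $h^\ast\in V$ of a map holding $V$ already gives $\bar B_\eta(h^\ast)=\Psi(h^\ast)\oplus\Ball\subseteq V$.
\item Your first bullet does not need lower-layer trajectories to converge, a property you eventually drop. The conclusion $g^\ast\in\pi_g V$ follows directly from $G$ holding $\pi_g V$.
\item The norm convention you impose is already built into Definition~\ref{def:layerwise}.
\item Your consequence step is the paper's. It goes via the nonconstant idempotent of Lemma~\ref{lem:definite}, not via the repeated-word argument your roadmap announces.
\end{itemize}
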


\begin{proof}
The proof is by induction on the number of layers. With one layer, $\Psi$ is affine, and the proof of Proposition~\ref{prop:affinecapacity}(i) uses nothing but the fact that $\Psi$ holds the set: $\Psi$ has a unique fixed point, and every set that $\Psi$ holds contains the open $\eta$-ball around it. Two held sets therefore meet.

With more layers, suppose that $\Psi$ held two disjoint sets. The bottom layer reads nothing above it, so it evolves on its own as a single affine map. A perturbation confined to the bottom layer is a perturbation of the whole hidden state of the same size, so this map holds the bottom-layer projection of each of the two sets, and by the one-layer case its fixed point lies in both projections. Now cut both sets at this fixed point: keep, from each set, the hidden states whose bottom layer sits exactly at the fixed point, and drop the bottom layer. The two slices are nonempty, since the fixed point lies in both projections; bounded, since the sets are; and disjoint, since a point common to both slices would, with the bottom layer added back, be common to the two sets. A hidden state whose bottom layer sits at the fixed point keeps it there, because the bottom layer does not read the layers above, so on such hidden states $\Psi$ acts on the upper layers only. This action holds each slice: a perturbation of size at most $\eta$ on the upper layers, with none on the bottom one, keeps the hidden state inside its held set, and leaves the bottom layer at the fixed point. With the bottom layer frozen, finally, the coefficients of the second layer are constant, so the action on the upper layers is layerwise affine with one layer fewer. It holds two disjoint slices, which the induction hypothesis excludes. Nonlinear maps between layers thus change which point each layer settles on, but give no layer a second one.

For the consequence, suppose that a stack realizes a nondefinite target with length independence at $\eta$, with the cells of Theorem~\ref{thm:cells}. By Lemma~\ref{lem:definite}, some word $w$ acts on $Q$ as an idempotent that is not constant, so it fixes two distinct states. Lifting \eqref{eq:cellincl} to words, as in the proof of Theorem~\ref{thm:tworates}, shows that $\Phi_w$ holds the cells of both states, which are disjoint. Since $\Phi_w$ is layerwise affine, this contradicts the first part.
\end{proof}

\subsection{The scan budget}
\label{app:budgetproofs}

\propcollapse*
\begin{proof}[Proof of Proposition~\ref{prop:collapse}]
The representation $\zeta$ is injective into $\{0,1\}^b$, so $|\Mrnn|\le2^b$. The map $\Phi_w\mapsto\Phi_w(h_0)$ sends $\Mrnn$ onto the set of reachable states, $h_0=\Phi_\varepsilon(h_0)$ included, which therefore has $K\le|\Mrnn|\le2^b$ elements. For the action, $\Mrnn$ is closed under composition, since $\Phi_v\circ\Phi_u=\Phi_{uv}$, and contains the identity. The set of reachable states is invariant under every $\Phi_v$, since $\Phi_v(\Phi_w(h_0))=\Phi_{wv}(h_0)$, so each element of $\Mrnn$ restricts to a map of this $K$-element set into itself. Restriction preserves composition and the identity, so it is a monoid homomorphism from $\Mrnn$ to $T_K$, and $\Mrnn$ acts through its image, a submonoid of $T_K$. This homomorphism need not be injective, since two maps of $\Mrnn$ may agree on every reachable state.

The argument uses only that $\Mrnn$ is closed under composition and admits an injective $b$-bit representation, so it applies verbatim to the executed recurrence of Appendix~\ref{app:finite_precision}, with $\tilde\Phi_x$ in place of $\Phi_x$, as long as the merge is exact; a floating-point scan does not compose the executed maps, since its rounding depends on the shape of the tree.
\end{proof}


\begin{remark}[Which map the attractors belong to]
\label{rem:whichmap}
Exact restoration splits each step in two. Define $R$ on $V$ by $R(h)\coloneq a_k$ for $h\in V_k$, and let $E_x$ be any map of the attractors into $V$ with $E_x(a_k)\in V_{\sigma_x(k)}$. Then $\Phi_x=R\circ E_x\circ R$ for every $x$ and every such $E_x$. The map $R$ does not depend on the symbol, so it is the restoring organ of Section~\ref{sec:realization}, and $E_x$ is the executive organ: it only needs to land in the right basin, and $R$ does the rest. The inner $R$ is what lets an executive defined on the attractors act on a whole basin, which is what \Rthree asks for. For $R$, each $a_k$ is an attractor and $V_k$ is its basin: $R$ is idempotent, its fixed points are $a_1,\dots,a_K$, and every point of $V_k$ reaches $a_k$ in one step and stays there. The map $\Phi_x$ itself fixes $a_k$ exactly when $\sigma_x(k)=k$; if $\sigma_x$ is a cycle, the attractors form one attracting cycle of $\Phi_x$. The case $\sigma_x(k)=k$ is the multistable one of Section~\ref{sec:multistability} and the hypothesis of Proposition~\ref{prop:samecapacity}. In Section~\ref{sec:architecture}, $R=\rnd$ and $E_x(h)=\theta_x h$, which sends $\oh_k$ to column $k$ of $\theta_x$, a point of $V_{\sigma_x(k)}$; the outer $\rnd$ restores this output and the inner one the stored state.
\end{remark}

\section{The \texorpdfstring{\cellname}{NFSM} construction in detail}
\label{app:construction}

A head \emph{as it runs} holds its state as an attractor index and moves it by table lookups (Appendix~\ref{app:constr-runs}); the \emph{one-hot model} of Section~\ref{sec:architecture} carries each index $k$ by the attractor $a_k=\oh_k$, so that the theory, stated on a continuous state space, applies, and the two agree on every run (Appendix~\ref{app:constr-realizes}). Until Appendix~\ref{app:constr-heads}, we describe a single head, drop its index $i$, and write $d$ for its number of attractor indices, or \emph{indices} for short. For a matrix $\theta$, we write $\theta_{jk} \coloneq \oh_j^{\Tr}\theta\,\oh_k$ for the entry in row $j$ and column $k$.

\subsection{A head as it runs}
\label{app:constr-runs}

At step $t$, a learned affine map of the input $x_t$, reshaped into a $d\times d$ matrix, gives the logits $\theta_{x_t}$, with $(\theta_{x_t})_{jk}$ scoring the move from index $k$ to index $j$. The head reads off the table $\sigma_{x_t}$ of Section~\ref{sec:architecture} and applies it to its current index:
\begin{equation}
  \sigma_{x_t}(k) \coloneq \arg\max_{j\in\{1,\dots,d\}}(\theta_{x_t})_{jk},
  \qquad
  k_t = \sigma_{x_t}(k_{t-1}),
  \label{eq:indexstep}
\end{equation}
from a fixed initial index $k_0$. Ties in an $\arg\max$ are broken by a fixed arbitrary ordering of the indices, so every column has one largest entry, and the basins $V_k$ of Section~\ref{sec:architecture} partition $\R^d$; exact ties between two logits are in any case very unlikely, and Proposition~\ref{prop:margin} measures how far a column sits from one. The output of the head at step $t$ is a learned value vector attached to the index $k_t$. The state is an index of $\lceil\log_2d\rceil$ bits, and the only real numbers in a step are the logits and the value vectors.

\subsection{The tables, and their budget}
\label{app:constr-maximal}

\begin{proposition}[Every table, at a linear merge]
\label{prop:maximal}
\emph{(i)} For every $\tau\in T_d$ there are logits with $\sigma_x=\tau$, so the tables of a head range over the full transformation monoid $T_d$, the largest transition monoid on $d$ indices.
\emph{(ii)} A head is scan-representable with $b=d\lceil\log_2d\rceil$ and $c=O(d)$. This is at most a factor $\lceil\log_2d\rceil/\log_2d=1+o(1)$ above $\lceil\log_2|T_d|\rceil=\lceil d\log_2d\rceil$, the fewest bits that can name every element of $T_d$, against $\Theta(d^2)$ numbers and $\Theta(d^3)$ operations for a dense real transition matrix. Composites of tables are tables, so the scan returns exactly the sequential trajectory, whatever the shape of its tree.
\end{proposition}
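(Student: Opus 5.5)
The plan is to split the statement into its three claims (surjectivity onto $T_d$, the budget, and exactness of the scan), each handled by an explicit construction or a direct count.

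\emph{(i) Every table.} Given $\tau\in T_d$, I would take the column one-hot matrix of $\tau$ (the $P_x$ of Appendix~\ref{app:constr-realizes}) as the logits: $(\theta_x)_{jk}\coloneq 1$ if $j=\tau(k)$ and $0$ otherwise. Each column then has a unique largest entry, in row $\tau(k)$, so by \eqref{eq:indexstep} $\sigma_x(k)=\tau(k)$ for every $k$, with no tie-breaking involved. Since the logits are a learned affine map of the input, some parameter choice outputs this matrix for symbol $x$; any positive scaling, or any perturbation below half the column margin (Proposition~\ref{prop:margin}), gives the same table. That $T_d$ is the largest transition monoid on $d$ indices is immediate, since every transition monoid is a submonoid of $T_d$.

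\emph{(ii) Budget.} Take $\zeta$ to send each composite to its table, written as the list $(\sigma(1),\dots,\sigma(d))$ with $\lceil\log_2 d\rceil$ bits per entry, so $b=d\lceil\log_2 d\rceil$. The one point needing care is that Definition~\ref{def:scanrep} asks $\zeta$ to be injective on $\Mrnn$, that is, on maps of the hidden state rather than on tables. I would settle this by working in the index model of Appendix~\ref{app:constr-runs}, where $\Phi_w$ \emph{is} the index map $\sigma_w$. Equivalently, in the one-hot model, $\Phi_w=\rnd\circ\theta\circ\rnd$ is determined by its values on the attractors $\oh_k$, since the inner $\rnd$ factors every point through an attractor, and those values are $\oh_{\sigma_w(k)}$. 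Hence two composites with the same table coincide, and $\zeta$ is injective. Composition is the gather $(\sigma_y\circ\sigma_x)(k)=\sigma_y(\sigma_x(k))$, which is $d$ lookups, so $c=O(d)$; the identity is the table $k\mapsto k$.

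For the comparison, I would use $|T_d|=d^d$, so any injective code needs $\lceil d\log_2 d\rceil$ bits. The ratio is then $d\lceil\log_2 d\rceil/\lceil d\log_2 d\rceil\le\lceil\log_2 d\rceil/\log_2 d$, which tends to $1$. A dense $d\times d$ matrix carries $d^2$ numbers, and multiplying two such matrices costs $\Theta(d^3)$ operations.

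\emph{Exactness under the scan.} The merge acts on integers by indexing, so it involves no rounding and is exactly associative. Every bracketing of $\sigma_{x_L}\circ\cdots\circ\sigma_{x_1}$ therefore yields the same table, equal to the sequential composite. Evaluating it at $k_0$ gives $k_t$ bit-identically, whatever the shape of the tree.

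The main obstacle is the injectivity in (ii): the paper must commit to whether $\Mrnn$ denotes maps on $\R^d$ or on indices. The factorization $\Phi_x=R\circ E_x\circ R$ of Remark~\ref{rem:whichmap} is what makes the table a faithful description of the map on $\R^d$, and I would invoke it explicitly.
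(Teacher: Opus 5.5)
Your proof is correct and is the paper's own argument: the column one-hot logits for (i), the list-of-indices code with $d$ gathers per merge for the budget, $|T_d|=d^d$ for the comparison, and exactness of integer merges for the scan. Like the paper, you get injectivity of $\zeta$ from the index model of Appendix~\ref{app:constr-runs}, where $\Phi_w=\sigma_w$. Your ``equivalently, in the one-hot model'' remark needs a correction, though: the factorization through $\rnd$ covers only nonempty words, and $\Phi_\varepsilon=\mathrm{id}_{\R^d}$ differs from $\Phi_w=\rnd$ whenever $\sigma_w$ is the identity table (e.g.\ $w=xx$ for a transposition $\sigma_x$), so in that model the code is injective only after adding one bit that flags the empty word.
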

\begin{proof}
\emph{(i)} Take $(\theta_x)_{jk} \coloneq 1$ if $j=\tau(k)$ and $0$ otherwise.

\emph{(ii)} An element of $T_d$ is the list $(\tau(1),\dots,\tau(d))$ of $d$ indices of $\lceil\log_2d\rceil$ bits each, which gives $b$. The composite $\tau'\circ\tau$ is obtained by the $d$ lookups $k\mapsto\tau'(\tau(k))$, which gives $c$; composition is associative with the identity table as unit, so Definition~\ref{def:scanrep} is met. Since $|T_d|=d^d$, naming every element needs $\lceil d\log_2d\rceil\ge d\log_2d$ bits, whence the factor. Every merge is an exact operation on integers, so the composite returned by the scan equals the sequential one, and so does every index.
\end{proof}

\subsection{The one-hot model}
\label{app:constr-realizes}

The one-hot model sets $\Phi_x(h) \coloneq \rnd(\theta_x\rnd(h))$ on $\R^d$ (Section~\ref{sec:architecture}). Write $P_x$ for the column one-hot matrix of $\sigma_x$, whose column $k$ is $\oh_{\sigma_x(k)}$; then $\Phi_x=P_x\circ\rnd$, and, since $\rnd(h)=\oh_k$ on the basin $V_k$,
\begin{equation}
  \Phi_x(h) = \oh_{\sigma_x(k)}\qquad\text{for every }h\in V_k.
  \label{eq:phiconstruction}
\end{equation}
By induction, the trajectory of the model from $\oh_{k_0}$ is the run \eqref{eq:indexstep} carried by its attractors, step for step. Every statement about such trajectories, including correctness at every length, is therefore a statement about the running head. The perturbations of Definition~\ref{def:noise} act on the model only, and the next proposition shows that a head tolerates them in any norm.

\begin{proposition}[One head realizes any automaton on at most $d$ states]
\label{prop:realizes}
\emph{(i)} For every choice of the logits, every $x\in\Sig$ and every index $k$, $\Phi_x(V_k)=\{a_{\sigma_x(k)}\}$: the head is exactly restoring on $\R^d$ in the sense of Section~\ref{sec:budget}, with attractors $a_k$, basins $V_k$ and tables $\sigma_x$.
\emph{(ii)} There is $\eta_0>0$, depending only on $d$ and on the norm, such that for every $0<\eta<\eta_0$ and every $k$, the ball $a_k\oplus\Ball$ lies in the set $D_k$ of points whose largest coordinate is unique and equal to the $k$-th. In particular, these balls lie in their basins and are pairwise disjoint.
\emph{(iii)} Let $n\le d$, number the states $Q=\{1,\dots,n\}$, and take $\enc(q) \coloneq \oh_q$ and $\dec(h) \coloneq \arg\max_{q\in Q}\oh_q^{\Tr}h$, as in Section~\ref{sec:architecture}. For $0<\eta<\eta_0$, let $U_q \coloneq \enc(q)\oplus\Ball$ and $U \coloneq \bigsqcup_{q\in Q}U_q$. Then \Rone holds, \Rtwo holds whenever $\sigma_x(Q)\subseteq Q$ for every $x\in\Sig$, and \Rthree holds if and only if $\sigma_x(q)=\delta_x(q)$ for every $x\in\Sig$ and $q\in Q$, that is, if and only if the largest entry of column $q$ of $\theta_x$ sits in row $\delta_x(q)$, for every $q$ and $x$.
In that case the head realizes the target with length independence at $\eta$. Such logits exist for every automaton on $n\le d$ states.
\end{proposition}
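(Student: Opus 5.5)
I would prove the three parts in order. Part (i) is immediate from the basin structure. Part (ii) is a norm-equivalence estimate. Part (iii) then reduces to Theorem~\ref{thm:cells}, or equivalently to Theorem~\ref{thm:equiv}.

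\emph{Part (i).} Take $h\in V_k$. By the fixed tie-breaking rule, $\rnd(h)=\oh_k$, so $\theta_x\rnd(h)$ is column $k$ of $\theta_x$. Its largest entry sits in row $\sigma_x(k)$, with ties broken by the same rule, so the outer $\rnd$ returns $\oh_{\sigma_x(k)}$. This is \eqref{eq:phiconstruction}. It holds for arbitrary logits and gives $\Phi_x(V_k)=\{a_{\sigma_x(k)}\}$.

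\emph{Part (ii).} All norms on $\R^d$ are equivalent, so fix $\mu$ with $\|\cdot\|_\infty\le\mu\|\cdot\|$ and set $\eta_0\coloneq 1/(2\mu)$. If $\|e\|\le\eta<\eta_0$, then every coordinate of $e$ has absolute value below $1/2$. Hence coordinate $k$ of $\oh_k+e$ exceeds $1/2$ and every other coordinate is below $1/2$. The largest coordinate is therefore unique and equal to the $k$-th, so $a_k\oplus\Ball\subseteq D_k\subseteq V_k$. The sets $D_k$ are disjoint by definition, so the balls are pairwise disjoint, and $\eta_0$ depends only on $d$ and the norm.

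\emph{Part (iii).} For \Rone, $\dec(\oh_q)=q$ because the $q$-th coordinate is the only nonzero one among those indexed by $Q$. For the remaining conditions, note that on $U_q\subseteq D_q$ the restricted readout agrees with the full $\arg\max$. This is the one point I would check carefully: coordinates outside $Q$ are below $1/2$ there, so restricting the $\arg\max$ to $Q$ does not change it, and $\dec(u)=q$ on $U_q$. The $U_q$ are therefore the cells of Theorem~\ref{thm:cells}.

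By part (i), $\Phi_x(U_q)=\{\oh_{\sigma_x(q)}\}$, and so $\Phi_x(U_q)\oplus\Ball=U_{\sigma_x(q)}$. If $\sigma_x(Q)\subseteq Q$ for every $x$, this set lies in $U$, which gives \Rtwo; $U$ is bounded as a finite union of balls.

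For \Rthree, the same computation gives $\dec(\Phi_x(u)+e)=\sigma_x(q)$ for $u\in U_q$, so \Rthree holds if and only if $\sigma_x(q)=\delta_x(q)$ for all $x$ and $q$. If \Rthree holds, then $\sigma_x(q)=\delta_x(q)\in Q$, so \Rtwo holds as well. The inclusion \eqref{eq:cellincl} is then exactly $\Phi_x(U_q)\oplus\Ball=U_{\delta_x(q)}$, and Theorem~\ref{thm:cells} yields realization with length independence.

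\emph{Existence of logits.} I would extend $\delta_x$ to some $\tau\in T_d$, for instance by sending every index outside $Q$ to $1$. Proposition~\ref{prop:maximal}(i) then gives logits with $\sigma_x=\tau$, and in particular $\sigma_x=\delta_x$ on $Q$.

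The only delicate step is the agreement between the restricted readout $\dec$ and the basin index on the cells. Part (ii) settles it, since every point of $U_q$ lies in $D_q$.
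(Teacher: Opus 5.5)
\textbf{There is a gap in the ``only if'' half of the \Rthree equivalence.} You write that ``the same computation gives $\dec(\Phi_x(u)+e)=\sigma_x(q)$'' for $u\in U_q$. That computation uses $\oh_{\sigma_x(q)}+e\in U_{\sigma_x(q)}\subseteq D_{\sigma_x(q)}$, which only makes sense when $\sigma_x(q)\in Q$. When $n<d$, arbitrary logits can put the largest entry of column $q$ in a row $k\notin Q$. In that case $U_k$ is not one of the cells, $\dec$ never returns $k$, and $\dec(\oh_k+e)$ is just the $\arg\max$ over $Q$ of the coordinates of $e$. Your step ``if \Rthree holds, then $\sigma_x(q)=\delta_x(q)\in Q$'' therefore assumes what must be proved in exactly this case. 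Your claim $\Phi_x(U_q)\oplus\Ball=U_{\sigma_x(q)}$ is likewise undefined there.

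\textbf{How to close it.} You must show that \Rthree fails whenever some $\sigma_x(q)=k\notin Q$. Taking $e=0$ is not enough, because $\dec(\oh_k)$ is decided by tie-breaking among zeros and may equal $\delta_x(q)$ by accident. The paper perturbs instead. Since $n\ge2$, there is $q'\in Q$ with $q'\neq\delta_x(q)$. Take $e\coloneq t\,\oh_{q'}$ with $t>0$ small enough that $\|e\|\le\eta$. Then $\dec(\Phi_x(u)+e)=q'\neq\delta_x(\dec(u))$ for $u\in U_q$, so \Rthree fails.

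This is also where the standing assumption $n\ge2$ is needed. For $n=1$ the readout is constant, so \Rthree holds for every table and the ``only if'' is false.

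Once this case is added, \Rthree forces $\sigma_x=\delta_x$ on $Q$, hence $\sigma_x(Q)\subseteq Q$ and \Rtwo, and the rest of your argument goes through. That includes your explicit $\eta_0=1/(2\mu)$, a concrete version of the paper's openness argument, and your appeal to Theorem~\ref{thm:cells} in place of Theorem~\ref{thm:equiv}.
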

\begin{proof}
\emph{(i)} This is \eqref{eq:phiconstruction}.

\emph{(ii)} The set $D_k \coloneq \{u \mid \oh_k^{\Tr}u>\oh_j^{\Tr}u\text{ for every }j\ne k\}$ is open, contains $\oh_k$, and lies in $V_k$, and $D_k\cap D_{k'}=\emptyset$ for $k\ne k'$. Since there are finitely many indices, there is $\eta_0>0$ such that $\oh_k\oplus\Ball\subseteq D_k$ for every $k$ and every $\eta<\eta_0$, and the balls are disjoint because the sets $D_k$ are.

\emph{(iii)} \Rone is $\dec(\enc(q))=\arg\max_{q'\in Q}\oh_{q'}^{\Tr}\oh_q=q$. Let $u\in U_q$ and $\|e\|\le\eta$. By (ii), $u\in V_q$, so $\Phi_x(u)=\oh_{\sigma_x(q)}$ by (i). If $\sigma_x(q)\in Q$, then $\Phi_x(u)+e\in U_{\sigma_x(q)}\subseteq U$; hence $\Phi_x(U)\oplus\Ball\subseteq U$ whenever $\sigma_x(Q)\subseteq Q$, and $U$ is bounded: this is \Rtwo. In that case, $\Phi_x(u)+e$ also lies in $D_{\sigma_x(q)}$ by (ii), so $\dec(\Phi_x(u)+e)=\sigma_x(q)$, and \Rthree at $(x,q)$ holds if and only if $\sigma_x(q)=\delta_x(q)$. If instead $\sigma_x(q)=k\notin Q$, every coordinate of $\oh_k$ indexed by $Q$ vanishes, so for any $q'\in Q$ the perturbation $e \coloneq t\,\oh_{q'}$, with $t>0$ small enough that $\|e\|\le\eta$, gives $\dec(\Phi_x(u)+e)=q'$; since $n\ge2$, some $q'\ne\delta_x(q)$ violates \Rthree. Hence \Rthree holds if and only if $\sigma_x=\delta_x$ on $Q$, which also gives $\sigma_x(Q)\subseteq Q$, hence \Rtwo.

If $\sigma_x=\delta_x$ on $Q$ for every $x$, Theorem~\ref{thm:equiv} turns \Rone to \Rthree into realization with length independence. The logits of Proposition~\ref{prop:maximal}(i), with $\tau$ any table of $T_d$ that agrees with $\delta_x$ on $Q$, give $\sigma_x=\delta_x$ on $Q$, and neither $\eta_0$ nor this choice depends on the length of the word, on $\Sig$, or on the target.
\end{proof}

\subsection{The perturbations can affect the logits}
\label{app:constr-margin}

For a running head, the per-step error $e_t=h_t-\Phi_{x_t}(h_{t-1})$ of Appendix~\ref{app:finite_precision}, read through the one-hot model, takes only two kinds of values. If the table of step $t$ is computed correctly, $e_t=0$ exactly, since the state is an index and a step is a gather. Otherwise $e_t$ is the difference of two distinct attractors, a wrong table entry rather than a small deviation. What finite precision, quantized weights, or an imperfectly learned map can perturb is the logit matrix $\theta_{x_t}$, which is computed from the input alone, and that perturbation changes the table only if it reorders a column.

\begin{proposition}[Logit margin]
\label{prop:margin}
Let $\theta\in\R^{d\times d}$ have a unique largest entry in every column, with table $\sigma$, and write
\[
  \gamma(\theta) \coloneq \min_{k}\Bigl(\theta_{\sigma(k)k}-\max_{j\ne\sigma(k)}\theta_{jk}\Bigr)>0
\] for its smallest column margin. If $\max_{j,k}|\Delta_{jk}|<\gamma(\theta)/2$, then $\theta+\Delta$ has the same table $\sigma$. Consequently, if at every step every entry of $\theta_{x_t}$ is perturbed by less than $\gamma(\theta_{x_t})/2$, the executed tables and indices equal the exact ones at every step, for every word.
\end{proposition}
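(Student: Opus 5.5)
The plan is a direct column-by-column comparison, followed by an induction over the steps of a run. Fix a column $k$ and write $s\coloneq\sigma(k)$. For every row $j\ne s$, the definition of $\gamma(\theta)$ gives $\theta_{sk}-\theta_{jk}\ge\gamma(\theta)$. Adding the perturbation changes this gap by $\Delta_{sk}-\Delta_{jk}$, and $|\Delta_{sk}-\Delta_{jk}|\le2\max_{j,k}|\Delta_{jk}|<\gamma(\theta)$. Hence
\[
(\theta+\Delta)_{sk}-(\theta+\Delta)_{jk}\ge\gamma(\theta)-|\Delta_{sk}-\Delta_{jk}|>0 .
\]
So $s$ remains the unique largest entry of column $k$ of $\theta+\Delta$. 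The maximum is strict, so no tie arises and the fixed tie-breaking rule of Appendix~\ref{app:constr-runs} never intervenes. This holds for every $k$, so the table of $\theta+\Delta$ is $\sigma$.

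For the consequence, I induct on the step $t$ of an arbitrary word, with base case the shared initial index $k_0$. At step $t$ the executed logits are $\theta_{x_t}+\Delta^{(t)}$ with $\max|\Delta^{(t)}_{jk}|<\gamma(\theta_{x_t})/2$. The first part then gives the executed table $\sigma_{x_t}$ exactly. Assume the executed index $k_{t-1}$ equals the exact one. Since the update \eqref{eq:indexstep} is an exact integer gather $k_t=\sigma_{x_t}(k_{t-1})$, the executed $k_t$ equals the exact $k_t$. Through the one-hot model \eqref{eq:phiconstruction}, this means the executed hidden states are the attractors of the exact run, so every per-step error $e_t$ is zero.

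The induction uses no bound that depends on $t$ or on $|w|$, so the conclusion holds at every length. No step is a real obstacle; the only point needing care is keeping the strict inequality, which is why the hypothesis is a strict bound $<\gamma/2$ and why a unique maximum is assumed in every column.
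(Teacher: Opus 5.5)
Your proposal is correct and follows the paper's proof essentially step for step: a column-wise comparison (you bound the change of the gap $\theta_{sk}-\theta_{jk}$ by $2\max_{j,k}|\Delta_{jk}|<\gamma(\theta)$, where the paper writes the same bound as a chain of inequalities through $\pm\gamma/2$), then an induction on $t$ from the shared initial index $k_0$, which uses that the logits read $x_t$ alone. Your remark that every per-step error $e_t$ then vanishes in the one-hot model is a harmless addition, consistent with the discussion the paper gives just before the proposition.
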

\begin{proof}
Fix a column $k$ and $j\ne\sigma(k)$, and write $\gamma \coloneq \gamma(\theta)$. Then
\[
  (\theta+\Delta)_{\sigma(k)k}>\theta_{\sigma(k)k}-\tfrac{\gamma}2\ge\theta_{jk}+\tfrac{\gamma}2>(\theta+\Delta)_{jk},
\] so the largest entry of column $k$ is still in row $\sigma(k)$. For the second claim, argue by induction on $t$: both runs start from $k_0$, and if they share $k_{t-1}$ they share the table at step $t$, hence $k_t$ by \eqref{eq:indexstep}. Nothing is carried from one step to the next, because the logits read $x_t$ and not the state.
\end{proof}

\subsection{Many heads}
\label{app:constr-heads}

The block holds $r$ heads: head $i$ has its own attractor indices $\{1,\dots,d_i\}$, its own logits $\theta^{(i)}_{x_t}\in\R^{d_i\times d_i}$ computed from the same input, and its own $\arg\max$. The heads update independently, so the joint index, the tuple of the indices of the heads, lies in $\prod_i\{1,\dots,d_i\}$, and the budgets add:
\begin{equation}
  b=\sum_{i=1}^r d_i\lceil\log_2d_i\rceil,
  \qquad
  c=O\Big(\sum_{i=1}^r d_i\Big),
  \qquad
  \text{state}=\sum_{i=1}^r\lceil\log_2d_i\rceil\ \text{bits},
  \qquad
  \text{logits}=\sum_{i=1}^r d_i^2 .
  \label{eq:headbudget}
\end{equation}
Each index of head $i$ owns a value vector, head $i$ emits the vector of the index it occupies, the $r$ vectors are concatenated, and one shared projection maps them back to the model width.

The block step applies the heads side by side, $\Phi_x(h)=\big(\Phi^{(1)}_x(h^{(1)}),\dots,\Phi^{(r)}_x(h^{(r)})\big)$ on $\mathcal{H}=\prod_i\R^{d_i}$. By Proposition~\ref{prop:realizes}(i) applied to each head, it is exactly restoring in the sense of Section~\ref{sec:budget} for every value of the logits: its attractors are the tuples of attractors of the heads, indexed by the joint indices $(k^{(1)},\dots,k^{(r)})$, and the tables of the heads act on the joint indices component by component. Realization maps the automaton states injectively into the joint indices by $\chi=(\chi_1,\dots,\chi_r)$, with $\enc(q)=\big(\oh_{\chi_1(q)},\dots,\oh_{\chi_r(q)}\big)$, and $\dec$ reads in every head the index of the basin that contains $h^{(i)}$, and inverts $\chi$. Heads that update independently can only carry a target whose transitions act on each head separately.

\begin{proposition}[Heads combine as a direct product]
\label{prop:heads}
\emph{(i)} The joint tables of $r$ heads form a submonoid of the direct product $\prod_iT_{d_i}$, within the budget \eqref{eq:headbudget}.
\emph{(ii)} Let $\chi=(\chi_1,\dots,\chi_r):Q\hookrightarrow\prod_i\{1,\dots,d_i\}$ be injective, with $\enc$ and $\dec$ as above. If $\sigma^{(i)}_x\circ\chi_i=\chi_i\circ\delta_x$ for every $x\in\Sig$ and every $i$, the block realizes the target with length independence at every $\eta$ below some $\eta_0>0$. Such tables exist if and only if every symbol acts head by head: $\chi_i(q)=\chi_i(q')$ implies $\chi_i(\delta_x(q))=\chi_i(\delta_x(q'))$, for every $x$, $i$, $q$ and $q'$.
\end{proposition}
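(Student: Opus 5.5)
For (i) I would work head by head and then take the product. Each head's tables lie in $T_{d_i}$ by Proposition~\ref{prop:maximal}. The joint table of a word $w$ is the tuple $(\sigma^{(1)}_w,\dots,\sigma^{(r)}_w)$. Composition is componentwise, $(\sigma^{(i)}_y\circ\sigma^{(i)}_x)_i$, so the set of joint tables over all words, identity tuple included, is closed under the product operation of $\prod_iT_{d_i}$ and is therefore a submonoid. Scan-representability comes from concatenating the $r$ representations of Proposition~\ref{prop:maximal}(ii) and merging each component by its own gathers. This gives exactly the budget \eqref{eq:headbudget}, and every merge stays exact.

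For the realization half of (ii) I would copy the proof of Proposition~\ref{prop:realizes}(iii) on the product space. First take $\eta_0$ as the minimum of the per-head constants of Proposition~\ref{prop:realizes}(ii). I would use the max norm over heads, since other norms only change $\eta_0$ by an equivalence constant. With this choice, for $\eta<\eta_0$ the ball around $\enc(q)$ lies in $\prod_iD_{\chi_i(q)}$. Set $U_q\coloneq\enc(q)\oplus\Ball$. These sets are pairwise disjoint because $\chi$ is injective and the $D_k$ of each head are disjoint, and their union $U$ is bounded.

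Next, take $u\in U_q$. Each head $i$ sits in basin $V_{\chi_i(q)}$, so Proposition~\ref{prop:realizes}(i) gives $\Phi_x(u)=(\oh_{\sigma^{(i)}_x(\chi_i(q))})_i=(\oh_{\chi_i(\delta_x(q))})_i=\enc(\delta_x(q))$, using the intertwining hypothesis. Adding $e\in\Ball$ therefore lands in $U_{\delta_x(q)}$, which is \eqref{eq:cellincl}. Theorem~\ref{thm:cells} then gives realization with length independence. Along the way I must check that $\dec$ is well defined on $U$: inside $U$ each per-head readout returns $\chi_i(q)$, and the resulting tuple lies in $\chi(Q)$, so it can be inverted there.

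For the existence criterion, necessity is one line: if $\chi_i(q)=\chi_i(q')$, then applying $\sigma^{(i)}_x$ to both sides and using $\sigma^{(i)}_x\circ\chi_i=\chi_i\circ\delta_x$ gives $\chi_i(\delta_x(q))=\chi_i(\delta_x(q'))$. For sufficiency, I would define $\sigma^{(i)}_x$ on the image $\chi_i(Q)$ by $\sigma^{(i)}_x(\chi_i(q))\coloneq\chi_i(\delta_x(q))$; the head-by-head condition is exactly what makes this definition consistent. Outside $\chi_i(Q)$ I would extend it arbitrarily, for instance by the identity, to an element of $T_{d_i}$, and then realize it with logits via Proposition~\ref{prop:maximal}(i). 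I expect the only delicate step to be the choice of norm and $\eta_0$ on the product space, together with well-definedness of $\dec$. Unlike Proposition~\ref{prop:realizes}(iii), no claim here is an ``iff'', so I never need to handle tables that leave $\chi(Q)$.
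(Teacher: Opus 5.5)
Your proposal is correct and follows essentially the paper's proof. It uses the componentwise argument for (i), and for (ii) it keeps the $\eta$-ball around each code point inside $\prod_i D_{\chi_i(q)}$ (the paper's ``points whose $\arg\max$ in each head is unique and spells $\chi(q)$''), collapses it onto $\enc(\delta_x(q))$ in one step, and then gives the same necessity and extension argument for the existence criterion. The only cosmetic difference is that you conclude through the cell inclusion and Theorem~\ref{thm:cells}, where the paper checks \Rone to \Rthree and invokes Theorem~\ref{thm:equiv}; the two are equivalent.
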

\begin{proof}
\emph{(i)} The table of each head depends on the input alone, so the joint step, and every composite of joint steps, acts head by head; the budget is Proposition~\ref{prop:maximal}(ii) summed over heads.

\emph{(ii)} The set of points whose $\arg\max$ in each head is unique and spells $\chi(q)$ is open, contains $\enc(q)$, and does not meet the corresponding set of any $q'\ne q$, because $\chi$ is injective. As in the proof of Proposition~\ref{prop:realizes}, some $\eta_0>0$ keeps every ball of radius $\eta<\eta_0$ around a code point inside its set, and head $i$ maps that ball onto $\oh_{\sigma^{(i)}_x(\chi_i(q))}=\oh_{\chi_i(\delta_x(q))}$, so a step maps the cell of $q$ onto the code point of $\delta_x(q)$. Hence \Rone to \Rthree hold on the union of these balls, which is bounded, and Theorem~\ref{thm:equiv} gives realization with length independence. For the last claim, the condition is necessary, since $\chi_i(\delta_x(q))=\sigma^{(i)}_x(\chi_i(q))$ depends on $q$ through $\chi_i(q)$ only. It is sufficient, since it defines $\sigma^{(i)}_x$ on $\chi_i(Q)$ by $\sigma^{(i)}_x(\chi_i(q)) \coloneq \chi_i(\delta_x(q))$; any extension to the other indices is a table, attainable by Proposition~\ref{prop:maximal}(i).
\end{proof}

\begin{remark}[Direct products within a block, cascades across layers]
\label{rem:factor}
Proposition~\ref{prop:heads} covers heads that run side by side, none reading the index of another. In a cascade, as in the wreath products of the Krohn--Rhodes decomposition \citep{Krohn1965}, the table of one component depends on the current index of another. Inside a single block, a cascade is still a table on the joint indices, and hence still scannable, but only at the budget of that joint space of $\prod_id_i$ indices, not at \eqref{eq:headbudget}. Stacking layers builds a cascade without that cost. The logits of a layer are computed from the output of the layer below, which is the readout of its current index, so the tables of the upper layer depend on the index of the lower one, while each layer is still scanned at its own budget, one layer after the other. One difference with the cascades of Krohn--Rhodes matters. There, a component is updated from the \emph{previous} state of the components below it and the symbol; here, a layer reads the \emph{updated} index of the layer below at the same step. The two coincide when the lower transition is injective, as for groups, since the previous lower index is then determined by the updated one and the symbol. They differ when it is not: after a reset, the updated index no longer tells which index the lower layer came from. A causal convolution of width $2$ before each layer restores this information, since it lets a layer read the output of the layer below at the previous step as well. This is how the stack of \cellname layers operates on TSO (Section~\ref{sec:experiments}).
\end{remark}

\begin{proposition}[Stacked layers realize cascades]
\label{prop:stack}
Consider a stack of $s$ \cellname layers, layer $\ell$ having joint indices $S_\ell=\prod_i\{1,\dots,d^{(\ell)}_i\}$ over its heads, and suppose that the logits of layer $\ell$ at step $t$ are computed from the symbol $x_t$ and from the indices of the layers below it at the same step.
\emph{(i)} The joint index $(k^{(1)}_t,\dots,k^{(s)}_t)$ follows a table on $\prod_\ell S_\ell$ that depends on the symbol alone, so the stack runs on indices, and every composite of its steps is again such a table.
\emph{(ii)} Suppose that the logits of each layer can be chosen as any function of the symbol and of the indices of the layers below. Let $\chi=(\chi_1,\dots,\chi_s):Q\hookrightarrow\prod_\ell S_\ell$ be injective, write $\chi_\ell=(\chi_{\ell,1},\dots,\chi_{\ell,r_\ell})$ for its components over the $r_\ell$ heads of layer $\ell$, and $\chi_{<\ell} \coloneq (\chi_1,\dots,\chi_{\ell-1})$. Suppose that every symbol acts on each head from that head and the updated layers below it: for every $x$, $\ell$ and $i$ there is $\tau^{(\ell,i)}_x$ with
\[
  \chi_{\ell,i}(\delta_x(q)) = \tau^{(\ell,i)}_x\bigl(\chi_{<\ell}(\delta_x(q)),\,\chi_{\ell,i}(q)\bigr)
  \qquad\text{for every }q\in Q .
\]
Then there are logits for which the stack realizes the target with length independence at every $\eta$ below some $\eta_0>0$.
\emph{(iii)} Layer $\ell$ is scanned once the indices of the layers below are known, so the layers are scanned one after the other, each at its own budget \eqref{eq:headbudget}, for a total depth $O(s\log L)$.
\end{proposition}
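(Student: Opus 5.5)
The plan is to prove (i), (ii) and (iii) in that order, reducing each to the single-block results already established: Proposition~\ref{prop:maximal} for the tables, Proposition~\ref{prop:realizes} and Proposition~\ref{prop:heads} for realization, and Theorem~\ref{thm:equiv} to conclude.

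For (i), I would argue by induction on $\ell$. Fix a symbol $x$ and joint indices $(k^{(1)}_{t-1},\dots,k^{(s)}_{t-1})$. Layer $1$ reads only $x$, so its heads apply tables $\sigma^{(1,i)}_x$ by \eqref{eq:indexstep}, and $k^{(1)}_t$ is a function of $x$ and $k^{(1)}_{t-1}$. Suppose the updated indices of the layers below $\ell$ are functions of $x$ and of the previous joint index. The logits of layer $\ell$ are computed from $x$ and from these updated indices, so the next index $k^{(\ell)}_t$ is also a function of $x$ and of the previous joint index. The induction therefore produces a map $\prod_\ell S_\ell\to\prod_\ell S_\ell$ that depends on $x$ alone. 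Composites of such maps are maps of the same finite set, and composition of finite tables is associative with the identity as unit. This makes the stack a table on indices, exactly as in Proposition~\ref{prop:maximal}(ii).

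For (ii), I would use the one-hot model layer by layer and build the logits inductively. Layer $\ell$ sees the updated indices $\chi_{<\ell}(\delta_x(q))$ of the layers below, and I would use them to choose its logits. On each head $(\ell,i)$ I would take column-one-hot logits, as in Proposition~\ref{prop:maximal}(i), whose table sends index $\chi_{\ell,i}(q)$ to $\tau^{(\ell,i)}_x(\chi_{<\ell}(\delta_x(q)),\chi_{\ell,i}(q))$. The hypothesis guarantees that this value equals $\chi_{\ell,i}(\delta_x(q))$. It is well defined: the lower index together with $\chi_{\ell,i}(q)$ determines the output, and on the remaining indices the table is extended arbitrarily. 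The encoding and readout would be those of Proposition~\ref{prop:heads}, extended over all layers. The cells would be balls of radius $\eta<\eta_0$ around the code points, with $\eta_0$ chosen as in Proposition~\ref{prop:realizes}(ii) so that each ball lies in the open region where every head's $\arg\max$ is unique.

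The key step within (ii), and the one I expect to be the main obstacle, is showing that a perturbed state in the cell of $q$ yields exactly the right logits at every layer. Perturbations enter each layer's hidden state, and the layer above must not see them. The argument would run upward from the bottom. The restoring $\rnd$ of layer $\ell'<\ell$ maps any point of the cell to the exact attractor $\oh_{\chi_{\ell',i}(\delta_x(q))}$. Layer $\ell$ reads these restored indices, so its logits are the exact designed ones. Its restored output is then $\oh_{\chi_{\ell,i}(\delta_x(q))}$, and the cell of $q$ lands on the code point of $\delta_x(q)$, which gives \eqref{eq:cellincl}. Theorem~\ref{thm:cells} then yields realization with length independence. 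Some care is needed in this step. I must make sure the upper logits depend on the lower layer only through its index, which is the restored readout, and not through the raw perturbed hidden state. This is exactly the modeling assumption in the statement, and I would make it explicit.

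For (iii), I would combine (i) with the fact that layer $\ell$'s tables are determined once the index sequence of layers $<\ell$ is known. That sequence is available after the scans of the lower layers have finished. Each layer is then one scan at its own budget \eqref{eq:headbudget}, of depth $O(\log L)$. Running the $s$ scans in sequence gives total depth $O(s\log L)$.
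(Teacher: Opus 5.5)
Your proposal is correct and follows essentially the paper's proof: the same bottom-up induction over layers for (i); for (ii), the same choice of logits as a function of $(x,\kappa)$ via Proposition~\ref{prop:maximal}(i), with the encoding, readout and $\eta_0$ of Proposition~\ref{prop:heads}, and the same layer-by-layer check that a cell lands on the code point of $\delta_x(q)$; and the same sequential-scan argument for (iii). The only difference is cosmetic: you conclude from the cell inclusion \eqref{eq:cellincl} via Theorem~\ref{thm:cells}, whereas the paper checks \Rone to \Rthree on the union of balls and invokes Theorem~\ref{thm:equiv}.
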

\begin{proof}
\emph{(i)} Fix $x$. The table of layer $1$ depends on $x$ alone, so $k^{(1)}_t$ is a function of $(x,k^{(1)}_{t-1})$; inductively, the logits of layer $\ell$ are a function of $x$ and of $k^{(1)}_t,\dots,k^{(\ell-1)}_t$, themselves functions of $x$ and of the previous joint index, so $k^{(\ell)}_t$ is a function of $x$ and of the previous joint index. The joint step is therefore an element of $T_{|\prod_\ell S_\ell|}$ indexed by $x$, and composites of tables are tables (Proposition~\ref{prop:maximal}).
\emph{(ii)} For every symbol $x$, layer $\ell$, head $i$ and tuple $\kappa$ of indices of the layers below, choose logits whose table $\sigma^{(\ell,i)}_{x,\kappa}$ agrees with $\tau^{(\ell,i)}_x(\kappa,\cdot)$ wherever the latter is defined and is arbitrary elsewhere; such logits exist by Proposition~\ref{prop:maximal}(i), and the choice is allowed because the logits may depend arbitrarily on $(x,\kappa)$. In the one-hot model on $\prod_\ell\prod_i\R^{d^{(\ell)}_i}$, encode $q$ by the tuple of attractors of $\chi(q)$ and decode a point whose $\arg\max$ in each head spells $\chi(q)$ as $q$. As in the proof of Proposition~\ref{prop:heads}, these sets are open, pairwise disjoint because $\chi$ is injective, and some $\eta_0>0$ keeps every ball of radius $\eta<\eta_0$ around a code point inside its set. Let $u$ lie in the ball around $\enc(q)$. Every head of layer $1$ reads its index $\chi_{1,i}(q)$ and moves it to $\tau^{(1,i)}_x(\chi_{1,i}(q))=\chi_{1,i}(\delta_x(q))$. Inductively, once the layers below $\ell$ have moved to $\chi_{<\ell}(\delta_x(q))$, the heads of layer $\ell$ read this tuple, hence apply the tables $\tau^{(\ell,i)}_x(\chi_{<\ell}(\delta_x(q)),\cdot)$, and move from $\chi_{\ell,i}(q)$ to $\chi_{\ell,i}(\delta_x(q))$. The joint step therefore maps the cell of $q$ onto the code point of $\delta_x(q)$. Hence \Rone to \Rthree hold on the union of these balls, which is bounded, and Theorem~\ref{thm:equiv} gives realization with length independence.
\emph{(iii)} The tables of layer $\ell$ are determined once its inputs are, which are the symbols and the indices of the layers below; scanning the layers in order therefore costs $s$ scans, each of depth $O(\log L)$ and at the budget \eqref{eq:headbudget} of its layer.
\end{proof}

\begin{example}[A running product in $S_n$]
\label{ex:sn}
Let the state be the running product $g_1\cdots g_t$ of permutations of $\{1,\dots,n\}$, so $|Q|=n!$, and one head on $Q$ costs $b=n!\lceil\log_2n!\rceil$ bits and $(n!)^2$ logits. Encode the product instead by the images of the points $1,\dots,n-1$, $\chi_i(g) \coloneq i\cdot g$, with one head of $n$ indices per point: each head is a channel that carries where its point has been sent, and the $n-1$ channels together carry the product. A permutation is fixed by its values on $n-1$ points, so $\chi$ is injective, and reading $g_t$ sends every head from index $j$ to $j\cdot g_t$, the same table for every head. Proposition~\ref{prop:heads}(ii) applies with $r=n-1$ heads that may share one logit matrix: $b=(n-1)\,n\lceil\log_2n\rceil$ bits and $n^2$ logits. For $n=5$ this is $60$ bits and $25$ logits, against $840$ bits and $14400$ logits for one head of $120$ indices.
\end{example}

\subsection{The gradient estimator}
\label{app:constr-grad}

The $\arg\max$ in \eqref{eq:indexstep} has zero derivative wherever it is defined, so the logits receive no gradient as written. We train with the straight-through Gumbel-softmax estimator \citep{Jang2017}, and the forward pass stays on indices throughout. We describe one head.

\emph{Forward pass.} During training, the logits are scaled by a sampling temperature $\nu$, $u_{x_t} \coloneq \theta_{x_t}/\nu$, one independent standard Gumbel variable $(G_t)_{jk}$ is drawn per entry, and the table is $\sigma_t(k) \coloneq \arg\max_j\,(u_{x_t}+G_t)_{jk}$. At test time the Gumbel variables are removed and the table is $\sigma_{x_t}$ of \eqref{eq:indexstep}. A sampled table is still an element of $T_d$, so parts (i) and (ii) of Proposition~\ref{prop:realizes} hold at every training step.

\emph{Backward pass.} Let $\bar g_t(k)$ be the first-order effect on the loss of reading out index $k$ at step $t$, obtained from the value vectors. The effect of being at index $k$ at step $t$, counting all later steps, is the adjoint
\[
  \lambda_t(k) = \bar g_t(k)+\lambda_{t+1}\bigl(\sigma_{t+1}(k)\bigr),
\] which is backpropagation through $h_t=P_th_{t-1}$, the one-hot model on its attractors, with $P_t$ the column one-hot matrix of $\sigma_t$; it is computed with gathers, itself by a reverse parallel scan. At step $t$, only the column $k=k_{t-1}$ that the run actually used receives a gradient: with $\lambda_t$ read as the vector $(\lambda_t(1),\dots,\lambda_t(d))$ and $p \coloneq \mathrm{softmax}\bigl((u_{x_t}+G_t)\oh_k\bigr)$, the surrogate gradient with respect to that column of $u_{x_t}$ is $p\odot(\lambda_t-\langle p,\lambda_t\rangle)$, the softmax Jacobian applied to $\lambda_t$, and dividing by $\nu$ gives the gradient with respect to $\theta_{x_t}$. The one-hot model is thus what the backward pass differentiates, although no one-hot vector is ever formed.

We think that a custom kernel that implements the forward and backward passes of this estimator, with the parallel scan for the adjoint, could drastically reduce the memory and time cost of training. One observation is that the block itself is not compute-bound but rather bandwidth-bound. We left such optimization for future work. Another direction could draw on the dictionary of PD-SSM (Appendix~\ref{app:baselines}) to reduce the memory footprint. Instead of computing all $d^2$ logits of $\theta_x$ from the input, a head would learn $n_\Theta$ logit matrices $\Theta_1,\dots,\Theta_{n_\Theta}$ and compute only the mixture weights $\omega_x\in\R^{n_\Theta}$, with $\theta_x=\sum_{\ell}(\omega_x)_\ell\,\Theta_\ell$. The projection would then have $O(mn_\Theta+n_\Theta d^2)$ parameters instead of $O(md^2)$, and the backward pass would store $n_\Theta$ weights per step rather than $d^2$ logits, recomputing only the column it uses. The table remains the column-wise $\arg\max$ of $\theta_x$, so Proposition~\ref{prop:realizes} still applies. The current implementation is available following the interactive notebook whose URL is in the introduction.

\section{Experiments}
\label{app:experiments}

\subsection{Tasks}
\label{app:tasks}

Every task is a semiautomaton $(Q,\Sig,\delta)$ with a start state $q_0$. A sequence is a random word $x_1\cdots x_L$, and the target at every position $t$ is the state $\delta_{x_1\cdots x_t}(q_0)$. Targets are computed exactly, by a scan over the tables of $\delta$.

\textbf{Algebraic tasks.} Each algebraic task is the word problem of a finite monoid $M$ with generators $\Sig$. The states are the elements of $M$, $q_0$ is the identity, and a letter $x$ maps $q$ to $x\circ q$, so the target is the running composite of the letters. Letters are drawn uniformly unless stated otherwise. Table~\ref{tab:tasks} lists the tasks.

\begin{table}[ht]
\caption{Algebraic tasks. $|Q|$ is the number of reachable states, the order of the monoid.}
\label{tab:tasks}
\centering
\small
\begin{tabular}{llllr}
\toprule
Task & Monoid & Letters & $|\Sig|$ & $|Q|$ \\
\midrule
$\mathbb{Z}_2$ & cyclic group & residues $0,1$ & $2$ & $2$ \\
$\mathbb{Z}_{16}$ & cyclic group & residues $0,\dots,15$ & $16$ & $16$ \\
$S_3$ & symmetric group & adjacent transpositions $(i\;i{+}1)$ & $2$ & $6$ \\
$S_4$ & symmetric group & adjacent transpositions $(i\;i{+}1)$ & $3$ & $24$ \\
$A_5$ & alternating group & adjacent 3-cycles $(i\;i{+}1\;i{+}2)$ & $3$ & $60$ \\
$M_{11}$ & Mathieu group & $a,\,b,\,a^{-1},\,b^{-1}$ & $4$ & $7920$ \\
$\mathrm{DFF}_5$ & flip-flop monoid & identity, reset, set & $3$ & $3$ \\
FF & flip-flop monoid & identity, reset, set & $3$ & $3$ \\
\bottomrule
\end{tabular}
\end{table}

\emph{The Mathieu group.} $M_{11}$ is generated by two permutations of $\{0,\dots,10\}$,
\[
  a=(0\;1\;2\;3\;4\;5\;6\;7\;8\;9\;10),\qquad b=(2\;6\;10\;7)(3\;9\;4\;5),
\]
and has order $7920$. Like $A_5$, it is simple and nonsolvable.

\emph{The flip-flops.} The flip-flop monoid acts on $\{0,1\}$ through three maps: the identity, reset (constant $0$) and set (constant $1$). The target is the identity until the first reset or set, and then the last of them. The two tasks differ only in how letters are drawn. In $\mathrm{DFF}_5$, letters are uniform, except that four identities in a row are always followed by a reset or a set; the target is then a function of the last five letters, and the drawn words are definite. In FF, a letter is a reset or a set with probability $0.1$, and the identity otherwise. Runs of identities have unbounded length, with mean $9$, so the target depends on arbitrarily old letters and the drawn words are not. The length sweep draws FF with an identity probability of $0.99$ instead, so that runs of identities have mean length $99$. The test sequences are then long and their information sparse: the target must be held across long runs of identities, which strengthens the dependence on old letters.

\textbf{Tracking Shuffled Objects.} $\mathrm{TSO}_N$ is adapted from the task of \citet{Srivastava2023}. It involves $N\in\{3,4,5\}$ people (alice, bob, charlie, dana, eve) and as many items (ring, key, coin, stamp, card). Every word and punctuation mark is one token, for a vocabulary of $2N+7$ tokens. A sequence is made of sentences of five tokens, in three parts:
\begin{enumerate}
\item a preamble of $N$ sentences, one per person in order, that assigns the items at random: \emph{alice has the key .};
\item swap sentences, each between a uniformly drawn pair of distinct people: \emph{bob swaps with alice .};
\item a question about a uniformly drawn item, \emph{who has the key ?}, whose answer, the current holder of that item, is predicted at the last token.
\end{enumerate}
When the length is not a multiple of $5$, extra ``.'' tokens are inserted after randomly chosen sentences, at most one per sentence, so that every length is built from the same local patterns.

The underlying automaton reads one token at a time. Its state is a pair $(p,g)$: $p$ is either idle or the first name of the sentence in progress, and $g$ maps every item to its holder, or to ``unset'' before the preamble names it. A name read in the idle state is stored in $p$. A second name swaps the items of the two people and makes $p$ idle again. An item closes a sentence \emph{$p$ has the item}, assigning it to $p$. Every other token leaves the state unchanged, so the question does not modify it. From the start state, with $p$ idle and every item unset, the grammar reaches $44$, $202$ and $1132$ states for $N=3,4,5$. The target at every token is this state, and the answer at the last token is a function of the state and of the asked item.

Answering the question is not state tracking in itself: the answer depends on the asked item, which is not part of the state. The last token is ``?'' and the item is the token before it, so the causal convolution before a higher layer lets its update at ``?'' depend on the item, and the final MLP reads the answer off the states of all layers. The question thus tests whether the tracked state can be used, not only whether it is kept.

\subsection{Baselines}
\label{app:baselines}
All three baselines are affine in the state: $h_t=A_th_{t-1}+b_t$, where $A_t$ and $b_t$ depend only on the input $u_t$ of the layer. A parallel scan evaluates them with the merge $(A_2,b_2)\circ(A_1,b_1)=(A_2A_1,\,A_2b_1+b_2)$, and each baseline keeps $A_t$ in a class closed under products, so that a merge stays cheap. Each takes the place of the \cellname in the same backbone (Appendix~\ref{app:training}). The class of $A_t$ is what sets them apart, and it places them on different branches of Theorem~\ref{thm:affine}.

\textbf{Mamba$^{-}$.} This is the Mamba block \citep{Gu2024}, with the eigenvalues extended to negative values \citep{Grazzi2025}. The inputs $u_t\in\R^d$ and $z_t\in\R^d$ are two linear projections of the layer input, $u_t$ followed by a SiLU. The state has $d$ channels of $d_{\mathrm{state}}$ entries, and each entry evolves on its own:
\[
  h_t[i,j]=a_t[i,j]\,h_{t-1}[i,j]+\Delta_t[i]\,B_t[j]\,u_t[i],\qquad a_t[i,j]=2\exp\bigl(-\Delta_t[i]\,e^{\alpha[i,j]}\bigr)-1,
\]
with step sizes $\Delta_t=\mathrm{softplus}(W_\Delta u_t+b_\Delta)$, vectors $B_t,C_t\in\R^{d_{\mathrm{state}}}$ linear in $u_t$, and learned $\alpha\in\R^{d\times d_{\mathrm{state}}}$. The output is $y_t[i]=\bigl(\sum_jC_t[j]\,h_t[i,j]+D[i]\,u_t[i]\bigr)\,\mathrm{SiLU}(z_t[i])$, followed by a linear projection. Since $\Delta_t>0$, every $a_t[i,j]$ lies in $(-1,1)$, so Mamba$^{-}$ is on the contracting branch, $\rho<1$. We remove the causal convolution of the original block, so that all models receive the same input.

\textbf{AUSSM.} The adaptive unitary SSM \citep{Karuvally2025} uses the same block with a complex state and eigenvalues of unit modulus:
\[
  h_t[i,j]=e^{\mathrm{i}\,\phi_t[i,j]}\,h_{t-1}[i,j]+\Delta_t[i]\,B[j]\,u_t[i],
\]
where the phases $\phi_t\in\R^{d\times d_{\mathrm{state}}}$ are linear in $u_t$, and $B,C\in\mathbb{C}^{d_{\mathrm{state}}}$ are learned but do not depend on the input. The readout takes the real part, $\mathrm{Re}\sum_jC[j]\,h_t[i,j]$, and the rest of the block is as for Mamba$^{-}$. Every eigenvalue has modulus $1$, so AUSSM is on the isometric branch, $\rho=1$.

\textbf{PD-SSM.} PD-SSM \citep{Terzic2025} has a complex state $h_t\in\mathbb{C}^N$ and the transition
\[
  h_t=P_tD_t\,h_{t-1}+b_t,\qquad M_t=\sum_{k=1}^{n_M}\mathrm{softmax}(W_\pi u_t)[k]\,M_k.
\]
Here $P_t$ is the column-wise hardmax of $M_t$, a mixture of $n_M$ learned matrices $M_k\in\R^{N\times N}$, so each column of $P_t$ is one-hot. $D_t$ is diagonal with entries $r_t[j]\,e^{\mathrm{i}\,\psi_t[j]}$, whose moduli $r_t\in(0,1)^N$ and phases $\psi_t\in(0,2\pi)^N$ are the sigmoid outputs of two small MLPs of $u_t$, the second scaled by $2\pi$. The drive $b_t$ is linear in $u_t$, and the output is linear in $(\mathrm{Re}\,h_t,\mathrm{Im}\,h_t)$ with a skip connection from $u_t$. A product of two such matrices $PD$ is again of this form, and a merge costs $O(N)$ operations. The hardmax is nonlinear, but it acts on the input and not on the state: it selects the matrix $A_t=P_tD_t$, and the step remains affine in $h_{t-1}$. It does not restore the state. Each column of $P_tD_t$ has a single nonzero entry, of modulus less than $1$, so $\rho(P_tD_t)<1$ and PD-SSM is on the contracting branch. The dictionary is trained through the softmax surrogate gradient of \citet{Terzic2025}.

\subsection{Backbone, training and evaluation}
\label{app:training}
\textbf{Backbone.} All models share one backbone of width $m=128$ and differ only in the recurrent block. Tokens are embedded, passed through an MLP and a layer norm. Each layer then applies $h\leftarrow h+\mathrm{block}(\mathrm{conv}(\mathrm{LN}(h)))$ followed by $h\leftarrow h+\mathrm{MLP}(\mathrm{LN}(h))$, and a final layer norm, MLP and linear map give the logits. Every MLP is a gated linear unit with SiLU gate, hidden width $2m$ and dropout $0.1$. On TSO, conv is a causal depthwise convolution of width $2$ followed by a SiLU. On the algebraic tasks there is no convolution, so each block reads the current token alone. The baselines have four layers with a recurrent width of $256$: $d=256$ channels with $d_{\mathrm{state}}=16$ for Mamba$^{-}$ and AUSSM, and $N=256$ with $n_M=16$ dictionary matrices for PD-SSM. The \cellname has one layer on the algebraic tasks and two layers on TSO. On the algebraic tasks, the \cellname has a single head with one attractor index per element of the target for $\mathbb{Z}_2$, $\mathbb{Z}_{16}$, $\mathrm{DFF}_5$ and FF, and otherwise several heads that each track a quotient of the target and together separate its elements, that is, a map $\chi$ as in Appendix~\ref{app:constr-heads}: $(3,2)$ attractor indices for $S_3$, $(4,3,2)$ for $S_4$, $(5,12)$ for $A_5$, and $4$ heads of $11$ attractor indices for $M_{11}$, which track the images of four points. On $\mathrm{TSO}_N$, the first layer has $N+1$ heads and the second layer $N$ heads, each of $N+1$ attractor indices.

\textbf{Loss.} The loss is a cross-entropy on the state at every position, and on the answer at the last position for TSO. Two weights rescale the per-position terms. The first weight is a mask after the first error. For each training sequence, let $t^\star$ be the first position where the predicted state is wrong. Positions up to $t^\star$ have weight $1$, and later positions have weight $e^{-\lambda(t-t^\star)}$ with $\lambda=3$. Once the state is wrong, the later predictions start from a wrong state, and their errors are consequences of the error at $t^\star$ rather than errors of their own. Supervising them at full weight would ask the model to reach correct states from a wrong one, which pulls the transitions away from the target.  The mask concentrates the gradient on the first error, where a transition is actually wrong, and it moves along the sequence as training fixes earlier errors. The second weight is class balancing (especially for the DFF/FF and TSO). Each sequence contributes the weighted mean of its terms, and the loss is the mean over the batch. For the \cellname, a third term keeps the sampling exploratory at the first error. The Gumbel noise of Appendix~\ref{app:constr-grad} tries other entries of a column only while the logits of that column are not much more spread than the noise. Once they are, the sampled entry is always the $\arg\max$, and a wrong entry is no longer revisited. At the first error of each sequence, we therefore penalize the spread of the columns used there, beyond the standard deviation $\sigma_G=\pi/\sqrt6$ of the Gumbel noise:
\[
\mathcal{E}^{(b)}=\frac{1}{r}\sum_{i=1}^{r}\mathrm{relu}\bigg(\frac{s\big(u^{(i)}_{x_{t^\star}}\oh_{k^{(i)}_{t^\star-1}}\big)}{\sigma_G}-1\bigg)^{2},
\]
where $s$ is the standard deviation over the entries of a column, $u^{(i)}=\theta^{(i)}/\nu$ are the sampling logits of head $i$, and $k^{(i)}_{t^\star-1}$ is its attractor index before the step, treated as a constant. $\mathcal{E}^{(b)}=0$ for a sequence without error, and with several layers the term is averaged over layers. The full loss is
\[
\mathcal{L}=\frac{1}{B}\sum_{b=1}^{B}\Bigg[\frac{\sum_{t}w^{(b)}_t\,\mathrm{CE}\big(q^{(b)}_t\big)}{\sum_t w^{(b)}_t}+\mathrm{CE}\big(a^{(b)}\big)+\beta_s\,\mathcal{E}^{(b)}\Bigg],
\qquad
w^{(b)}_t=e^{-\lambda(t-t^\star)_+}\,\kappa_{q^{(b)}_t},
\]
where $\mathrm{CE}(q_t)$ is the cross-entropy of the state logits at position $t$, $\mathrm{CE}(a)$ that of the answer logits at the last position, on TSO only, and $\kappa_q$ the class-balancing weight of state $q$. The coefficient $\beta_s=0.03$ decays by a cosine to $0.003$ over the training steps $s$. The baselines use the same loss without $\mathcal{E}$.

Exploring the most effective learning strategy for the \cellname is beyond the scope of this work. Especially, we think that one could explore specific methods based on the finite-state nature of the model, for example by taking inspiratio from \citet{Bengio1996}.

\textbf{Optimization.} We use AdamW \citep{Loshchilov2019} with learning rate $10^{-3}$, cosine-decayed to $10^{-4}$, weight decay $10^{-4}$ and gradient clipping at norm $1$. The \cellname is trained with the estimator of Appendix~\ref{app:constr-grad} at temperature $\nu=0.5$. Batches contain $256$ sequences of length $64$ on the algebraic tasks, and $64$ sequences of length $256$ on TSO. Every $50$ steps we measure the sequence accuracy on $512$ fresh sequences ($128$ for $M_{11}$). Training stops after $40$ consecutive checks at accuracy $1$ for the \cellname and $0.99$ for the baselines, or after $10^5$ steps, and we keep the checkpoint with the best validation accuracy.

\textbf{Evaluation.} Models are evaluated without dropout and, for the \cellname, without Gumbel noise. The predicted state at a position is the $\arg\max$ of the state logits. The \emph{sequence accuracy} is the fraction of sequences whose predicted states are correct at every position:
\[
  \mathrm{acc}_{\mathrm{seq}}=\frac{1}{B}\sum_{b=1}^{B}\prod_{t=1}^{L}\mathbb{I}\bigl[\hat q_t^{(b)}=q_t^{(b)}\bigr],
\]
where $q_t^{(b)}$ is the target state of sequence $b$ at position $t$, and $\hat q_t^{(b)}$ the prediction. A single wrong position makes the whole sequence count as wrong, so the measure does not dilute a late error over many correct positions. On TSO, the model also predicts, at the last token, which person holds the asked item. The \emph{question accuracy} is the fraction of sequences whose predicted holder is correct:
\[
  \mathrm{acc}_{\mathrm{qst}}=\frac{1}{B}\sum_{b=1}^{B}\mathbb{I}\bigl[\hat a^{(b)}=a^{(b)}\bigr],
\]
with $a^{(b)}$ the true holder and $\hat a^{(b)}$ the $\arg\max$ of the answer logits. It scores the task as a reader would: one answer per sequence, after a stream of arbitrary length. We report the sequence accuracy on the algebraic tasks and the question accuracy on TSO. The failing length $L_{\max}$ is the smallest swept length at which this accuracy falls below $1$ for the \cellname and below $0.9$ for the baselines.

\subsection{Seed-to-seed variability}
\label{app:seeds}
Table~\ref{tab:seeds} complements Table~\ref{tab:results} with the failing length of each of the five seeds. Where the baselines learn, their failing length varies across seeds.

\begin{table}[ht]
\caption{\textbf{Seed-to-seed variability of the failing length.} For each task and model, the failing length $L_{\max}$ of each of the five seeds, and the median and range $[\min;\max]$ over the seeds that learned the task; the median is the value reported in Table~\ref{tab:results}. $\times$ marks a seed that did not learn the task; other notation as in Table~\ref{tab:results}.}
\label{tab:seeds}
\centering
\input{assets/figure_3.tex}
\end{table}

\subsection{Geometry of the RNN states of the affine baselines}
\label{app:pca}

Theorem~\ref{thm:affine} is a statement about the state of a trained model, so we look at it. Figure~\ref{fig:pca} follows the recurrent state on $\mathbb{Z}_5$, a task we train separately for this figure, driven by the constant word $(+1)^T$, on which the five target states must stay apart forever. For each model, the state of the last recurrent layer is projected on its first two principal components, computed on that run alone, one point per step and colored by the target state at that step; the shaded regions are the decision regions of the classifier at the end of the backbone, read in the same plane, so a point is decoded correctly when its color matches its region. The three columns are three windows of the same run, at increasing time.

The \cellname keeps five points, one per state, and the picture does not change from one window to the next: the run visits attractors, and restoration puts the state back on one of them at every step (Proposition~\ref{prop:realizes}). The three affine baselines all lose this. Mamba$^{-}$ contracts: the five clusters collapse onto a single point, which lands on a cell boundary, so after roughly one hundred steps the readout no longer separates the states. This is the $\rho<1$ branch, where forgetting the input also forgets the state. AUSSM shows the $\rho=1$ branch: the clusters do not collapse onto one point, but they stop being separated, and the states that started apart end up interleaved within the same few cells.

\begin{figure}[p]
  \centering
  \includegraphics[width=0.82\textwidth]{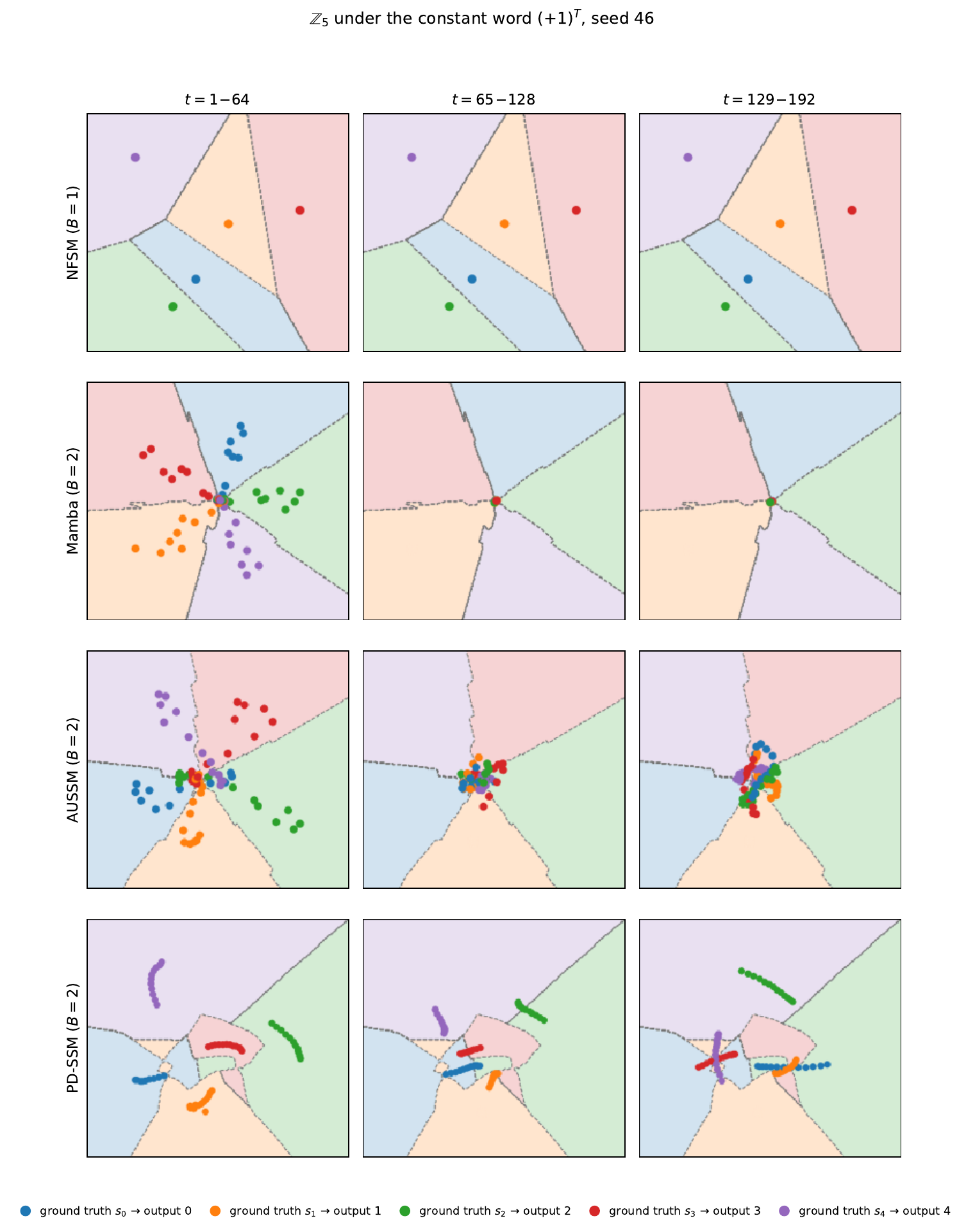}
  \caption{\textbf{The state of a trained model on $\mathbb{Z}_5$ under the constant word $(+1)^T$, seed $46$.} Each row is a model, each column a window of the same run. Points are the recurrent state projected on its first two principal components, one per step, colored by the target state; the shaded regions are the decision regions of the classifier at the end of the backbone. $B$ is the number of layers of the model.}
  \label{fig:pca}
\end{figure}

\subsection{Extraction of the transition tables}
\label{app:table-extraction}

Because every \cellname step commits to a table entry, the tables a trained model executes can be read off its logits, and compared with the target entry by entry. This is a strength of the architecture: it turns the evaluation of length independence into a verification. Testing up to a length $L$ only shows that the model is correct up to $L$, whereas tables that match the target show, by Proposition~\ref{prop:realizes}, that it is correct on every word of every length. An affine recurrence offers nothing comparable, since it has no finite table to read.

\subsubsection{Method}
\label{app:extract-method}

The procedure applies when the input alphabet $\Sig$ is finite, and uses the model at test time, without Gumbel noise.

\textbf{One layer, algebraic tasks.} For each symbol $x\in\Sig$, we pass $x$ alone through the backbone up to the \cellname layer and read the logits $\theta^{(i)}_x$ of every head $i$. The column-wise $\arg\max$ gives the table $\sigma^{(i)}_x\in T_{d_i}$, that is, the state that head $i$ moves to from each of its $d_i$ states when it reads $x$. Repeating this for all $|\Sig|$ symbols gives every transition of the layer.

\textbf{Convolutions and stacks.} With a causal convolution of width $k$ before each layer and several layers, as on TSO, the logits of a layer at step $t$ depend on a \emph{window} rather than on the current symbol: the last $k$ tokens for the first layer, and for a higher layer the last $k$ positions of the residual stream, which are functions of the tokens and of the states of the layers below (Remark~\ref{rem:factor}). Every window takes values in a finite set, so the method extends in principle: the tables of each layer can be extracted for every window that can occur, from the bottom layer upwards. A causal convolution only remembers its last $k-1$ inputs, a finite amount of memory, so the stack remains a finite-state machine and its tables still describe it completely. The windows that can occur grow quickly with the number of layers and of objects, however, and we do not carry out this enumeration on TSO. Appendix~\ref{app:tso-fsm} instead inspects what the trained stack computes.

\textbf{Matching with the target.} The tables are indexed by the attractor indices of the heads, not by the states of the task. To relate the two, we explore the joint indices that the tables reach from the initial joint index $s_0$ of the model, and assign a task state to each of them: $q_0$ to $s_0$, and, whenever $\sigma_x$ sends a joint index assigned $q$ to $s'$, $\delta_x(q)$ to $s'$. If no joint index is assigned two task states, this defines a map $c$ from the reached joint indices to the task states, which satisfies $c\circ\sigma_x=\delta_x\circ c$ by construction; when $c$ is a bijection, its inverse is the map $\chi$ of Appendix~\ref{app:constr-heads}. We check that no joint index is assigned two task states, and that for every reached joint index $s$ and every symbol $x$, the readout of $\sigma_x(s)$, after reading $x$, predicts $\delta_x(c(s))$. The model is then correct at every length: by induction on the length of the word, the joint index after reading $w$ is $\sigma_w(s_0)$, its task state is $\delta_w(q_0)$, and its readout predicts this state. The exploration covers every reachable joint index and every symbol, so the check is exhaustive rather than sampled.

\textbf{Beyond finite alphabets.} When the inputs do not come from a finite alphabet, as with continuous features, the enumeration above is no longer possible. The state space is still finite, however, so a head can execute at most $d_i^{d_i}$ distinct tables, and the input space splits into finitely many regions on each of which the table is constant. Extracting the model then amounts to describing these regions, for example as conditions on the input such as membership in an extracted interval. We leave this to future work.

\subsubsection{Algebraic tasks}
\label{app:extract-algebraic}
We apply the method to the models of Table~\ref{tab:results}, for every algebraic task and all five seeds (Table~\ref{tab:extract}). Every model passes the check: no joint index is assigned two task states, and the readout predicts the assigned state on every reached transition. By Proposition~\ref{prop:realizes}, every model is therefore correct at every length, and the failing lengths $\geq L$ of Table~\ref{tab:results} are certified beyond the tested lengths.

The exploration reaches exactly one joint index per task state, so $c$ is a bijection between the reached joint indices and the reachable states of the task, and its inverse is the map $\chi$ of Appendix~\ref{app:constr-heads}: the heads store the task state without redundancy. Every attractor index of every head is reached, and the joint indices that are never reached are exactly the combinations that the task cannot produce. On $M_{11}$, for instance, the $4$ heads of $11$ attractor indices reach $7920$ of the $11^4=14641$ joint indices, one for each element of the group. Single-head models of $S_3$, $S_4$ and $A_5$, with one head of $6$, $24$ and $60$ attractor indices, pass the same check on all five seeds.

\begin{table}[!ht]
  \centering
  \small
  \caption{\textbf{Extraction on the algebraic tasks.} For each task, the head sizes $(d_i)$, the number of symbols $|\Sig|$, the number $\prod_i d_i$ of joint indices, the number of joint indices reached from the initial joint index, and the number of reachable task states.}
  \label{tab:extract}
  \begin{tabular}{l c c c c c c}
    \toprule
    Task & Heads $(d_i)$ & $|\Sig|$ & Joint indices & Reached & Task states \\
    \midrule
    $\mathbb{Z}_2$ & $(2)$ & $2$ & $2$ & $2$ & $2$  \\
    $\mathbb{Z}_{16}$ & $(16)$ & $16$ & $16$ & $16$ & $16$  \\
    $S_3$ & $(3,2)$ & $2$ & $6$ & $6$ & $6$  \\
    $S_4$ & $(4,3,2)$ & $3$ & $24$ & $24$ & $24$  \\
    $A_5$ & $(5,12)$ & $3$ & $60$ & $60$ & $60$  \\
    $M_{11}$ & $(11,11,11,11)$ & $4$ & $14641$ & $7920$ & $7920$  \\
    $\mathrm{DFF}_5$ & $(3)$ & $3$ & $3$ & $3$ & $3$  \\
    FF & $(3)$ & $3$ & $3$ & $3$ & $3$  \\
    \bottomrule
  \end{tabular}
\end{table}

\textbf{An example on $S_3$.} The extracted machines are small enough to read. Take the model of $S_3$ with seed $42$, whose alphabet has the two transpositions $x_1=(1\,2)$ and $x_2=(2\,3)$ of $\{1,2,3\}$. Its first head has $3$ states and its second head $2$:
\[
\sigma^{(1)}_{x_1}=(2,1,3),\quad \sigma^{(1)}_{x_2}=(1,3,2),\qquad \sigma^{(2)}_{x_1}=\sigma^{(2)}_{x_2}=(2,1),
\]
where each table lists the images of the attractor indices $1,2,\dots$ in order. The first head applies each transposition to its index, and the second head flips at every symbol. The map $c$ sends the joint index $(j,k)$ to the permutation $g$ with $g(1)=j$ and sign $(-1)^{k-1}$, which is unique in $S_3$. The first head therefore tracks the image of the point $1$ and the second head the sign of the permutation, and the two together determine it. This is the factorization announced in Appendix~\ref{app:training}: each head tracks a quotient of the target, and together the heads separate its elements.

\subsubsection{Tracking Shuffled Objects}
\label{app:tso-fsm}
On TSO, we do not certify the model by extraction (Appendix~\ref{app:extract-method}). We instead show what a trained stack of two layers computes on $\mathrm{TSO}_4$. Figure~\ref{fig:tso-trajectory} follows the attractor index of every head, in both layers, as the model reads one stream. To interpret the heads, we map each attractor index of a head, over many streams, to the value it determines of a register of the task: the first name of the sentence in progress, or the holder of an item.

The resulting interpretation is clear for the second layer. Each of its heads tracks the holder of one item exactly. It is unset until the preamble names its item, takes the holder at the item token, and then changes only at the second name of a swap that involves this holder. The question leaves it unchanged, so at the last token the head of the asked item holds the answer. The heads of the first layer have no clear interpretation when taken separately: none of them matches a single register. Together, however, they must build a parser. No head of the second layer tracks the first name of the sentence in progress, yet the second layer updates the holders at the right tokens, and the readout predicts the first name at every position. The first layer must therefore pass this information to the second layer, along with the current token.

\begin{figure}[ht]
  \centering
  \includegraphics[width=\textwidth]{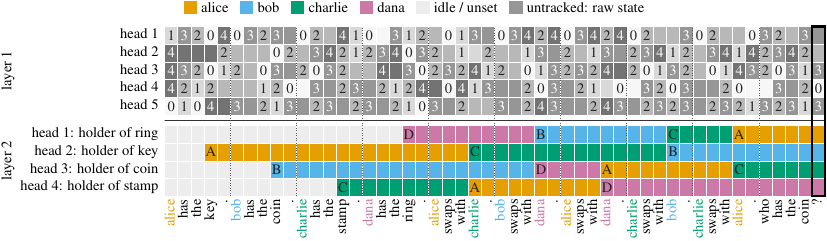}
  \caption{\textbf{Attractor indices of the heads of a two-layer \cellname reading a $\mathrm{TSO}_4$ stream.} Each row is a head, each column a token of the stream. A head whose attractor indices determine a register of the task is colored by its value, a person or unset; the other heads show their raw index in gray. The box marks the question token, where the answer is read.}
  \label{fig:tso-trajectory}
\end{figure}

\subsection{Wall-clock measurements}
\label{app:walltime}

Sections~\ref{sec:budget} and~\ref{sec:architecture} claim that the \cellname runs under a parallel scan whose merge is a gather (Proposition~\ref{prop:maximal}). We measure the cost of one layer of the \cellname and of the baselines of Section~\ref{sec:experiments}, Mamba$^{-}$, AUSSM and PD-SSM, against the sequence length $L$ and the state size $d$ (Figure~\ref{fig:walltime}).

\textbf{Setup.} Each layer is timed alone: the embedding, the causal convolution of the backbone and the MLP are the same for all models and are left out. The \cellname has one head of $d$ attractor indices, Mamba$^{-}$ and AUSSM have $d$ channels with $d_{\mathrm{state}}=16$, and PD-SSM has $N=d$. We vary $L$ from $2^6$ to $2^{17}$ at $d=16$, and $d$ from $2$ to $256$ at $L=2^{12}$. The inputs are batches of $16$ sequences of Gaussian vectors of width $256$, in float32 (with complex64 states for AUSSM and PD-SSM). For each model, we compare the parallel scan with a sequential loop over time steps that computes the same layer. We time a training step, that is, the forward pass and the gradient of the mean squared output with respect to the parameters, including the Gumbel noise and surrogate gradient of the \cellname and the surrogate of PD-SSM. Every configuration runs on one NVIDIA A100 (80\,GB). The step is compiled once, called $5$ times to warm up, and timed over $10$ synchronized calls. We report the median; compilation time is not included.

\textbf{Chunked runs.} By default, a layer processes the whole sequence at once. When a configuration does not fit in memory this way, we run it \emph{chunked}: the sequence is split into consecutive blocks of $C$ steps, the layer processes one block at a time and passes its last state on to the next block, and the backward pass recomputes the intermediate quantities of each block instead of storing them for the whole sequence. The result is unchanged up to round-off (which is, once again, another source of perturbation introduced by an implementation choice), and memory is bounded by the size of a block instead of by $L$, but the blocks are processed one after the other, which costs time. They are circled in red in Figure~\ref{fig:walltime}. We emphasize that the chunked approach can be used to fit any configuration in memory, at the cost of time, and that the parallel scan is still used within each block. It was in practice used interchangeably with the unchunked approach during the training of the models depending on the available memory.

\textbf{Memory.} The peak memory in Figure~\ref{fig:walltime} (bottom) is approximate and should be read with care. The panel shows orders of magnitude and trends, not exact footprints.

\textbf{Results.} \emph{Against the length.} The parallel scan is faster than the sequential loop at every measured length, and the gap widens with $L$. At $d=16$, the \cellname is also the fastest of the four at every length. \emph{Against the state size.} At $L=2^{12}$, the \cellname is the fastest model from $d=4$ to $d=64$. It grows faster with $d$ than the two diagonal models, since it computes $d^2$ logits at every step. It remains faster than PD-SSM, which also computes a $d\times d$ matrix at every step, at every state size we measure.

In practice, some architecture have already been optimized for speed and memory by the field, and custom kernels exist for them. We did not used them in our measurements, and we agree on the fact that this is not a fair comparison on deployment. The message of this experiment is not that some models are faster than others, but that the \cellname is compatible with a parallel scan and that it is fast and memory-efficient enough to be trained on long sequences; with scalings that are comparable to those of the baselines, \emph{it is a practical architecture.}

\begin{figure}[t]
  \centering
  \includegraphics[width=\textwidth]{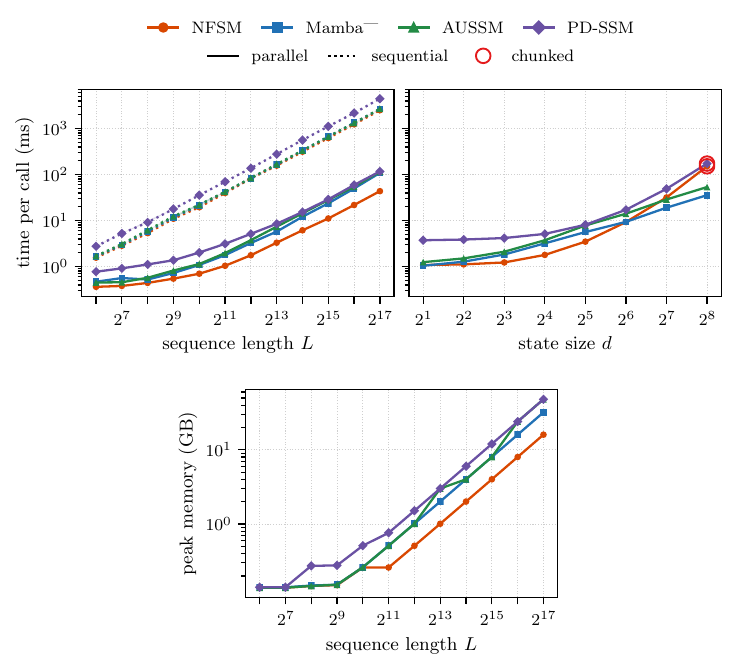}
  \caption{\textbf{Cost of one layer in a training step} (forward and backward passes, batch of $16$, one A100). \textbf{Top left:} time per call against the sequence length $L$, at $d=16$; solid lines are the parallel scan, dotted lines the sequential loop. \textbf{Top right:} time per call of the parallel scan against the state size $d$, at $L=2^{12}$. \textbf{Bottom:} peak memory of the parallel scan against $L$, at $d=16$; this is an approximate measurement (see text). Points circled in red are \emph{chunked}: the sequence is processed in consecutive blocks, which bounds memory at the cost of time.}
  \label{fig:walltime}
\end{figure}

\end{document}

%% file: math_commands.tex
\usepackage{amsmath,amsfonts,bm}

\def\eqref#1{equation~\ref{#1}}

\def\1{\bm{1}}

\def\ry{{\textnormal{y}}}

\DeclareMathAlphabet{\mathsfit}{\encodingdefault}{\sfdefault}{m}{sl}
\SetMathAlphabet{\mathsfit}{bold}{\encodingdefault}{\sfdefault}{bx}{n}

\newcommand{\R}{\mathbb{R}}

\DeclareMathOperator{\Tr}{Tr}

%% file: assets/figure_1.tex
%
%
\providecolor{cellA}{HTML}{CFE2F3}
\providecolor{cellB}{HTML}{D6ECD2}
\providecolor{cellC}{HTML}{FCE4C4}
\providecolor{cellD}{HTML}{E4DAF1}
\providecolor{cellE}{HTML}{F7D6D9}
\providecolor{codecol}{HTML}{C8283C}
\providecolor{deccol}{HTML}{1F5FA8}
\providecolor{failcol}{HTML}{E1700F}
\colorlet{inkcol}{black!82}

\begin{tikzpicture}[
  >={Stealth[length=6.5pt,width=5pt]}, font=\Large,
  declare function={xt(\t)=9.9+2.25*\t;
                    yb(\k)=(\k==0)*0.80+(\k==1)*1.65+(\k==2)*2.60+(\k==3)*3.35+(\k==4)*4.0;},
  autstate/.style={circle,draw=black!55,line width=.7pt,minimum size=11mm,inner sep=0pt,font=\large},
  atrans/.style={->,draw=black!55,line width=.8pt},
  ftrans/.style={->,draw=failcol,line width=1.8pt},
  tok/.style={rounded corners=2.5pt,draw=black!40,line width=.5pt,minimum width=8.5mm,
              minimum height=6mm,inner sep=0pt,font=\large},
  mtok/.style={rounded corners=1.5pt,draw=black!40,line width=.4pt,minimum width=4.5mm,
               minimum height=4.6mm,inner sep=0pt,font=\normalsize},
  cpoint/.style={circle,fill=codecol,draw=white,line width=.8pt,inner sep=0pt,minimum size=7.5pt},
  nominal/.style={draw=inkcol,line width=1.1pt},
  branch/.style={draw=black!60,line width=1.1pt,dash pattern=on 3pt off 2pt},
  bar/.style={line width=4.2pt,line cap=round,draw=black!50},
  fbar/.style={bar,draw=failcol},
  squeeze/.style={-{Stealth[length=4.5pt,width=3.8pt]},draw=black!65,line width=.75pt},
  rowtitle/.style={anchor=base west,inner xsep=0pt,font=\fontsize{15.8}{19}\selectfont\bfseries,text=inkcol},
  axlabel/.style={font=\large,text=black!55},
  legtitle/.style={anchor=base west,font=\Large\bfseries,text=inkcol},
  phase/.style={circle,draw=inkcol,line width=.7pt,fill=white,inner sep=0pt,minimum size=4.2mm,
                font=\normalsize\bfseries,text=inkcol},
  note/.style={font=\Large\bfseries,text=failcol,align=left,inner sep=3pt,rounded corners=3pt,
               fill=white,fill opacity=.8,text opacity=1},
]

\def\Hrect{(8.9,0) rectangle (26.9,4.8)}
\def\bands{0/0.40/1.20/cellA/1,1/1.20/2.10/cellB/2,2/2.10/3.10/cellC/3,3/3.10/3.60/cellD/4,4/3.60/4.40/cellE/5}
\def\statespace#1{
  \begin{scope}[shift={(0,#1)}]
    \fill[black!4,rounded corners=6pt] \Hrect;
    \foreach \k/\lo/\hi/\c/\q in \bands{
      \fill[\c] (9.2,\lo) rectangle (26.6,\hi);
      \node[axlabel,anchor=east,text=black!65] at (8.85,{(\lo+\hi)/2}) {$U_{q_{\q}}$};}
    \node[axlabel,anchor=south,font=\Large,inner sep=1pt,text=black!70] (hlab) at (27.52,4.64) {$\mathcal{H}$};
    \draw[black!55,line width=.7pt] (hlab.west) to[out=195,in=15] (26.15,4.66);
    \draw[decorate,decoration={brace,amplitude=5pt},black!55,line width=.8pt]
         (27.05,4.4) -- node[right=6pt,font=\Large,text=black!70]{$U$} (27.05,0.4);
  \end{scope}}
\def\cellsep#1{
  \begin{scope}[shift={(0,#1)}]
    \foreach \y in {1.20,2.10,3.10,3.60}{\draw[white,line width=1.3pt] (9.2,\y) -- (26.6,\y);}
    \draw[black!55,line width=.8pt] (9.2,0.4) rectangle (26.6,4.4);
  \end{scope}}
\def\seg#1#2#3{
  \pgfmathsetmacro\xa{xt(#1)}\pgfmathsetmacro\xb{xt(#1+1)}%
  \pgfmathsetmacro\ya{yb(#2)}\pgfmathsetmacro\yb{yb(#3)}}
\def\steps{0/0/1,1/1/0,2/0/1,3/1/2,4/2/3,5/3/4,6/4/4}
\def\visits{0/0,1/1,2/0,3/1,4/2,5/3,6/4,7/4}
\def\runtokens{0/1/cellA,1/2/cellB,2/1/cellA,3/2/cellB,4/3/cellC,5/4/cellD,6/5/cellE,7/5/cellE}
\def\nominal#1{%
  \begin{scope}[shift={(0,#1)}]
    \foreach \t/\k/\n in \steps{\seg\t\k\n
      \draw[nominal] (\xa,\ya) to[out=0,in=180] (\xb,\yb);}
    \draw[nominal,dash pattern=on 2pt off 2pt] ({xt(7)},{yb(4)}) -- (26.6,{yb(4)});
  \end{scope}}
\def\phasemk#1{\tikz[baseline=(pm.base)]\node[phase](pm){#1};}
\def\gstep#1#2#3#4#5#6{%
  \pgfmathsetmacro\gxb{#1+1.25}\pgfmathsetmacro\gxc{#1+2.25}%
  \pgfmathsetmacro\gau{#2+#3}\pgfmathsetmacro\gal{#2-#3}%
  \pgfmathsetmacro\gbu{#4+#5}\pgfmathsetmacro\gbl{#4-#5}%
  \fill[#6] (#1,\gau) to[out=0,in=180] (\gxb,\gbu) -- (\gxc,\gbu) -- (\gxc,\gbl) -- (\gxb,\gbl)
            to[out=180,in=0] (#1,\gal) -- cycle;}
\def\gline#1#2#3#4{%
  \pgfmathsetmacro\gxb{#1+1.25}\pgfmathsetmacro\gxc{#1+2.25}%
  \draw[#4] (#1,#2) to[out=0,in=180] (\gxb,#3) -- (\gxc,#3);}
\def\gbar#1#2#3#4{%
  \pgfmathsetmacro\gxb{#1+1.25}\pgfmathsetmacro\gu{#2+#3}\pgfmathsetmacro\gl{#2-#3}%
  \draw[bar,line width=5pt,line cap=butt,draw=#4] (\gxb,\gl) -- (\gxb,\gu);}
\def\stepgrid#1{%
  \begin{scope}[shift={(0,#1)}]
    \foreach \t in {0,...,7}{\draw[white,line width=0.8pt] ({xt(\t)},0.4) -- ({xt(\t)},4.4);}
  \end{scope}}
\def\checkmk{\draw[line width=2.2pt,line cap=round,line join=round] (-0.25,0) -- (-0.05,-0.22) -- (0.3,0.25);}
\def\crossmk{\draw[line width=2.2pt,line cap=round] (-0.2,-0.2) -- (0.2,0.2) (-0.2,0.2) -- (0.2,-0.2);}
\def\readset#1#2#3{%
  \pgfmathsetmacro\rsw{0.5*#2+0.14}%
  \node[draw=black!50,line width=1.1pt,rounded corners=3pt,minimum width=\rsw cm,minimum height=6.6mm] at (#1,0.35) {};
  \foreach \q/\c [count=\i from 0] in {#3}{%
    \node[mtok,fill=\c] at ({#1+0.5*(\i-(#2-1)/2)},0.35) {$q_{\q}$};}}

\useasboundingbox (-0.041,-1.1925) rectangle (28.033,14.885);

\node[anchor=east,font=\Large,text=black!65] at (8.85,14.25) {run on $w$};
\foreach \t/\q/\c in \runtokens{\node[tok,fill=\c] (r\t) at ({xt(\t)},14.25) {$q_{\q}$};}
\foreach \t/\x [evaluate=\t as \s using int(\t+1)] in {0/b,1/b,2/a,3/a,4/a,5/a,6/b}{
  \draw[->,black!55,line width=.8pt] (r\t) -- node[above=1pt,font=\Large,text=inkcol]{$\x$} (r\s);}
\node[font=\Large,text=black!55] at ({xt(7)+1.1},14.25) {$\cdots$};

\def\rowA{7.8}
\node[rowtitle] at (8.9,12.95) {The dynamics execute the transition and absorb the perturbations};
\statespace{\rowA}
\cellsep{\rowA}
\stepgrid{\rowA}   
\begin{scope}[shift={(0,\rowA)}]
  \foreach \t/\k/\n/\o in {0/0/1/.07,1/1/0/-.07,2/0/1/.07,3/1/2/-.07,4/2/3/-.04,5/3/4/.07,6/4/4/-.07}{
    \pgfmathsetmacro\xa{xt(\t)}\pgfmathsetmacro\xc{xt(\t+1)}\pgfmathsetmacro\xb{\xc-1.0}
    \pgfmathsetmacro\ya{yb(\k)}\pgfmathsetmacro\yd{yb(\n)}\pgfmathsetmacro\yc{\yd+\o}
    \pgfmathsetmacro\yu{\yc+0.18}\pgfmathsetmacro\yl{\yc-0.18}
    \pgfmathsetmacro\xm{\xb+0.55}\pgfmathsetmacro\xn{\xc-0.4}
    \fill[black!30,opacity=.9] (\xa,\ya) to[out=0,in=180] (\xb,\yu) -- (\xb,\yl) to[out=180,in=0] (\xa,\ya) -- cycle;
    \shade[left color=black!40,right color=codecol!45,opacity=.9]
          (\xb,\yu) .. controls (\xm,\yu) and (\xn,\yd) .. (\xc,\yd)
          .. controls (\xn,\yd) and (\xm,\yl) .. (\xb,\yl) -- cycle;
    \draw[nominal] (\xa,\ya) to[out=0,in=180] (\xb,\yc);
    \draw[inkcol,line width=.6pt] (\xb,\yc) .. controls (\xm,\yc) and (\xn,\yd) .. (\xc,\yd);
    \draw[bar,line width=5pt,line cap=butt,draw=black!55] (\xb,\yl) -- (\xb,\yu);
  }
  \draw[nominal,dash pattern=on 2pt off 2pt] ({xt(7)},{yb(4)}) -- (26.6,{yb(4)});
  \foreach \t/\k in \visits{\node[cpoint] at ({xt(\t)},{yb(\k)}) {};}
  \node[font=\LARGE,text=inkcol,anchor=north west,inner sep=1pt] at (17.35,1.95) {$h_t$};
  \node[font=\Large,text=black!70,anchor=north] (etalab) at (20.15,3.00) {$\Ball$};
  \draw[black!55,line width=.7pt] (etalab.north) -- (20.15,3.13);
  \node[anchor=west,font=\large,text=black!70] (ph1) at (9.55,2.62)
       {\phasemk{1}\,transition};
  \draw[black!50,line width=.6pt] (ph1.south) to[out=270,in=100] (10.45,1.30);
  \node[anchor=west,font=\large,text=black!70] (ph2) at (12.35,2.62)
       {\phasemk{2}\,restoration};
  \draw[black!50,line width=.6pt] (ph2.south west) to[out=250,in=60] (11.70,1.90);
\end{scope}
\node[anchor=east,font=\Large,text=deccol] at (8.85,7.25) {$\dec(h_t)$};
\foreach \t/\q/\c in \runtokens{\node[tok,fill=\c] at ({xt(\t)},7.3) {$q_{\q}$};}
\begin{scope}[shift={(27.35,7.3)},inkcol]\checkmk\end{scope}

\def\rowB{0.85}
\node[rowtitle] at (8.9,6.0) {The dynamics execute the transition but do not absorb the perturbations};
\statespace{\rowB}
\cellsep{\rowB}
\stepgrid{\rowB}   
\begin{scope}[shift={(0,\rowB)}]
  \begin{scope}
  \clip[rounded corners=6pt] \Hrect;
  \begin{scope}[transparency group,opacity=.55]
    \gstep{9.90}{0.80}{0.00}{1.65}{0.12}{black!40}
    \gstep{12.15}{1.65}{0.12}{0.80}{0.24}{black!40}
    \gstep{14.40}{0.80}{0.24}{1.65}{0.36}{black!40}
    \gstep{16.65}{1.65}{0.36}{2.60}{0.48}{black!40}
    \gstep{18.90}{2.60}{0.48}{3.35}{0.60}{failcol!80}
    \gstep{21.15}{3.350}{0.250}{4.00}{0.370}{failcol!80}
    \gstep{23.40}{4.00}{0.370}{4.00}{0.490}{failcol!80}
    \gstep{21.15}{2.925}{0.175}{3.35}{0.295}{failcol!80}
    \gstep{23.40}{3.35}{0.295}{3.35}{0.415}{failcol!80}
    \gstep{21.15}{3.775}{0.175}{0.80}{0.295}{failcol!80}
    \draw[failcol!80,line width=0.35cm] (21.15,3.775) to[out=0,in=180] (22.40,0.80);   
    \gstep{23.40}{0.80}{0.295}{1.65}{0.415}{failcol!80}
    \fill[failcol!80] (25.65,3.51) rectangle (26.6,4.49);
    \fill[failcol!80] (25.65,2.935) rectangle (26.6,3.765);
    \fill[failcol!80] (25.65,1.235) rectangle (26.6,2.065);
  \end{scope}
  \end{scope}
  \foreach \y in {1.20,2.10,3.10,3.60}{\draw[white,line width=1.3pt,opacity=.9] (9.2,\y) -- (26.6,\y);}
  \gline{9.90}{0.80}{1.65}{nominal}
  \gline{12.15}{1.65}{0.80}{nominal}
  \gline{14.40}{0.80}{1.65}{nominal}
  \gline{16.65}{1.65}{2.60}{nominal}
  \gline{18.90}{2.60}{3.35}{nominal}
  \gline{21.15}{3.35}{4.00}{nominal}
  \gline{23.40}{4.00}{4.00}{nominal}
  \draw[nominal,dash pattern=on 2pt off 2pt] (25.65,4.0) -- (26.6,4.0);
  \gline{21.15}{2.925}{3.35}{branch}
  \gline{23.40}{3.35}{3.35}{branch}
  \gline{21.15}{3.775}{0.80}{branch}
  \gline{23.40}{0.80}{1.65}{branch}
  \draw[branch] (25.65,3.35) -- (26.6,3.35) (25.65,1.65) -- (26.6,1.65);
  \gbar{9.90}{1.65}{0.12}{black!55}
  \gbar{12.15}{0.80}{0.24}{black!55}
  \gbar{14.40}{1.65}{0.36}{black!55}
  \gbar{16.65}{2.60}{0.48}{black!55}
  \gbar{18.90}{3.35}{0.60}{black!55}
  \draw[white,line width=1.3pt] (19.97,3.10) -- (20.33,3.10) (19.97,3.60) -- (20.33,3.60);
  \gbar{21.15}{4.00}{0.370}{black!55}  \gbar{21.15}{3.35}{0.295}{black!55}  \gbar{21.15}{0.80}{0.295}{black!55}
  \gbar{23.40}{4.00}{0.490}{black!55}  \gbar{23.40}{3.35}{0.415}{black!55}  \gbar{23.40}{1.65}{0.415}{black!55}
  \foreach \t/\k in \visits{\node[cpoint] at ({xt(\t)},{yb(\k)}) {};}
  \foreach \p in {(23.40,3.35),(23.40,0.80),(25.65,3.35),(25.65,1.65)}{\node[cpoint] at \p {};}
\end{scope}
\node[anchor=east,font=\Large,text=deccol] at (8.85,0.30) {$\dec(h_t)$};
\foreach \t/\q/\c in {0/1/cellA,1/2/cellB,2/1/cellA,3/2/cellB,4/3/cellC}{
  \node[tok,fill=\c] at ({xt(\t)},0.35) {$q_{\q}$};}
\readset{21.15}{3}{3/cellC,4/cellD,5/cellE}
\readset{23.40}{4}{1/cellA,3/cellC,4/cellD,5/cellE}
\readset{25.65}{4}{2/cellB,3/cellC,4/cellD,5/cellE}
\begin{scope}[shift={(27.35,0.35)},failcol]\crossmk\end{scope}

\foreach \t in {0,...,7}{\node[axlabel] at ({xt(\t)},-0.42) {$\t$};}
\draw[->,black!50,line width=.8pt] ({xt(0)-0.4},-0.9) -- ({xt(7)+1.2},-0.9)
     node[right,axlabel,text=black!65]{step $t$};

\node[font=\LARGE\bfseries,text=inkcol,anchor=base] at (3.75,10.3) {Semiautomaton $(Q,\Sig,\delta)$};
\begin{scope}[shift={(3.75,6.65)},scale=1.2,every node/.append style={transform shape}]
  \node[autstate,fill=cellA] (q1) at ( 90:2.1) {$q_1$};
  \node[autstate,fill=cellB] (q2) at ( 18:2.1) {$q_2$};
  \node[autstate,fill=cellC] (q3) at (-54:2.1) {$q_3$};
  \node[autstate,fill=cellD] (q4) at (-126:2.1) {$q_4$};
  \node[autstate,fill=cellE] (q5) at (162:2.1) {$q_5$};
  \path (q1) edge[atrans,bend left=22] node[above right=1pt and 0pt,font=\large]{$a,b$} (q2)
        (q2) edge[atrans,bend left=22] node[below left=1pt and 0pt,font=\large]{$b$} (q1)
        (q2) edge[atrans] node[right,font=\large]{$a$} (q3);
  \path (q3) edge[atrans] node[below,font=\large]{$a$} (q4)
        (q4) edge[atrans] node[left,font=\large]{$a$} (q5)
        (q5) edge[atrans] node[above left=-1pt,font=\large]{$a$} (q1);
  \path (q3) edge[atrans,loop right] node[right,font=\large]{$b$} ()
        (q4) edge[atrans,loop left] node[left,font=\large]{$b$} ()
        (q5) edge[atrans,loop left] node[left,font=\large]{$b$} ();
\end{scope}
\end{tikzpicture}

%% file: assets/figure_2.tex
\providecommand{\RTtot}{22.4667}
\pgfmathsetmacro{\rtS}{\textwidth/28.4527/\RTtot}

\begin{tikzpicture}[font=\large,scale=\rtS,
                    every node/.append style={scale=\rtS}]

\providecommand{\RTlab}{0.05}
\providecommand{\RTfirst}{3.60}
\providecommand{\RTpitch}{1.73}   
\providecommand{\RTl}{0.00}
\providecommand{\RTr}{16.60}

\providecommand{\RTRl}{16.90}
\providecommand{\RTRr}{22.4667}
\providecommand{\RTpitchR}{1.855} 
\providecommand{\RTfirstR}{17.83}

\providecommand{\RTcell}[5]{%
  \pgfmathsetmacro{\rtcx}{\RTfirst+#1*\RTpitch}%
  \node[font=\Large]               at (\rtcx,{#2})      {#3};
  \node[font=\small,text=black!68] at (\rtcx,{#2-0.40}) {#5};}

\providecommand{\RTcellO}[4]{%
  \pgfmathsetmacro{\rtcx}{\RTfirst+#1*\RTpitch}%
  \node[font=\Large,text=codecol] at (\rtcx,{#2})      {#3};
  \node[font=\small,text=codecol] at (\rtcx,{#2-0.40}) {#4};}

\providecommand{\RTcellR}[5]{%
  \pgfmathsetmacro{\rtcx}{\RTfirstR+#1*\RTpitchR}%
  \node[font=\Large]               at (\rtcx,{#2})      {#3};
  \node[font=\small,text=black!68] at (\rtcx,{#2-0.40}) {#5};}

\providecommand{\RTcellOR}[4]{%
  \pgfmathsetmacro{\rtcx}{\RTfirstR+#1*\RTpitchR}%
  \node[font=\Large,text=codecol] at (\rtcx,{#2})      {#3};
  \node[font=\small,text=codecol] at (\rtcx,{#2-0.40}) {#4};}

\providecommand{\RTa}{4.22}
\providecommand{\RTb}{3.06}
\providecommand{\RTc}{1.90}
\providecommand{\RTd}{0.74}

\fill[codecol!7] (\RTl,3.46)  rectangle (\RTr,4.62);
\fill[codecol!7] (\RTRl,3.46) rectangle (\RTRr,4.62);


\node[anchor=west,font=\large] at (\RTlab,6.05)
     {Failing length $L_{\max}$
      \textcolor{black!60}{\normalsize(NFSM at $100\%$, baselines at $90\%$;
      trained at $L{=}64$)}};
\draw[black!75,line width=.9pt] (\RTl,5.78) -- (\RTr,5.78);

\foreach \ci/\task/\idx in {%
    0/{$\mathbb{Z}_{2}$}/2,
    1/{$\mathbb{Z}_{16}$}/16,
    2/{$S_{3}$}/6,
    3/{$S_{4}$}/24,
    4/{$A_{5}$}/60,
    5/{$M_{11}$}/7920,
    6/{$\mathrm{DFF}_{5}$}/3,
    7/{$\mathrm{FF}$}/3}{%
  \pgfmathsetmacro{\rthx}{\RTfirst+\ci*\RTpitch}%
  \node[font=\large]               at (\rthx,5.30) {\task};
  \node[font=\small,text=black!68] at (\rthx,4.86) {[\idx]};}

\draw[black!75,line width=.6pt] (\RTl,4.62) -- (\RTr,4.62);

\node[anchor=west,font=\large,text=codecol] at (\RTlab,\RTa)
     {NFSM};
\node[anchor=west,font=\small,text=codecol] at (\RTlab,{\RTa-0.40})
     {1 layer, 2 on TSO};

\node[anchor=west,font=\large] at (\RTlab,\RTb) {Mamba$^{\text{---}}$};
\node[anchor=west,font=\large] at (\RTlab,\RTc) {AUSSM};
\node[anchor=west,font=\large] at (\RTlab,\RTd) {PD-SSM};

\foreach \ry in {\RTb,\RTc,\RTd}{%
  \node[anchor=west,font=\small,text=black!68] at (\RTlab,{\ry-0.40})
       {$4\times256$};}

\RTcellO{0}{\RTa}{$\geq\!1\mathrm{M}^{\dagger}$}{5/5}
\RTcellO{1}{\RTa}{$\geq\!1\mathrm{M}^{\dagger}$}{5/5}
\RTcellO{2}{\RTa}{$\geq\!1\mathrm{M}^{\dagger}$}{5/5}
\RTcellO{3}{\RTa}{$\geq\!1\mathrm{M}^{\dagger}$}{5/5}
\RTcellO{4}{\RTa}{$\geq\!1\mathrm{M}^{\dagger}$}{5/5}
\RTcellO{5}{\RTa}{$\geq\!32\mathrm{k}^{\dagger}$}{5/5}
\RTcellO{6}{\RTa}{$\geq\!1\mathrm{M}^{\dagger}$}{5/5}
\RTcellO{7}{\RTa}{$\geq\!1\mathrm{M}^{\dagger}$}{5/5}

\RTcell{0}{\RTb}{$4352$}{[$326$;$\geq\!512\mathrm{k}$]}{5/5}
\RTcell{1}{\RTb}{$85$}{[$78$;$91$]}{4/5}
\RTcell{2}{\RTb}{$186$}{[$155$;$200$]}{5/5}
\RTcell{3}{\RTb}{$105$}{[$92$;$113$]}{4/5}
\RTcell{4}{\RTb}{$\times$}{}{0/5}
\RTcell{5}{\RTb}{$\times$}{}{0/5}
\RTcell{6}{\RTb}{$\geq\!1\mathrm{M}$}{[$\geq\!512\mathrm{k}$;$\geq\!1\mathrm{M}$]}{5/5}
\RTcell{7}{\RTb}{$178$}{[$143$;$1738$]}{5/5}

\RTcell{0}{\RTc}{$144$}{[$101$;$274$]}{5/5}
\RTcell{1}{\RTc}{$135$}{[$108$;$145$]}{5/5}
\RTcell{2}{\RTc}{$128$}{[$100$;$216$]}{5/5}
\RTcell{3}{\RTc}{$89$}{[$83$;$103$]}{5/5}
\RTcell{4}{\RTc}{$82$}{[$81$;$86$]}{5/5}
\RTcell{5}{\RTc}{$\times$}{}{0/5}
\RTcell{6}{\RTc}{$96$}{[$87$;$120$]}{5/5}
\RTcell{7}{\RTc}{$119$}{[$70$;$148$]}{5/5}

\RTcell{0}{\RTd}{$110$}{[$102$;$119$]}{5/5}
\RTcell{1}{\RTd}{$98$}{[$94$;$117$]}{5/5}
\RTcell{2}{\RTd}{$133$}{[$89$;$149$]}{5/5}
\RTcell{3}{\RTd}{$131$}{[$101$;$171$]}{5/5}
\RTcell{4}{\RTd}{$187$}{[$141$;$288$]}{5/5}
\RTcell{5}{\RTd}{$\times$}{}{0/5}
\RTcell{6}{\RTd}{$\geq\!256\mathrm{k}$}{[$\geq\!256\mathrm{k}$;$\geq\!256\mathrm{k}$]}{5/5}
\RTcell{7}{\RTd}{$133$}{[$105$;$192$]}{5/5}


\node[anchor=west,font=\normalsize,text=black!60] at (\RTRl,6.05)
     {trained at $L{=}256$};
\draw[black!75,line width=.9pt] (\RTRl,5.78) -- (\RTRr,5.78);

\foreach \ci/\task/\idx in {%
    0/{$\mathrm{TSO}_{N=3}$}/44,
    1/{$\mathrm{TSO}_{N=4}$}/202,
    2/{$\mathrm{TSO}_{N=5}$}/1132}{%
  \pgfmathsetmacro{\rthx}{\RTfirstR+\ci*\RTpitchR}%
  \node[font=\large]               at (\rthx,5.30) {\task};
  \node[font=\small,text=black!68] at (\rthx,4.86) {[\idx]};}

\draw[black!75,line width=.6pt] (\RTRl,4.62) -- (\RTRr,4.62);

\RTcellOR{0}{\RTa}{$\geq\!1\mathrm{M}$}{5/5}
\RTcellOR{1}{\RTa}{$\geq\!1\mathrm{M}$}{5/5}
\RTcellOR{2}{\RTa}{$\geq\!128\mathrm{k}$}{5/5}

\RTcellR{0}{\RTb}{$1501$}{[$1420$;$1583$]}{2/5}
\RTcellR{1}{\RTb}{$\times$}{}{0/5}
\RTcellR{2}{\RTb}{$\times$}{}{0/5}

\RTcellR{0}{\RTc}{$\times$}{}{0/5}
\RTcellR{1}{\RTc}{$\times$}{}{0/5}
\RTcellR{2}{\RTc}{$\times$}{}{0/5}

\RTcellR{0}{\RTd}{$363$}{[$286$;$575$]}{5/5}
\RTcellR{1}{\RTd}{$1922$}{[$1922$;$1922$]}{1/5}
\RTcellR{2}{\RTd}{$321$}{[$321$;$321$]}{1/5}

\draw[black!75,line width=.9pt] (\RTRl,-0.10) -- (\RTRr,-0.10);


\draw[black!75,line width=.9pt] (\RTl,-0.10) -- (\RTr,-0.10);
\end{tikzpicture}

%% file: assets/figure_3.tex
%
%
%

\definecolor{mcnfsm}{HTML}{D94801}
\providecommand{\tbd}{\textcolor{black!40}{--}}
\small
\setlength{\tabcolsep}{5pt}
\begin{tabular}{ll ccccc cc}
\toprule
& & \multicolumn{5}{c}{Failing length $L_{\max}$ per seed} & & \\
\cmidrule(lr){3-7}
Task & Model & $42$ & $43$ & $44$ & $45$ & $46$ & Median & Range \\
\midrule
\rowcolor{mcnfsm!10}$\mathbb{Z}_{2}$      & \textcolor{mcnfsm}{NFSM}     & $\geq 1M$ & $\geq 1M$ & $\geq 1M$ & $\geq 1M$ & $\geq 1M$ & $\geq 1M$ & [$\geq 1M$;\,$\geq 1M$] \\ 
\rowcolor{black!2.5}                      & Mamba$^{\text{---}}$         & $4352$ & $5176$ & $326$ & $655$ & $\geq 512k$ & $4352$ & [$326$;\,$\geq 512k$] \\ 
                      & AUSSM                        & $274$ & $115$ & $144$ & $101$ & $151$ & $144$ & [$101$;\,$274$] \\ 
\rowcolor{black!2.5}                      & PD-SSM                       & $102$ & $117$ & $105$ & $119$ & $110$ & $110$ & [$102$;\,$119$] \\ 
\addlinespace
\rowcolor{mcnfsm!10}$\mathbb{Z}_{16}$     & \textcolor{mcnfsm}{NFSM}     & $\geq 1M$ & $\geq 1M$ & $\geq 1M$ & $\geq 1M$ & $\geq 1M$ & $\geq 1M$ & [$\geq 1M$;\,$\geq 1M$] \\ 
\rowcolor{black!2.5}                      & Mamba$^{\text{---}}$         & $78$ & $91$ & $83$ & $87$ & $\times$ & $85$ & [$78$;\,$91$] \\ 
                      & AUSSM                        & $110$ & $108$ & $145$ & $135$ & $142$ & $135$ & [$108$;\,$145$] \\ 
\rowcolor{black!2.5}                      & PD-SSM                       & $98$ & $117$ & $94$ & $98$ & $97$ & $98$ & [$94$;\,$117$] \\ 
\addlinespace
\rowcolor{mcnfsm!10}$S_{3}$               & \textcolor{mcnfsm}{NFSM}     & $\geq 1M$ & $\geq 1M$ & $\geq 1M$ & $\geq 1M$ & $\geq 1M$ & $\geq 1M$ & [$\geq 1M$;\,$\geq 1M$] \\ 
\rowcolor{black!2.5}                      & Mamba$^{\text{---}}$         & $155$ & $163$ & $186$ & $197$ & $200$ & $186$ & [$155$;\,$200$] \\ 
                      & AUSSM                        & $102$ & $168$ & $128$ & $216$ & $100$ & $128$ & [$100$;\,$216$] \\ 
\rowcolor{black!2.5}                      & PD-SSM                       & $130$ & $89$ & $133$ & $139$ & $149$ & $133$ & [$89$;\,$149$] \\ 
\addlinespace
\rowcolor{mcnfsm!10}$S_{4}$               & \textcolor{mcnfsm}{NFSM}     & $\geq 1M$ & $\geq 1M$ & $\geq 1M$ & $\geq 1M$ & $\geq 1M$ & $\geq 1M$ & [$\geq 1M$;\,$\geq 1M$] \\ 
\rowcolor{black!2.5}                      & Mamba$^{\text{---}}$         & $103$ & $\times$ & $113$ & $92$ & $107$ & $105$ & [$92$;\,$113$] \\ 
                      & AUSSM                        & $103$ & $83$ & $83$ & $89$ & $98$ & $89$ & [$83$;\,$103$] \\ 
\rowcolor{black!2.5}                      & PD-SSM                       & $113$ & $144$ & $101$ & $131$ & $171$ & $131$ & [$101$;\,$171$] \\ 
\addlinespace
\rowcolor{mcnfsm!10}$A_{5}$               & \textcolor{mcnfsm}{NFSM}     & $\geq 1M$ & $\geq 1M$ & $\geq 1M$ & $\geq 1M$ & $\geq 1M$ & $\geq 1M$ & [$\geq 1M$;\,$\geq 1M$] \\ 
\rowcolor{black!2.5}                      & Mamba$^{\text{---}}$         & $\times$ & $\times$ & $\times$ & $\times$ & $\times$ & $\times$ &  \\ 
                      & AUSSM                        & $81$ & $82$ & $81$ & $82$ & $86$ & $82$ & [$81$;\,$86$] \\ 
\rowcolor{black!2.5}                      & PD-SSM                       & $187$ & $179$ & $288$ & $189$ & $141$ & $187$ & [$141$;\,$288$] \\ 
\addlinespace
\rowcolor{mcnfsm!10}$M_{11}$              & \textcolor{mcnfsm}{NFSM}     & $\geq 32k$ & $\geq 32k$ & $\geq 32k$ & $\geq 32k$ & $\geq 32k$ & $\geq 32k$ & [$\geq 32k$;\,$\geq 32k$] \\ 
\rowcolor{black!2.5}                      & Mamba$^{\text{---}}$         & $\times$ & $\times$ & $\times$ & $\times$ & $\times$ & $\times$ &  \\ 
                      & AUSSM                        & $\times$ & $\times$ & $\times$ & $\times$ & $\times$ & $\times$ &  \\ 
\rowcolor{black!2.5}                      & PD-SSM                       & $\times$ & $\times$ & $\times$ & $\times$ & $\times$ & $\times$ &  \\ 
\addlinespace
\rowcolor{mcnfsm!10}$\mathrm{DFF}_{5}$    & \textcolor{mcnfsm}{NFSM}     & $\geq 1M$ & $\geq 1M$ & $\geq 1M$ & $\geq 1M$ & $\geq 1M$ & $\geq 1M$ & [$\geq 1M$;\,$\geq 1M$] \\ 
\rowcolor{black!2.5}                      & Mamba$^{\text{---}}$         & $\geq 1M$ & $\geq 1M$ & $\geq 1M$ & $\geq 512k$ & $\geq 512k$ & $\geq 1M$ & [$\geq 512k$;\,$\geq 1M$] \\ 
                      & AUSSM                        & $102$ & $120$ & $93$ & $96$ & $87$ & $96$ & [$87$;\,$120$] \\ 
\rowcolor{black!2.5}                      & PD-SSM                       & $\geq 256k$ & $\geq 256k$ & $\geq 256k$ & $\geq 256k$ & $\geq 256k$ & $\geq 256k$ & [$\geq 256k$;\,$\geq 256k$] \\ 
\addlinespace
\rowcolor{mcnfsm!10}FF                    & \textcolor{mcnfsm}{NFSM}     & $\geq 1M$ & $\geq 1M$ & $\geq 1M$ & $\geq 1M$ & $\geq 1M$ & $\geq 1M$ & [$\geq 1M$;\,$\geq 1M$] \\ 
\rowcolor{black!2.5}                      & Mamba$^{\text{---}}$         & $178$ & $173$ & $143$ & $181$ & $1738$ & $178$ & [$143$;\,$1738$] \\ 
                      & AUSSM                        & $70$ & $119$ & $148$ & $116$ & $146$ & $119$ & [$70$;\,$148$] \\ 
\rowcolor{black!2.5}                      & PD-SSM                       & $192$ & $105$ & $133$ & $115$ & $169$ & $133$ & [$105$;\,$192$] \\ 
\midrule
\rowcolor{mcnfsm!10}$\mathrm{TSO}_{3}$    & \textcolor{mcnfsm}{NFSM}     & $\geq 1M$ & $\geq 1M$ & $\geq 1M$ & $\geq 1M$ & $\geq 1M$ & $\geq 1M$ & [$\geq 1M$;\,$\geq 1M$] \\ 
\rowcolor{black!2.5}                      & Mamba$^{\text{---}}$         & $1583$ & $\times$ & $\times$ & $1420$ & $\times$ & $1501$ & [$1420$;\,$1583$] \\ 
                      & AUSSM                        & $\times$ & $\times$ & $\times$ & $\times$ & $\times$ & $\times$ &  \\ 
\rowcolor{black!2.5}                      & PD-SSM                       & $358$ & $575$ & $363$ & $286$ & $503$ & $363$ & [$286$;\,$575$] \\ 
\addlinespace
\rowcolor{mcnfsm!10}$\mathrm{TSO}_{4}$    & \textcolor{mcnfsm}{NFSM}     & $\geq 1M$ & $\geq 1M$ & $\geq 1M$ & $\geq 1M$ & $\geq 1M$ & $\geq 1M$ & [$\geq 1M$;\,$\geq 1M$] \\ 
\rowcolor{black!2.5}                      & Mamba$^{\text{---}}$         & $\times$ & $\times$ & $\times$ & $\times$ & $\times$ & $\times$ &  \\ 
                      & AUSSM                        & $\times$ & $\times$ & $\times$ & $\times$ & $\times$ & $\times$ &  \\ 
\rowcolor{black!2.5}                      & PD-SSM                       & $\times$ & $1922$ & $\times$ & $\times$ & $\times$ & $1922$ & [$1922$;\,$1922$] \\ 
\addlinespace
\rowcolor{mcnfsm!10}$\mathrm{TSO}_{5}$    & \textcolor{mcnfsm}{NFSM}     & $\geq 128k$ & $\geq 128k$ & $\geq 128k$ & $\geq 128k$ & $\geq 128k$ & $\geq 128k$ & [$\geq 128k$;\,$\geq 128k$] \\ 
\rowcolor{black!2.5}                      & Mamba$^{\text{---}}$         & $\times$ & $\times$ & $\times$ & $\times$ & $\times$ & $\times$ &  \\ 
                      & AUSSM                        & $\times$ & $\times$ & $\times$ & $\times$ & $\times$ & $\times$ &  \\ 
\rowcolor{black!2.5}                      & PD-SSM                       & $\times$ & $\times$ & $\times$ & $321$ & $\times$ & $321$ & [$321$;\,$321$] \\ 
\bottomrule
\end{tabular}